\documentclass{article}

\usepackage{arxiv}

\usepackage[utf8]{inputenc}
\usepackage[T1]{fontenc}
\usepackage{url}
\usepackage{booktabs}
\usepackage{amsfonts}
\usepackage{amsmath}
\usepackage{amssymb}
\usepackage{amsthm}
\usepackage{mathtools}
\usepackage{nicefrac}
\usepackage{microtype}
\usepackage{graphicx}
\usepackage{natbib}
\usepackage{algorithm}
\usepackage{algpseudocode}

\usepackage{array}
\usepackage{xcolor}
\usepackage{enumitem}
\usepackage{tikz}
\usepackage{hyperref}
\usepackage{doi}
\usepackage[nameinlink,noabbrev]{cleveref}

\title{Acceleration and its Limits \\
in Gaussian Variational Inference}

\author{%
Jaesung R. Park\textsuperscript{1}\thanks{Equal contribution.} \qquad
Uijeong Jang\textsuperscript{1}\footnotemark[1] \qquad
Yifan Chen\textsuperscript{1}\\
\textsuperscript{1}Department of Mathematics\\
University of California, Los Angeles\\
\texttt{\{ryanpark7,uijeongjang,yifanchen\}@math.ucla.edu}
}

\date{}

\providecommand{\shorttitle}{}
\renewcommand{\shorttitle}{Acceleration and its Limits in Gaussian Variational Inference}

\hypersetup{
    pdftitle={Acceleration and its Limits in Gaussian Variational Inference},
    pdfsubject={Gaussian variational inference and accelerated optimization},
    pdfauthor={Jaesung R. Park, Uijeong Jang, Yifan Chen},
    pdfkeywords={Gaussian variational inference, acceleration, query complexity, variance reduction, log-concavity}
}

\newtheorem{theorem}{Theorem}[section]
\newtheorem{proposition}[theorem]{Proposition}
\newtheorem{lemma}[theorem]{Lemma}
\newtheorem{corollary}[theorem]{Corollary}
\newtheorem{assumption}[theorem]{Assumption}
\newtheorem{remark}[theorem]{Remark}

\definecolor{darkred}{rgb}{.6,0,0}
\definecolor{darkblue}{rgb}{0,0,.7}
\definecolor{darkgreen}{rgb}{0,.7,0}
\definecolor{darkbrown}{rgb}{0.8,0.4,0.4}

\newcommand{\prox}{\operatorname{prox}}

\newcommand{\proj}{\Pi}

\begin{document}

\maketitle

\begin{abstract}
We study whether the classical momentum-based acceleration that improves the condition-number dependence of gradient descent from $\kappa$ to $\sqrt{\kappa}$ for strongly convex, smooth objectives extends to Gaussian variational inference (GVI) for strongly log-concave, log-smooth target distributions. With exact Gaussian expectations, a convex composite formulation of GVI yields the classical accelerated $\sqrt{\kappa}$ convergence rate. However, when only pointwise derivatives are available, we construct a family of strongly log-concave targets for which any randomized algorithm requires $\kappa^{1-o(1)}$ queries to achieve a fixed objective accuracy, even when the initial objective gap is bounded independently of $\kappa$. Thus curvature bounds alone cannot guarantee uniform accelerated total-query complexity. We then ask which additional structure permits acceleration and give positive examples: under a Hessian-Lipschitz assumption, variance-reduced stochastic methods yield accelerated guarantees when the Hessian variation is sufficiently small. Finally, experiments on real datasets show substantial empirical speedups from acceleration across a broad range of methods. 

\end{abstract}

\keywords{Gaussian variational inference \and accelerated optimization \and query complexity \and variance reduction }

\section{Introduction}\label{sec:intro}

Variational inference (VI) approaches probabilistic inference through optimization: it fits a tractable probability law to a target distribution, providing an approximation that is widely used in Bayesian inference~\citep{blei2017variational}. This viewpoint naturally motivates the use of momentum-based acceleration to develop faster inference algorithms. For an $\alpha$-strongly convex, $\beta$-smooth objective, momentum improves gradient descent's complexity from $O(\kappa\log(1/\varepsilon))$ to $O(\sqrt\kappa\log(1/\varepsilon))$, where $\kappa=\beta/\alpha$ \citep{nesterov2004introductory}. With stochastic gradients, however, acceleration must also account for a variance-dependent cost, which can dominate the gains from momentum.

The parallel question has driven substantial progress in Markov chain Monte Carlo (MCMC), the other pillar of probabilistic inference alongside VI. Its development has also benefited from an optimization viewpoint: the law of overdamped Langevin diffusion is a Wasserstein gradient flow of KL divergence \citep{jordan1998variational}, while underdamped Langevin connects sampling with momentum methods~\citep{cheng2018underdamped,ma2021analog}; see \citet{chewi2026sampling} for an overview. Acceleration in sampling is delicate and remains an active area of research.
Recent developments begin to achieve sublinear condition-number dependence in algorithm complexity for strongly log-concave targets \citep{altschuler2025ballistic,chen2026exact,lu2026windowed}, while a non-acceleration result holds for standard splitting schemes \citep{bourabee2026nonacceleration}.

In VI, momentum variants of black-box methods have been related to Langevin dynamics \citep{hoffman2020blackbox}, and accelerated gradient methods have been analyzed for mean-field VI with exact expectations \citep{jiang2024polyhedral}. This paper takes a first step toward the theoretical foundations of momentum-based stochastic acceleration in VI, focusing on full-covariance Gaussian variational inference (GVI). 

For strongly log-concave, log-smooth targets, GVI admits a convex composite formulation whose expected potential inherits the target's curvature bounds \citep{challis2013gaussian,domke2020provable}. Exact Gaussian integration therefore makes classical acceleration available. With pointwise target access, however, the required expectations must be estimated, bringing the variance-dependent cost into play. For unbiased stochastic gradients with variance bounded by $\sigma^2$, the classical accelerated complexity is
\begin{equation}\label{eq:intro-stochastic-benchmark}
O\!\left(\sqrt\kappa\log\frac{2\Delta_0}{\varepsilon}+\frac{\sigma^2}{\alpha\varepsilon}\right)
\end{equation}
\citep{ghadimi2013optimal}, where $\Delta_0$ bounds the initial objective gap. 
In GVI, the variance bound depends on the target, the variational parameters, and the estimator. Establishing acceleration in the total number of target queries requires accounting for this dependence. This distinction motivates our study of both the possibilities and limitations of acceleration in GVI.

Our contributions are as follows:

\begin{itemize}[leftmargin=2em,itemsep=1pt,topsep=3pt]
\item \textbf{An accelerated benchmark.} With exact Gaussian expectations, the convex composite formulation of GVI gives the classical $O(\sqrt\kappa\log(\Delta_0/\varepsilon))$ rate using square-root parameterizations of the covariance matrix (Theorem~\ref{thm:exact-apg}).  

\item \textbf{A query lower bound.} We construct a family of strongly log-concave, log-smooth targets in dimension  $\Theta((\log\kappa)^2)$, with uniformly bounded initial gaps, for which any randomized  algorithm using derivatives of fixed finite order requires $\kappa^{1-o(1)}$ queries to attain a fixed expected accuracy uniformly over the family (Theorem~\ref{thm:lb-main}). The lower bound therefore extends beyond any particular estimator or momentum scheme.

\item \textbf{Acceleration with controlled Hessian variation.} 
Under a Hessian-Lipschitz assumption, we derive variance bounds for stochastic gradient estimators and use them to establish convergence guarantees for accelerated proximal and projected methods. All retain the accelerated $O(\sqrt\kappa\log(2\Delta_0/\varepsilon))$ iteration count, and their total query complexity remains accelerated when the Hessian variation is sufficiently small (Section~\ref{sec:stochastic}).
\end{itemize}

Numerical experiments illustrate the predicted condition-number scaling on Gaussian targets and the practical benefits of acceleration on graph-probit posteriors. Compared with recent progress on accelerated sampling, our results suggest that stochastic acceleration in VI requires assumptions beyond log-concavity and effective variance reduction.

\subsection{Related work}\label{sec:related}
In statistical inference, means and covariances are often the most used statistics, making Gaussian approximations and GVI useful. GVI also provides a tractable setting for studying the convergence and complexity of optimization algorithms for variational inference. It admits a convex formulation in mean and covariance-factor parameters for log-concave targets \citep{challis2013gaussian,domke2020provable}, and its
expected-potential term inherits smoothness from the potential $V$.
These properties underpin convergence analyses of projected and proximal stochastic gradient methods \citep{domke2023provable,kim2023convergence}.

Geometry has played an important role in the study of GVI. 
Under strong log-concavity and log-smoothness, geodesic strong convexity of the GVI objective in Bures--Wasserstein geometry yields algorithms with convergence guarantees that are linear or quadratic in the condition number \citep{lambert2022variational,diao2023forward, kim2026stochastic}. Fisher--Rao geometry underlies natural gradient methods
\citep{amari1998natural}.
For GVI, natural gradient algorithms and their convergence
have been studied in both deterministic and stochastic settings
\citep{chen2026sampling,wu2024understanding,kumar2025square,
sun2025natural,guilmeau2026projected}.
Momentum acceleration on manifolds requires controlling geometric distortion when comparing quantities in different tangent spaces \citep{ahn2020riemannian}. Silver step-size schedules offer another approach to acceleration in Riemannian optimization \citep{park2025silver}, with existing guarantees based on exact gradients. In contrast to these geometry-based approaches, we study momentum acceleration in Euclidean mean and covariance-factor coordinates, considering both exact Gaussian expectations and pointwise derivative access.

For stochastic VI, quadratic control variates
\citep{miller2017reducing,geffner2020approximation} and sticking-the-landing (STL) corrections \citep{roeder2017sticking} reduce reparameterization gradient variance. Recent work has established convergence guarantees for variance-reduced GVI methods \citep{kim2024linear,luu2025stochastic}. We derive variance bounds in terms of Hessian variation and combine them with accelerated stochastic optimization
\citep{ghadimi2013optimal}, obtaining accelerated total-query guarantees when Hessian variation is sufficiently small.

Existing theoretical work on GVI has largely focused on establishing convergence guarantees for specific algorithms. To the best of our knowledge, no query-complexity lower bounds have been established for GVI under pointwise derivative access. Related lower bounds have been studied in log-concave sampling~\citep{chewi2022query,chewi2024query}, but they measure accuracy in total variation distance rather than excess reverse KL.

\subsection{Problem setup}\label{sec:setup}\label{subsec:setup-objective}

Let $\pi_V(x)=Z_V^{-1}e^{-V(x)}$ with $Z_V:=\int_{\mathbb{R}^d} e^{-V(x)}dx<\infty$ be a target density on $\mathbb{R}^d$, where $V\in C^2(\mathbb{R}^d)$ is twice continuously differentiable, and let $\mathcal{G}_d$ denote the family of Gaussian distributions on $\mathbb{R}^d$:
\[
\mathcal G_d:=\left\{\mathcal{N}(m,\Sigma):
m\in\mathbb R^d,\, \Sigma\in\mathbb S_{++}^d \right\}.
\]
For $q\in\mathcal{G}_d$, define
\[
F_V(q):=\mathbb{E}_{X\sim q}[V(X)]-H(q),\qquad H(q):=-\mathbb{E}_{X\sim q}[\log q(X)].
\]
where $H$ is the entropy of $q$. The Gaussian variational inference (GVI) problem is to minimize $F_V$ over $\mathcal G_d$. We suppress the subscript and write $F:=F_V$ and $F^\star:=\inf_{q\in\mathcal{G}_d}F(q)$. Because $F(q)=\mathrm{KL}(q\,\|\,\pi_V)-\log Z_V$, the quantity $F(q)-F^\star$ is exactly the excess reverse KL divergence relative to the best Gaussian approximation to $\pi_V$. We make the following curvature assumptions on $V$:
\begin{assumption}\label{assump:curvature}
There exist constants $0<\alpha\le\beta<\infty$ such that $\alpha I\preceq\nabla^2V(x)\preceq\beta I$ for every $x\in\mathbb{R}^d$.

\end{assumption}
This is a standard assumption for establishing convergence rates and complexity bounds in log-concave sampling and in GVI~\citep{chewi2026sampling,lambert2022variational,diao2023forward,domke2023provable}. Throughout the paper, we write $\kappa:=\beta/\alpha$. Under Assumption~\ref{assump:curvature}, $F$ has a unique minimizer $q_\star=\mathcal{N}(m_\star,\Sigma_\star)$. The first-order optimality conditions give
\[
\mathbb{E}_{q_\star}[\nabla V(X)]=0,\qquad \mathbb{E}_{q_\star}[\nabla^2V(X)]=\Sigma_\star^{-1},\qquad \beta^{-1}I\preceq\Sigma_\star\preceq\alpha^{-1}I.
\]
However, in Section~\ref{sec:lb}, we show that these assumptions alone are insufficient to obtain $O(\sqrt{\kappa})$ acceleration. The stochastic acceleration results later in the paper additionally use the following regularity condition:

\begin{assumption}\label{assump:hessian-lipschitz}
The potential $V$ has a Lipschitz continuous Hessian. That is,
\[
L_H:=\sup_{x,y\in\mathbb{R}^d,\ x\neq y}\frac{\|\nabla^2V(x)-\nabla^2V(y)\|_{\mathrm{op}}}{\|x-y\|_2}<\infty.
\]
\end{assumption}

The constant $L_H$ quantifies the nonquadraticity of $V$. 
In particular, if $L_H=0$, then $V$ is quadratic and $\pi_V$ is Gaussian.

We represent covariance $\Sigma$ either by its Cholesky factor parametrization $\Sigma=LL^\top$, where $L\in\mathbb{T}_{++}^d$ is lower triangular with strictly positive diagonal, or by its symmetric square root parametrization $\Sigma=C^2$, where $C\in\mathbb{S}_{++}^d$ is positive definite. We write $\mathcal A=\mathbb T^d$ or $\mathbb S^d$, respectively, and let $\mathcal A_{++}$ denote the corresponding positive domain. We write $A\in\mathcal A_{++}$ for either factor.

For $A\in\mathcal{A}_{++}$, let $X=m+A\xi$ with $\xi\sim\mathcal{N}(0,I_d)$. Up to an additive constant independent of $(m,A)$,
\[
F(m,A)=\mathbb{E}_{\xi}[V(m+A\xi)]-\log\det A.
\]
We use $x=(m,A)$ when the factor coordinate is clear from context and equip the product space with the norm
$\|(u,U)\|^2:=\|u\|^2+\|U\|_F^2$. The gradients with respect to $C$ are taken in $\mathbb{S}^d$ under the Frobenius inner product. For a differentiable function $\Phi$ on $\mathbb{S}_{++}^d$, $\nabla_C\Phi(C)$ denotes the unique symmetric matrix satisfying $D\Phi(C)[U]=\langle\nabla_C\Phi(C),U\rangle_F$ for every $U\in\mathbb{S}^d$. Unless otherwise specified, $\|\cdot\|$ denotes the
Euclidean norm on vectors and the Frobenius norm on matrices.

Under these assumptions, a key observation is that the curvature of $V$ is inherited by $\mathbb{E}_{\xi}[V(m+A\xi)]$, which is formalized by the following lemma \citep{challis2013gaussian,domke2020provable}.
\begin{lemma}\label{lem:curvature-inheritance}
Under Assumption~\ref{assump:curvature}, the energy function
\[
E(m,A):=\mathbb{E}_{\xi}[V(m+A\xi)]
\]
is twice continuously differentiable on $\mathbb{R}^d\times\mathcal{A}$ and, for every $\theta=(m,A)$ and every direction $(u,U)$,
\[
\alpha\|(u,U)\|^2\le\nabla^2E(\theta)[(u,U),(u,U)]\le\beta\|(u,U)\|^2.
\]
Consequently, $E$ is $\alpha$-strongly convex and $\beta$-smooth.
\end{lemma}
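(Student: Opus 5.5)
The plan is to differentiate under the expectation and reduce the Hessian of $E$ to a Gaussian average of quadratic forms of $\nabla^2V$. Fix $\theta=(m,A)$ and a direction $(u,U)$, and set $\phi(t):=E(m+tu,A+tU)=\mathbb{E}_\xi[V(m+A\xi+t(u+U\xi))]$. Since $\alpha I\preceq\nabla^2V\preceq\beta I$, the gradient $\nabla V$ grows at most linearly and $V$ at most quadratically. The integrands $V$, $\nabla V\cdot(u+U\xi)$ and $(u+U\xi)^\top\nabla^2V(\cdot)(u+U\xi)$ are therefore dominated, locally uniformly in $\theta$, by polynomials in $\|\xi\|$ of degree at most two. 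These are integrable under $\mathcal{N}(0,I_d)$. Dominated convergence then justifies differentiating twice under the integral and gives
\[
\nabla^2E(\theta)[(u,U),(u,U)]=\phi''(0)=\mathbb{E}_\xi\bigl[(u+U\xi)^\top\nabla^2V(m+A\xi)(u+U\xi)\bigr].
\]
Continuity of $\nabla^2E$ in $\theta$ follows from the same domination, together with continuity of $\nabla^2V$ and boundedness of $\|\nabla^2V\|_{\mathrm{op}}\le\beta$. This gives the $C^2$ claim on the whole linear space $\mathbb{R}^d\times\mathcal A$; positivity of $A$ is not needed here.

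Next, insert the curvature bounds pointwise in $\xi$. This gives
\[
\alpha\,\mathbb{E}\|u+U\xi\|^2\le\phi''(0)\le\beta\,\mathbb{E}\|u+U\xi\|^2.
\]
The remaining step is the identity $\mathbb{E}\|u+U\xi\|^2=\|u\|^2+2u^\top U\mathbb{E}[\xi]+\mathbb{E}[\xi^\top U^\top U\xi]=\|u\|^2+\operatorname{tr}(U^\top U)=\|(u,U)\|^2$. It uses $\mathbb{E}\xi=0$ and $\mathbb{E}\xi\xi^\top=I$, and it holds for any matrix $U$, in particular for $U\in\mathbb{T}^d$ or $U\in\mathbb{S}^d$. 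The Frobenius inner product on $\mathbb{S}^d$ agrees with the ambient one, so the symmetric-gradient convention does not change the quadratic form.

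Finally, strong convexity and smoothness follow from the standard second-order characterization on the convex domain $\mathbb{R}^d\times\mathcal A$. Integrate $\phi''$ along segments and use Taylor's formula with integral remainder to get $E(y)\ge E(x)+\langle\nabla E(x),y-x\rangle+\frac{\alpha}{2}\|y-x\|^2$, and the matching upper bound with $\beta$. For $\beta$-Lipschitz continuity of $\nabla E$, the Hessian bounds $0\preceq\alpha I\preceq\nabla^2E\preceq\beta I$ give $\|\nabla^2E\|_{\mathrm{op}}\le\beta$, and the mean value inequality finishes the argument. The main obstacle is the justification of differentiation under the integral, that is, the uniform integrable domination. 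Everything else is a one-line computation.
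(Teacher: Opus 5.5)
Your proposal is correct and follows the paper's proof essentially step for step: domination by $C(1+\|\xi\|^2)$ on bounded neighborhoods to differentiate twice under the expectation, the second derivative along a line as the Gaussian average of $(u+U\xi)^\top\nabla^2V(m+A\xi)(u+U\xi)$, the pointwise curvature bounds, and the isotropy identity $\mathbb{E}\|u+U\xi\|^2=\|u\|^2+\|U\|_F^2$. Your extra remarks on continuity of $\nabla^2E$ and on passing from the Hessian bounds to strong convexity and $\beta$-smoothness spell out steps the paper leaves implicit.
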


The proof is in Appendix~\ref{app:intro}, where we use the isotropy identity: $\mathbb{E}\|u+U\xi\|^2=\|u\|^2+\|U\|_F^2.$

\paragraph{Access model and cost.}
Unless stated otherwise, algorithms access $V$ through pointwise derivative evaluations. We count pointwise derivative evaluations and Hessian--vector products (HVPs) separately as $N_{\nabla V}$ and $N_{\mathrm{HVP}}$, forming $\nabla^2V(X)A$ costs $d$ HVPs. An output $q$ attains accuracy $\varepsilon$ if $\mathbb{E}[F(q)-F^\star]\le\varepsilon$, and $T_\varepsilon$ denotes the number of updates after which this holds. Section~\ref{sec:exact} alone uses exact Gaussian integration as an idealized benchmark.

\section{Acceleration with deterministic gradients}\label{sec:exact}

Suppose the Gaussian expectations can be evaluated exactly. One exact-expectation call at $(m,A)$ returns
\[
\nabla E(m,A)=\bigl(\mathbb{E}[\nabla V(m+A\xi)],\, P_{\mathcal{A}}\mathbb{E}[\nabla V(m+A\xi)\xi^\top]\bigr),\qquad P_{\mathbb{T}^d}=\operatorname{low}(\cdot),\quad P_{\mathbb{S}^d}=\operatorname{sym}(\cdot)
\]
where $\operatorname{sym}(\cdot)$ and $\operatorname{low}(\cdot)$ denote the symmetric and lower-triangular projections, respectively.  GVI is then a deterministic convex composite problem with the extended objective $F = E + h$, where $h(A) = - \log \det A$ on $\mathcal{A}_{++}$ and $h(A) = + \infty$ otherwise. 

By Lemma~\ref{lem:curvature-inheritance}, $E$ is $\alpha$-strongly convex and $\beta$-smooth on the entire ambient space, while $h$ is proper, closed, and convex. Moreover, for any stepsize $\eta>0$, the proximal map $\prox_{\eta h}(\cdot)$ is explicit: for $\mathcal A=\mathbb T^d$, it applies $y\mapsto \phi_\eta(y):=(y+\sqrt{y^2+4\eta})/2$ to the diagonal entries and the identity map to the strictly lower-triangular entries, while for $\mathcal A=\mathbb S^d$, it applies $\phi_\eta$ to the eigenvalues. This formula also appears in \citet{domke2020provable}. The accelerated proximal gradient (APG) method therefore applies directly as specified in Algorithm~\ref{alg:exact-apg} with its convergence guarantee:

\begin{algorithm}[htbp]
\caption{Deterministic accelerated proximal gradient for GVI}
\label{alg:exact-apg}
\begin{algorithmic}[1]
\Require $x_0=(m_0,A_0)\in\mathbb R^d\times\mathcal A_{++}$, horizon length $T$
\State $\eta\gets1/\beta$, $\rho\gets(1-\sqrt{\alpha/\beta})/(1+\sqrt{\alpha/\beta})$
\State $y_0\gets x_0$
\For{$k=0,\ldots,T-1$}
\State $g_k\gets\nabla E(y_k)$
\State $(m_{k+1},\widetilde A_{k+1})\gets y_k-\eta g_k$
\State $A_{k+1}\gets\prox_{\eta h}(\widetilde A_{k+1})$
\State $x_{k+1}\gets(m_{k+1},A_{k+1})$
\State $y_{k+1}\gets x_{k+1}+\rho(x_{k+1}-x_k)$
\EndFor
\State \Return $q_T=\mathcal N(m_T,A_TA_T^\top)$
\end{algorithmic}
\end{algorithm}

\begin{theorem}\label{thm:exact-apg}\label{thm:exact-apg-full}
Let Assumption~\ref{assump:curvature} hold, and let
$x_\star=(m_\star,A_\star)$ be the unique minimizer.
For the iterates $x_k=(m_k,A_k)$ of Algorithm~\ref{alg:exact-apg},
write $q_k:=\mathcal N(m_k,A_kA_k^\top)$ and define
\[
\Delta_0:=F(q_0)-F^\star+\frac{\alpha}{2}\|x_0-x_\star\|^2.
\]
For either factor representation
$\mathcal A\in\{\mathbb T^d,\mathbb S^d\}$, the iterates satisfy
\[
F(q_k)-F^\star\le(1-\kappa^{-1/2})^k\Delta_0
\]
for every $k\ge0$. Consequently, for every
$0<\varepsilon\le\Delta_0$, if the algorithm is run for
\[
T_\varepsilon:=
\left\lceil\sqrt\kappa\log\frac{\Delta_0}{\varepsilon}\right\rceil
\]
steps, then $F(q_{T_\varepsilon})-F^\star\le\varepsilon$.
Each iteration uses one exact energy-gradient call, so the
iteration count and exact energy-gradient oracle count both
equal $T_\varepsilon$.
\end{theorem}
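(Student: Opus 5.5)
The plan is to reduce Theorem~\ref{thm:exact-apg} to the standard linear-convergence guarantee for Nesterov's accelerated proximal gradient method with constant momentum, applied to the composite objective $F=E+h$ on the ambient Euclidean space $\mathcal X:=\mathbb R^d\times\mathcal A$ with the norm $\|(u,U)\|^2=\|u\|^2+\|U\|_F^2$. By Lemma~\ref{lem:curvature-inheritance}, $E$ is $\alpha$-strongly convex and $\beta$-smooth on all of $\mathcal X$. We argue that this holds on the whole ambient space and not merely on $\mathcal A_{++}$: the isotropy identity needs no positivity of $A$. The function $h$, extended by $+\infty$ off $\mathcal A_{++}$, is proper, closed and convex. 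Convexity follows because $-\log\det$ is convex on $\mathbb S_{++}^d$. On $\mathbb T_{++}^d$ it equals $-\sum_i\log A_{ii}$, which is convex and depends only on the diagonal. Closedness follows because $h\to+\infty$ at the boundary.

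The next step is to verify that the prox formula stated in the paper is correct, so that line 6 of Algorithm~\ref{alg:exact-apg} is an exact proximal step. For $\mathbb T^d$ the problem separates across entries, and the scalar problem $\min_{a>0}\tfrac{1}{2\eta}(a-y)^2-\log a$ gives $a^2-ya-\eta=0$, whose positive root is $\phi_\eta(y)$. For $\mathbb S^d$ we use unitary invariance: the minimizer shares the eigenvectors of $y$, and the problem reduces to the scalar case on the eigenvalues. It follows that every $x_k$ lies in $\mathbb R^d\times\mathcal A_{++}$, so $q_k$ is well defined and $F(q_k)=F(x_k)$ up to the common additive constant, which cancels in $F(q_k)-F^\star$. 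Only the extrapolated points $y_k$ may leave $\mathcal A_{++}$, and we note this is harmless precisely because $E$ and $\nabla E$ are defined and curvature-controlled on all of $\mathcal X$.

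The core step is the Lyapunov argument, which we expect to be the only nontrivial part. With $\eta=1/\beta$, we set $\Phi_k:=F(x_k)-F^\star+\tfrac{\alpha}{2}\|v_k-x_\star\|^2$, where $v_k:=x_k+\sqrt\kappa\,(x_{k}-x_{k-1})$ and $v_0=x_0$. The aim is $\Phi_{k+1}\le(1-\kappa^{-1/2})\Phi_k$. The key inequality is the proximal-gradient descent lemma: for $x^+=\prox_{\eta h}(y-\eta\nabla E(y))$ and any $z$,
\[
F(z)\ge F(x^+)+\beta\langle y-x^+,z-y\rangle+\tfrac{\beta}{2}\|y-x^+\|^2+\tfrac{\alpha}{2}\|z-y\|^2.
\]
This follows from $\beta$-smoothness, $\alpha$-strong convexity of $E$, and the subgradient characterization of the prox of $h$. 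We apply it with $z=x_k$ and with $z=x_\star$, and take the convex combination with weights $1-\theta$ and $\theta$, where $\theta=\kappa^{-1/2}$. Using $y_k=x_k+\rho(x_k-x_{k-1})$ with $\rho=(1-\theta)/(1+\theta)$, the cross terms then telescope into the $v$-distance exactly as in Nesterov's constant-momentum analysis; see \citet{nesterov2004introductory} or the FISTA-type strongly convex variant. The delicate part is the algebra matching $\rho$ with the definition of $v_k$ so that the quadratic terms close. If that bookkeeping proves awkward, we will cite the standard theorem for APG under $\mu$-strong convexity of the smooth part, whose hypotheses have been verified above.

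Finally, iterating gives $F(x_k)-F^\star\le\Phi_k\le(1-\kappa^{-1/2})^k\Phi_0=(1-\kappa^{-1/2})^k\Delta_0$. Using $(1-\theta)^k\le e^{-\theta k}$, taking $k\ge\sqrt\kappa\log(\Delta_0/\varepsilon)$ yields accuracy $\varepsilon$. The oracle count holds because each iteration evaluates $\nabla E$ once, at $y_k$.
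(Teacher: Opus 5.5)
Your outline matches the paper's proof. You verify that $E$ is $\alpha$-strongly convex and $\beta$-smooth on all of $\mathbb R^d\times\mathcal A$, and that $h$ is proper, closed and convex with an explicit prox, so every $x_k$ stays in $\mathbb R^d\times\mathcal A_{++}$ even when $y_k$ does not. You then invoke the standard linear rate for constant-momentum APG with step $1/\beta$, which the paper simply cites, and convert with $1-t\le e^{-t}$. You gloss over one line that the paper states explicitly. The map $A\mapsto AA^\top$ is a bijection from $\mathcal A_{++}$ onto $\mathbb S^d_{++}$, and this is what gives $F(x_\star)=F^\star$, not merely $F(x_k)=F(q_k)$.

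Your self-contained Lyapunov step, however, fails as written, because your auxiliary sequence is off by one. With $\rho=(\sqrt\kappa-1)/(\sqrt\kappa+1)$, the sequence that makes the cross terms cancel is $v_k=x_{k-1}+\sqrt\kappa\,(x_k-x_{k-1})=x_k+(\sqrt\kappa-1)(x_k-x_{k-1})$. Equivalently, it is the sequence determined by $y_k=(x_k+\kappa^{-1/2}v_k)/(1+\kappa^{-1/2})$, not $x_k+\sqrt\kappa\,(x_k-x_{k-1})$.

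With your choice, the inequality $\Phi_{k+1}\le(1-\kappa^{-1/2})\Phi_k$ is false already at $k=0$. Take $V(x)=\frac12x^\top Hx$ with $H=\operatorname{diag}(\beta,\alpha,\dots,\alpha)$ and start at $x_0=(ce_1,A_\star)$. The objective separates in $m$ and $A$, so the factor stays at $A_\star$, while $m_1=ce_1-\beta^{-1}Hce_1=0$. Hence $x_1=x_\star$, your $v_1-x_\star=(-\sqrt\kappa\,ce_1,0)$, and $\Phi_1=\frac{\beta}{2}c^2$. On the other hand, $(1-\kappa^{-1/2})\Phi_0=(1-\kappa^{-1/2})\frac{\alpha+\beta}{2}c^2<\frac{\beta}{2}c^2$, since $\sqrt\kappa+\kappa^{-1/2}>1$. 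With the corrected $v_1=(1-\sqrt\kappa)ce_1$, one gets $\Phi_1=\frac{\alpha}{2}(\sqrt\kappa-1)^2c^2$, which does satisfy the bound.

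So either fix $v_k$, or rely on your fallback of citing the standard theorem, which is exactly what the paper does. The corrected $v_k$ is the sequence used in the estimate-sequence argument in the proof of Lemma~\ref{lem:prox-fixed-batch}, specialized to exact gradients.
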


The full proof is given in Appendix~\ref{app:intro}.

\section{A query-complexity lower bound for GVI}\label{sec:lb}
 
Section~\ref{sec:exact} attains the accelerated $O(\sqrt\kappa)$ dependence with exact Gaussian expectations. With pointwise access, where $V$ and its derivatives are evaluated only at individual query points, these expectations must instead be estimated, for example by Monte Carlo. Then, a direct variance bound loses the acceleration: since $\nabla V$ is $\beta$-Lipschitz, the mean component of the standard reparameterization estimator has variance of order $\beta^2\operatorname{tr}(\Sigma)$, so the resulting bound on the variance term in \eqref{eq:intro-stochastic-benchmark} scales as $\kappa^2$. Better estimators may reduce this cost, but we show that under Assumption~\ref{assump:curvature} alone, no algorithm with pointwise access can avoid a nearly linear dependence on $\kappa$ in the total number of target evaluations.
 
For the lower bound, we allow the algorithm to evaluate $V$ and its derivatives of any fixed finite order at each query point. We count each evaluation as one query, including evaluations within a Monte Carlo batch. Each point may be chosen based on previous observations and the algorithm's internal randomness. This includes any Monte Carlo method built from gradient or HVP evaluations. By rescaling the variables $y=\sqrt{\alpha}\,x$, we may normalize $\alpha=1$ and consider the normalized class
\[
\mathcal V_d(\kappa):=\left\{V\in C^\infty(\mathbb R^d):I_d\preceq\nabla^2V(x)\preceq\kappa I_d\text{ for all }x\in\mathbb R^d\right\}.
\]
All asymptotic notation in this section is as $\kappa\to\infty$.
 
\begin{theorem}\label{thm:lb-main}
There exist constants $\varepsilon_0,C_0,C>0$ and $\kappa_0>1$ such that, for every $\kappa\ge\kappa_0$, there exist a dimension $d=\Theta((\log\kappa)^2)$ and a family $\{V_v:v\in\mathbb S^{d-1}\}\subseteq\mathcal V_d(\kappa)$ satisfying
\[
\sup_{v\in\mathbb S^{d-1}}\left(F_{V_v}(\mathcal N(0,I_d))-F_{V_v}^\star\right)\le C_0.
\]
Furthermore, for every finite order $r\ge1$, any randomized algorithm that makes at most $N$ queries, evaluating $V_v$ and its derivatives up to order $r$ at each query point, and returns $q\in\mathcal G_d$ satisfying
\[
\sup_{v\in\mathbb S^{d-1}}\mathbb E\left[F_{V_v}(q)-F_{V_v}^\star\right]\le\varepsilon_0
\]
must satisfy
\[
N\ge\kappa\exp\left[-C(\log\kappa)^{3/4}\right].
\]
\end{theorem}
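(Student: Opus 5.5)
The plan is a needle-in-a-haystack construction. Each $V_v$ will equal a fixed strongly convex base potential except on a thin region around direction $v$, a region that a Gaussian of covariance near $I_d$ still charges with non-negligible mass. Locating that region will take about $\kappa$ pointwise queries.

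\textbf{Construction.} Take the base potential $V_0(x)=\tfrac12\|x\|^2$ and set $V_v(x)=V_0(x)+\psi(\langle v,x\rangle)$. Here $\psi$ is a smooth convex one-dimensional bump with $0\le\psi''\le\kappa-1$, supported on $[t_0,\infty)$ with $t_0\approx\sqrt{2\log\kappa}$. I will use a pure ridge function of $\langle v,x\rangle$ only if it suffices, and otherwise localize in a cap so the bump is also thin in the orthogonal directions. The Hessian then lies between $I_d$ and $\kappa I_d$, so $V_v\in\mathcal V_d(\kappa)$.

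\textbf{Effect of the perturbation and the initial gap.} The perturbation must change the optimal Gaussian by an $\Omega(1)$ amount in reverse KL. I will tune $\psi$ so its effect on $E$ is of constant size: a slope of order $\kappa$ acting on a region of Gaussian mass about $1/\kappa$ shifts $\mathbb E_q[\nabla V_v]$ by $\Theta(1)$ along $v$. This forces $m_\star^{(v)}\approx -c\,v$ with $c$ bounded below. A direct computation with the standard Gaussian then shows $F_{V_v}(\mathcal N(0,I_d))-F^\star_{V_v}\le C_0$. Since the perturbation only adds a bounded amount to $E$ at $\mathcal N(0,I)$, this bound holds uniformly.

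\textbf{Accuracy forces identification of $v$.} Strong convexity from Lemma~\ref{lem:curvature-inheritance} gives $F_{V_v}(q)-F^\star\ge\tfrac12\|m_q-m_\star^{(v)}\|^2$. So if $\varepsilon_0$ is small, any $\varepsilon_0$-accurate $q$ has a mean with $\langle m_q,-v\rangle\ge c/2$. This overlap condition can hold for only a small fraction of $v\in\mathbb S^{d-1}$ at once: in dimension $d$, a fixed vector has overlap $\Omega(1)$ with a uniform random $v$ with probability $e^{-\Omega(d)}$. With $v$ uniform, Yao's principle reduces the problem to deterministic algorithms.

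\textbf{Information bound, the main obstacle.} A query $x$ reveals anything about $v$, through all derivatives up to order $r$, only if $\langle v,x\rangle\ge t_0$. For $\|x\|\le R$ and $v$ uniform, this probability is roughly $\exp(-d t_0^2/(2R^2))$. Queries far from the origin are the hard part, since they could detect $v$ easily. I would make the bump active only in a bounded shell of radius about $\sqrt d$, or give $\psi$ compact support in $\langle v,x\rangle$ combined with a radial cutoff. I would then pick $d$ and $t_0$ so that a single query hits with probability at most $\kappa^{-1}e^{C(\log\kappa)^{3/4}}$, while $\mathcal N(0,I)$ still puts mass about $1/\kappa$ on the region. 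The choice $d=\Theta((\log\kappa)^2)$ balances these requirements: it makes the concentration of $\langle v,x\rangle$ around the $\sqrt d$ scale sharp enough, and the $(\log\kappa)^{3/4}$ loss comes from optimizing that tradeoff. A union bound over $N$ adaptive queries follows, since before the first hit the answers are independent of $v$. If $N\kappa^{-1}e^{C(\log\kappa)^{3/4}}$ is small, the algorithm's output is essentially independent of $v$. By the previous step it then fails with constant probability, which gives the bound on $N$.
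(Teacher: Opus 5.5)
\textbf{The identification step fails.} You assume the bump moves the optimal mean to $m_\star^{(v)}\approx -cv$ with $c$ bounded below. That cannot hold together with an $O(1)$ initial gap. For your ridge potential, symmetry gives $m_\star=-cv$ with $c=\mathbb E_{q_\star}[\psi'(\langle v,X\rangle)]$ and $\operatorname{Var}_{q_\star}\langle v,X\rangle\le 1$. Let $G\sim\mathcal N(0,1)$ and $g(s):=\mathbb E[\psi'(G-s)]$. The variance bound gives $g(c)\ge c$. Because $\psi'$ vanishes below $t_0$, the density ratio $\phi(u+s)/\phi(u+c)\ge e^{(c-s)t_0}$ on $u\ge t_0$ gives $g(s)\ge c\,e^{(c-s)t_0}$ for $0\le s\le c$. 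Comparing $\mathcal N(0,I_d)$ with $\mathcal N(-\tfrac c2 v,I_d)$ then yields
\[
F_{V_v}(\mathcal N(0,I_d))-F^\star_{V_v}\ \ge\ \int_0^{c/2}\bigl(g(s)-s\bigr)\,ds\ \ge\ \frac{c^2}{2}\bigl(e^{ct_0/2}-1\bigr).
\]
So a bounded initial gap forces $c=O(\log t_0/t_0)=o(1)$. The force at $\gamma$ is large, but it decays like $e^{-t_0 s}$ as the mean moves, so equilibrium comes after a vanishing shift. In the paper, $\mathcal N(-\mu v,I_d)$ with $\mu=\log R/R$ is already $o(1)$-optimal. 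Consequently $\tfrac12\|m_q-m_\star\|^2\le\varepsilon_0$ says nothing about $v$ at a fixed $\varepsilon_0$, and your $e^{-\Omega(d)}$ overlap bound never applies.

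What is missing is the paper's sensitivity argument. The normalized optimum is $o(1)$, so a gap of at most $2\varepsilon_0$ forces two things:
\begin{enumerate}
\item $\mathrm{KL}(q\|\gamma)\le 3\varepsilon_0$, hence $\|m\|^2$ and $\|I_d-\Sigma\|_F^2$ are $O(\varepsilon_0)$;
\item the tail $\frac{\kappa-1}{2}\mathbb E_q[(v^\top X-R)_+^2]$ drops from $1$ at $\gamma$ to at most $3\varepsilon_0$, which requires $v^\top(-m)\ge 1/(8R)$ or $v^\top(I_d-\Sigma)v\ge 1/(4R^2)$.
\end{enumerate}
These overlaps are tiny. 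They are $e^{-cR^2}$-unlikely over $v$ only because $d=\lceil R^4\rceil$, via a spherical-cap bound for $m$ and Hanson--Wright for $\Sigma$. This is the real reason $d\asymp(\log\kappa)^2$. Your argument also ignores the covariance channel entirely.

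\textbf{The localization step cannot be done with base $\tfrac12\|x\|^2$.} You are right that far queries are the danger. For the pure ridge, a query with $\|x\|\gg t_0\sqrt d$ has $\langle v,x\rangle>t_0$ with probability near $1/2$, and then $\nabla V_v(x)-x=\psi'(\langle v,x\rangle)v$ reveals $v$ exactly. But none of your fixes (cap, bounded shell, radial cutoff, compact support in $\langle v,x\rangle$) is compatible with $\mathcal V_d(\kappa)$. Since $\nabla^2 V_v\succeq I_d=\nabla^2(\tfrac12\|x\|^2)$, the perturbation $\psi_v=V_v-\tfrac12\|x\|^2$ must be convex, and a convex function vanishing outside a bounded set is identically zero. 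More generally, if $\psi_v$ vanishes on an open set, it is nonnegative everywhere. Then $\{\psi_v=0\}$ is closed and convex, so its complement contains an open half-space, which large-norm queries hit with probability bounded away from zero.

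The missing idea is to make the $v$-independent reference itself steep outside a ball, so that $V_v-V_{\mathrm{ref}}$ may be nonconvex. The paper takes $V_B=\tfrac12\|x\|^2+(\kappa-1)e^\tau_B$ and $V_v=\tfrac12\|x\|^2+(\kappa-1)e^\tau_{K_v}$ with $K_v=B\cap\{v^\top x\le R\}$. Since $\Pi_{K_v}x=\Pi_B x$ if and only if $v^\top\Pi_Bx\le R$, the two potentials agree to all orders unless $v^\top \Pi_B x>R-\tau$. The bound $\|\Pi_Bx\|\le a$ then controls the hit probability for every query, however far. The radius $a=(1+R^{-1/2})\sqrt d$ trades the ball-penalty cost under near-standard Gaussians, $e^{-R^3/2+o(R^3)}$, against the cap probability $\kappa^{-1}e^{O(R^{3/2})}$. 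That trade-off is the source of the $(\log\kappa)^{3/4}$ loss.
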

 
Since $d=\Theta((\log\kappa)^2)$, any fixed polynomial in $d$ is $\kappa^{o(1)}$. At the fixed accuracy $\varepsilon_0$, Theorem~\ref{thm:lb-main} therefore rules out every uniform bound of order $\widetilde O(\operatorname{poly}(d)\kappa^p)$ with $p<1$, and in particular every accelerated bound of the form $O(\operatorname{poly}(d)\sqrt\kappa\,(\log(\Delta_0/\varepsilon)+1/\varepsilon))$. The construction also provides a common initialization $\mathcal N(0,I_d)$ whose objective gap is $1+o(1)$ uniformly over all targets in the family. Thus, the lower bound does not rely on an initial objective gap that grows with $\kappa$.

We now outline the proof of the lower bound. First, we construct a family of targets indexed by a hidden
direction $v$, with uniformly bounded initial objective gaps. Second, we show that pointwise queries are unlikely to reveal the hidden direction. Third, we show that a Gaussian chosen independently of $v$
cannot achieve a small objective gap for most targets in the family.
We develop these arguments in the following paragraphs
and defer the technical proofs to Appendix~\ref{app:lb}.

\paragraph{Construction of the target family.}
Let $G\sim\mathcal N(0,1)$, and choose $R>0$ so that
\[
\frac{\kappa-1}{2}\mathbb E[(G-R)_+^2]=1.
\]
The Gaussian tail expansion gives
$R^2=2\log\kappa-3\log\log\kappa+O(1)$.
Set $d=\lceil R^4\rceil$ and $a=(1+R^{-1/2})\sqrt d$.
Let $B$ be the ball of radius $a$ centered at the origin,
and for each $v\in\mathbb S^{d-1}$, define
$H_v:=\{x:v^\top x\le R\}$ and $K_v:=B\cap H_v$.
Thus $K_v$ is obtained by removing a small cap from $B$
in the hidden direction $v$ (lightly shaded region in Figure~\ref{fig:lb-construction}).

\begin{figure}[t]
\centering
\begin{tikzpicture}[scale=0.48, line cap=round, line join=round]
\def\Rad{4}      
\def\rho{1.95}   
\def\th{35}      
 
\coordinate (O) at (0,0);
\coordinate (P) at ({\rho*cos(\th)},{\rho*sin(\th)}); 
 
\fill[blue!10] (O) circle (\Rad);
 
\begin{scope}
  \clip (O) circle (\Rad);
  \begin{scope}[rotate=\th]
    \fill[blue!22] (\rho,-6) rectangle (6,6);
    \draw[dashed,thick] (\rho,-4.5) -- (\rho,4.5);
  \end{scope}
\end{scope}
 
\draw[thick] (O) circle (\Rad);
 
\node[font=\large] at (-3.9,3.25) {$B$};
 
\fill (O) circle (0.07);
\node[below left,font=\footnotesize] at (O) {$O$};

\draw[->,thick] (O) -- (P);
 
\node[font=\footnotesize]
  at ({0.56*\rho*cos(\th)-0.32*sin(\th)},
      {0.56*\rho*sin(\th)+0.32*cos(\th)}) {$v$};
\begin{scope}[rotate=\th]
  \node[font=\footnotesize, align=center] at (3.02,0.10) {hidden\\cap};
\end{scope}
 
\end{tikzpicture}
\caption{A hidden cap in direction $v$ inside the ball $B$.}
\label{fig:lb-construction}
\end{figure}

For a closed convex set $K$, write
$e_K(x):=\frac12\operatorname{dist}(x,K)^2$.
Let $e_K^\tau:=e_K*\rho_\tau$, where $\rho_\tau$ is a smooth,
even probability density supported on the ball of radius
$\tau=(\kappa-1)^{-2}$. We construct
\begin{equation}\label{eq:construction}
V_v(x):=\frac12\|x\|^2+(\kappa-1)e_{K_v}^\tau(x),
\qquad
V_B(x):=\frac12\|x\|^2+(\kappa-1)e_B^\tau(x).
\end{equation}
These potentials are infinitely differentiable and satisfy
$I_d\preceq\nabla^2V_v,\nabla^2V_B\preceq\kappa I_d$. The parameters are chosen so that the initial Gaussian $\gamma:=\mathcal N(0,I_d)$ satisfies
\[
F_{V_v}(\gamma)-F_{V_v}^\star\le 1+o(1)
\]
uniformly over $v$. See Proposition~\ref{prop:lb-initial-gap} and Lemma~\ref{lem:lb-full-jet} for details.

\paragraph{Difficulty of detecting the hidden region.}
The key property of the construction~\eqref{eq:construction} is that $V_v$ and
$V_B$ agree outside the hidden region. In particular,
Lemma~\ref{lem:lb-full-jet} shows that
\[
v^\top\Pi_Bx\le R-\tau
\quad\Longrightarrow\quad
D^jV_v(x)=D^jV_B(x)
\quad\text{for all }j\ge0.
\]
Thus a pointwise oracle, even with derivatives of any
fixed finite order, cannot distinguish the two potentials
unless its query reaches the hidden region. For a uniformly random direction $v$,  Lemma~\ref{lem:lb-spherical-cap} gives
\[
\sup_x\mathbb P_v(v^\top\Pi_Bx>R-\tau)
\le
\frac{C}{\kappa}
\exp\left[C(\log\kappa)^{3/4}\right].
\]
Hence any fixed query is informative with probability
at most $\kappa^{-1+o(1)}$. Now fix the
algorithm's randomness and compare its transcripts
under $V_v$ and the reference potential $V_B$.
The reference queries do not depend on $v$, and the
two transcripts coincide until a query reaches the
hidden region. A union bound therefore shows that
the probability of distinguishing the two potentials
within $N$ queries is at most
\[
C\frac{N}{\kappa}
\exp\left[C(\log\kappa)^{3/4}\right].
\]
The complete proof is given in
Lemma~\ref{lem:lb-common-path}.

\paragraph{Completion of the proof.}
It remains to show that a Gaussian chosen independently
of $v$ cannot achieve a small objective gap for most
targets in the family.

\begin{lemma}\label{lem:lb-compromise}
There exist constants $\varepsilon_0,c>0$ such that,
for all sufficiently large $\kappa$ and every fixed
Gaussian $q\in\mathcal G_d$,
\[
\mathbb P_v\left(
F_{V_v}(q)-F_{V_v}^\star\le2\varepsilon_0
\right)\le e^{-cR^2}.
\]
\end{lemma}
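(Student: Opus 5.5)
We prove the lemma by finding a one-dimensional statistic of $q$ that every near-optimal Gaussian for $V_v$ must control and that only a small set of directions $v$ can satisfy at once. The statistic is the mean along $v$. Heuristically, $V_v$ and $V_B$ differ only on the cap $B\setminus H_v$. In that cap the penalty $(\kappa-1)e^\tau_{K_v}$ pushes mass from the half-space $\{v^\top x>R\}$ toward $\{v^\top x\le R\}$. The optimizer $q^v_\star$ therefore has to commit to the hidden direction: its mean or covariance in direction $v$ must differ from what is optimal for $V_B$ by a constant amount.

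\textbf{Step 1: separation.} We establish that for every $v$, any $q=\mathcal N(m,\Sigma)$ with $F_{V_v}(q)-F^\star_{V_v}\le 2\varepsilon_0$ satisfies $v^\top m\le -c_1$ for some absolute $c_1>0$ (or an analogous variance condition). By strong convexity (Lemma~\ref{lem:curvature-inheritance} with $\alpha=1$), a small gap gives $\|x-x^v_\star\|^2\le 4\varepsilon_0$. It therefore suffices to show $v^\top m^v_\star\le -2c_1$. We use rotational symmetry: $q^v_\star$ has mean $m^v_\star=\mu v$ and covariance diagonal in $(v,v^\perp)$. The first-order condition $\mathbb E_{q^v_\star}[\nabla V_v]=0$ in direction $v$ reads
\[
\mu=-(\kappa-1)\,\mathbb E_{q^v_\star}\bigl[v^\top\nabla e^\tau_{K_v}(X)\bigr].
\]
The choice $\tfrac{\kappa-1}{2}\mathbb E[(G-R)_+^2]=1$ makes the cap penalty an order-one quantity. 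The radial part $e_B$ is also essentially symmetric and contributes no $v$-component by symmetry. The cap term contributes a positive expected force of order $(\kappa-1)\mathbb E[(G-R)_+]\asymp R$ in direction $v$. This should keep $|\mu|$ bounded below. We will need to check that shifting the mean does not kill the force: if $\mu\to-\infty$, the $\tfrac12\|x\|^2$ term costs $\mu^2/2$. Balancing the two effects gives a self-consistent $\mu$ with $-\mu\in[c,C]$. We deduce this via a one-dimensional convexity argument along the $v$-line.

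\textbf{Step 2: concentration over $v$.} For fixed $q$, the event $v^\top m\le -c_1$ with $\|m\|$ bounded requires $v$ to lie in a cap of angular width bounded away from $\pi/2$ around $-m/\|m\|$. If $\|m\|$ is large, a small gap is impossible, since $F_{V_v}(q)-F^\star\ge\tfrac12\|m-m^v_\star\|^2$. For bounded $\|m\|\le M$, the event requires $|v^\top m|\ge c_1$, i.e.\ $|v^\top u|\ge c_1/M$ with $u=m/\|m\|$. Spherical concentration gives probability at most $2e^{-dc_1^2/(2M^2)}$. Since $d=\lceil R^4\rceil\ge R^2$, this is $\le e^{-cR^2}$.

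\textbf{Main obstacle.} The hard step is Step 1: quantifying $\mu$ uniformly in $\kappa$ with the mollification and the ball constraint. I would try first to compare $F_{V_v}$ with the explicit one-dimensional surrogate
\[
\tfrac12\|x\|^2+\tfrac{\kappa-1}{2}(v^\top x-R)_+^2 .
\]
The ball part $e_B$ is negligible under $\gamma$ because $\|X\|\approx\sqrt d<a$. I would control the error terms using Lemma~\ref{lem:lb-full-jet} and Proposition~\ref{prop:lb-initial-gap}.

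If a mean shift alone turns out to be only $O(1/R)$, I would instead use the variance statistic $v^\top\Sigma v\le 1-c$. The same concentration argument applies to the quadratic form $v^\top\Sigma v$, whose deviations from $\operatorname{tr}\Sigma/d$ are also exponentially rare in $d$.
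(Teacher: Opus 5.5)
\textbf{Step~1 is false, and the rest of the argument depends on it.} Proposition~\ref{prop:lb-initial-gap} and Lemma~\ref{lem:lb-full-jet}, which you cite, give $\inf_{p}\{\mathrm{KL}(p\,\|\,\gamma)+(\kappa-1)\mathbb E_p[e^\tau_{K_v}(X)]\}=o(1)$. So the optimizer satisfies $\mathrm{KL}(q^v_\star\,\|\,\gamma)=o(1)$, hence $\|m^v_\star\|=o(1)$ and $\|\Sigma^v_\star-I_d\|_F=o(1)$. There is therefore no absolute $c_1$ with $v^\top m^v_\star\le-2c_1$. Likewise no constant $c$ gives $v^\top\Sigma^v_\star v\le1-c$, so your variance fallback fails for the same reason.

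Your balancing heuristic breaks because the cap force is exponentially sensitive. After shifting the mean by $-\mu v$, the force drops from about $R$ to about $Re^{-R\mu}$, so equilibrium occurs at $\mu=O(\log R/R)$. The test point in Proposition~\ref{prop:lb-initial-gap} uses $\mu=\log R/R$, and shrinking the variance along $v$ by $O(\log R/R^2)$ is cheaper still.

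More fundamentally, localizing through $1$-strong convexity cannot work. For large $\kappa$, the ball $\{\|x-x^v_\star\|\le2\sqrt{\varepsilon_0}\}$ (in symmetric-root coordinates, say) contains $(0,I_d)$ for every $v$. Membership in it therefore carries no information about $v$, even though $\gamma$ has gap $1+o(1)$. The discrimination between directions comes from the $\kappa$-steep side of the penalty, which the lower curvature bound does not see.

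\textbf{What is needed} is a condition at the correct scales, read directly off the objective rather than through $x^v_\star$.
\begin{itemize}
\item Use $\inf=o(1)$ to turn a gap of at most $2\varepsilon_0$ into $\mathrm{KL}(q\,\|\,\gamma)+\frac{\kappa-1}{2}\mathbb E_q[(v^\top X-R)_+^2]\le3\varepsilon_0$.
\item The KL term gives the dimension-free bounds $\|m\|^2\le6\varepsilon_0$, $\frac12I_d\preceq\Sigma\preceq2I_d$ and $\|I_d-\Sigma\|_F^2\le48\varepsilon_0$.
\item Set $\sigma^2=v^\top\Sigma v$ and $t=(R-v^\top m)/\sigma$. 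The normalization $\frac{\kappa-1}{2}\mathbb E[(G-R)_+^2]=1$ and the two-sided tail estimate $\mathbb E[(G-u)_+^2]\asymp u^{-3}e^{-u^2/2}$ force $t^2-R^2\ge2$ for small $\varepsilon_0$.
\item This expands to $-2Rv^\top m+R^2v^\top(I_d-\Sigma)v\ge\frac12$. So either $v^\top(-m)\ge1/(8R)$ or $v^\top(I_d-\Sigma)v\ge1/(4R^2)$; these are the true scales.
\end{itemize}

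\textbf{Concentration over $v$ at these scales.} Your Step~2 cap bound handles the first event, but only because $d=\lceil R^4\rceil$. It gives $\exp[-c\,d/(R^2\|m\|^2)]=e^{-cR^2}$, whereas $d\ge R^2$ alone would give only a constant.

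The second event is not covered by saying that $v^\top\Sigma v$ concentrates around $\operatorname{tr}\Sigma/d$. For example, $\Sigma=(1-R^{-2})I_d$ satisfies it for every $v$; only the KL bound rules such $\Sigma$ out. You must first use the KL control to get $|\operatorname{tr}(I_d-\Sigma)|\le\sqrt d\,\|I_d-\Sigma\|_F\le d/(16R^2)$, again using $\sqrt d\asymp d/R^2$ and small $\varepsilon_0$. Then write $v=g/\|g\|$ and apply Hanson--Wright to $g^\top(I_d-\Sigma)g$ at deviation $d/(16R^2)$, together with a chi-square lower tail for $\|g\|^2$, to obtain $e^{-cR^2}$.
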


The GVI objective for the constructed potentials can be written, up to an additive constant, as
\[
F_{V_v}(q)+\frac d2\log(2\pi)
=\mathrm{KL}(q\|\gamma)
+(\kappa-1)\mathbb E_q[e_{K_v}^{\tau}(X)],
\]
where $\gamma=\mathcal N(0,I_d)$. A small objective gap requires $q=\mathcal N(m,\Sigma)$ to remain close to $\gamma=\mathcal N(0,I_d)$ in KL divergence, while also reducing its Gaussian tail in the hidden direction $v$. For a fixed Gaussian $q$, these two conditions can hold simultaneously for only an exponentially small fraction of directions $v$. Appendix~\ref{app:lb} provides a detailed proof using concentration inequalities.

To complete the proof, consider running the algorithm on the reference potential $V_B$ using the same internal randomness, and denote its output by $q^0$. If none of the queries made under $V_B$ reaches the hidden region, the oracle responses under $V_v$ are identical, and the algorithm produces the same output $q=q^0$. Therefore, achieving an objective gap of at most $2\varepsilon_0$ requires either that a query reaches the hidden region or that $q^0$ already achieves this accuracy for $V_v$. The probability of the first event is bounded by Lemma~\ref{lem:lb-common-path}, while the second is bounded by Lemma~\ref{lem:lb-compromise}, since $q^0$ is independent of $v$. Taking a union bound gives
\[
\mathbb P_{v,\mathrm{alg}}\left(
F_{V_v}(q)-F_{V_v}^\star\le2\varepsilon_0
\right)
\le
C\frac{N}{\kappa}
\exp\left[C(\log\kappa)^{3/4}\right]+e^{-cR^2},
\]
where $\mathbb P_{v,\mathrm{alg}}$ denotes probability over a uniformly random direction $v$ and the algorithm's internal randomness.

If the worst-case expected gap is at most $\varepsilon_0$,
Markov's inequality makes the left-hand side at least
$1/2$. Since $R\to\infty$, rearranging yields
\[
N\ge\kappa\exp\left[-C(\log\kappa)^{3/4}\right],
\]
after adjusting $C$, completing the proof sketch of
Theorem~\ref{thm:lb-main}.

\begin{remark}
 The construction has large Hessian variation, so it does not contradict the accelerated guarantees of the next section, which require this variation to be small.
\end{remark}

\section{Recovering stochastic acceleration with small Hessian variation}\label{sec:stochastic}

Section~\ref{sec:lb} shows that Assumption~\ref{assump:curvature} alone is insufficient for uniform stochastic acceleration. We now give positive examples under Assumption~\ref{assump:hessian-lipschitz}: accelerated dependence on $\kappa$ can be recovered when the Hessian variation is sufficiently small. We give two complementary constructions. The first uses a gradient-only finite-difference (FD) control variate in Cholesky coordinates, and the second uses score corrections in symmetric-root coordinates.
For an initial point $x_0=(m_0,A_0)$, let $q_0:=\mathcal N(m_0,A_0A_0^\top)$ and let $x_\star$ denote the minimizer. Throughout this section, write
\[
\Delta_0:=F(q_0)-F^\star+\frac{\alpha}{2}\|x_0-x_\star\|^2,\qquad \lambda_H:=\frac{L_H}{\alpha^{3/2}}.
\]
Note that after normalizing $\alpha=1$, $\lambda_H$ is simply the Hessian-Lipschitz constant.

\subsection{Finite-difference control variates in Cholesky coordinates}
For $\nu\ne0$, define
\[
D_\nu(m;e_j):=\frac{\nabla V(m+\nu e_j)-\nabla V(m)}{\nu},\qquad D_\nu(m):=\bigl[D_\nu(m;e_1)\ \cdots\ D_\nu(m;e_d)\bigr].
\]
The matrix $D_\nu(m)$ is a first-order approximation of $\nabla^2V(m)$ and is exact whenever $V$ is quadratic. For $\xi\sim\mathcal N(0,I_d)$, let
\[
r_\nu(m,L;\xi):=\nabla V(m+L\xi)-\nabla V(m)-D_\nu(m)L\xi
\]
and define
\[
G^{\mathrm{FD}}(m,L;\xi):=\left(\nabla V(m)+r_\nu(m,L;\xi),\operatorname{low}\left(D_\nu(m)L+r_\nu(m,L;\xi)\xi^\top\right)\right).
\]
For a batch of size $b$, $G_b^{\mathrm{FD}}$ denotes the average of $b$ independent samples using a common $D_\nu(m)$. Thus, $d+1$ gradients are shared across the batch, and one update uses $d+1+b$ first-order oracle calls. The FD estimator $G^{\mathrm{FD}}$ is unbiased for $\nabla E$, and its variance admits a bound proportional to $L_H^2$ (Proposition~\ref{prop:fd-estimator-full}).

To control the variance along the iterates, we restrict the covariance factor to $\|L\|_F\le\sqrt{d/\alpha}$ and define 
\[
\bar h(L):=\begin{cases}-\log\det L,&L\in\mathbb T_{++}^d\text{ and }\|L\|_F\le\sqrt{d/\alpha},\\+\infty,&\text{otherwise}.\end{cases}
\]
This restriction does not change the optimum, since $\|L_\star\|_F^2=\operatorname{tr}(\Sigma_\star)\le d/\alpha$. We then apply APG to $E+\bar h$, using $G_b^{\mathrm{FD}}$ in place of the exact energy gradient. Proposition~\ref{prop:fd-prox-map} (in Appendix~\ref{subsec:fd-prox}) shows that the resulting proximal step reduces to solving a univariate monotone equation. 

\begin{algorithm}[hbtp]
\caption{Cholesky proximal acceleration}
\label{alg:fd-prox}
\begin{algorithmic}[1]
\Require $x_0=(m_0,L_0)$ with $L_0\in\operatorname{dom}\bar h$, batch size $b$, FD scale $\nu$, horizon length $T$
\State  $y_0\gets x_0$
\For{$k=0,\ldots,T-1$}
\State $g_k\gets G_b^{\mathrm{FD}}(y_k)$ 
\State $(m_{k+1},\widetilde L_{k+1})\gets y_k-\eta g_k$
\State $x_{k+1}\gets(m_{k+1},\prox_{\eta\bar h}(\widetilde L_{k+1}))$, $y_{k+1}\gets x_{k+1}+\rho(x_{k+1}-x_k)$
\EndFor
\State \Return $q_T=\mathcal N(m_T,L_TL_T^\top)$
\end{algorithmic}
\end{algorithm}

The fixed stepsize $\eta$ and momentum $\rho$ are specified in Theorem~\ref{thm:fd-prox-full}. The resulting accelerated convergence guarantee is as follows.

\begin{theorem}\label{thm:fd-prox}
Suppose Assumptions~\ref{assump:curvature} and~\ref{assump:hessian-lipschitz} hold, $x_0\in\operatorname{dom}\bar h$, $0<|\nu|\le\alpha^{-1/2}$, and $0<\varepsilon\le\Delta_0$. There is a fixed batch size $b$ such that Algorithm~\ref{alg:fd-prox} satisfies
\[
\mathbb E[F(q_{T_\varepsilon})-F^\star]\le\varepsilon,
\qquad
T_\varepsilon=O\left(\sqrt\kappa\log\frac{2\Delta_0}{\varepsilon}\right),
\qquad 
N_{\nabla V}
=O\left(\left(
d\sqrt\kappa+\frac{\lambda_H^2d^3}{\varepsilon}
\right)\log\frac{2\Delta_0}{\varepsilon}
\right).
\]
\end{theorem}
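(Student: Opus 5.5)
The plan is to combine three ingredients. The first is a variance bound for $G^{\mathrm{FD}}$ along the constrained domain. The second is a standard stochastic APG guarantee for strongly convex composite problems with unbiased, bounded-variance gradients, in the style of \citet{ghadimi2013optimal}. The third is a choice of batch size that balances the bias-free noise floor against $\varepsilon$.

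\textbf{Variance bound.} Unbiasedness follows because $\mathbb E[r_\nu]=\mathbb E[\nabla V(m+L\xi)]-\nabla V(m)$. Gaussian integration by parts gives $\mathbb E[\nabla V(m+L\xi)\xi^\top]=\mathbb E[\nabla^2V(m+L\xi)]L$. For the variance, write
\[
r_\nu=\bigl[\nabla V(m+L\xi)-\nabla V(m)-\nabla^2V(m)L\xi\bigr]+\bigl(\nabla^2V(m)-D_\nu(m)\bigr)L\xi.
\]
By Taylor's theorem with Assumption~\ref{assump:hessian-lipschitz}, the first bracket is at most $\tfrac{L_H}{2}\|L\xi\|^2$. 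Each column of $D_\nu(m)-\nabla^2V(m)$ is at most $\tfrac{L_H}{2}|\nu|$, so $\|D_\nu-\nabla^2V(m)\|_F\le \tfrac{L_H}{2}|\nu|\sqrt d$. Using Gaussian moments $\mathbb E\|L\xi\|^4\lesssim\|L\|_F^4$ and $\mathbb E\|L\xi\|^4\|\xi\|^2\lesssim d\|L\|_F^4$, together with $\|L\|_F^2\le d/\alpha$ and $|\nu|\le\alpha^{-1/2}$, we get
\[
\sigma^2:=\sup_{\operatorname{dom}\bar h}\mathbb E\|G^{\mathrm{FD}}-\nabla E\|^2\lesssim L_H^2\,\frac{d^3}{\alpha^2}.
\]
I take this to be the content of Proposition~\ref{prop:fd-estimator-full}. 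The constraint $\|L\|_F\le\sqrt{d/\alpha}$ is exactly what makes the bound uniform. Note that the extrapolated points $y_k$ may leave $\operatorname{dom}\bar h$. I would handle this either by a variant of APG that queries only convex combinations of feasible points, or by bounding $\|y_k\|$ via $\rho<1$. This is the main technical obstacle: the version of stochastic APG must evaluate gradients only where the variance bound holds. I would first try the Nesterov/Ghadimi--Lan form, in which $y_k$ is a convex combination of two feasible sequences. That keeps $y_k\in\operatorname{dom}\bar h$, since the domain is convex, and reduces Algorithm~\ref{alg:fd-prox} to an equivalent parametrization with constant $\rho$.

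\textbf{Stochastic APG.} By Lemma~\ref{lem:curvature-inheritance}, $E$ is $\alpha$-strongly convex and $\beta$-smooth, and $\bar h$ is closed and convex. Proposition~\ref{prop:fd-prox-map} makes the prox computable. Take $\eta=1/(2\beta)$ and $\rho$ the corresponding constant, and run a Lyapunov argument on $\Phi_k=F(x_k)-F^\star+\tfrac{\alpha}{2}\|z_k-x_\star\|^2$. The standard stochastic estimate sequence then yields
\[
\mathbb E[F(q_k)-F^\star]\le (1-c\kappa^{-1/2})^k\Delta_0+C\frac{\sigma^2}{b\sqrt{\alpha\beta}}.
\]
The noise term arises because errors are weighted by $\eta$ and accumulate geometrically over about $\sqrt\kappa$ steps.

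\textbf{Parameters and counting.} Choose $T_\varepsilon=\lceil C\sqrt\kappa\log(2\Delta_0/\varepsilon)\rceil$ so that the first term is at most $\varepsilon/2$. Choose $b=\lceil 2C\sigma^2/(\sqrt{\alpha\beta}\,\varepsilon)\rceil\lesssim 1+\lambda_H^2 d^3/(\sqrt\kappa\,\varepsilon)$, using $\sqrt{\alpha\beta}=\alpha\sqrt\kappa$ and $L_H^2/\alpha^3=\lambda_H^2$. Each iteration costs $d+1+b$ gradient calls, so
\[
N_{\nabla V}=T_\varepsilon(d+1+b)=O\!\left(\left(d\sqrt\kappa+\frac{\lambda_H^2d^3}{\varepsilon}\right)\log\frac{2\Delta_0}{\varepsilon}\right),
\]
which is the claimed total.
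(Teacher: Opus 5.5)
Your overall structure matches the paper's: the variance bound of Proposition~\ref{prop:fd-estimator-full}, a stochastic accelerated proximal-gradient bound with a noise floor of order $\sigma^2/(b\sqrt{\alpha\beta})$, the choice $b\asymp 1+\lambda_H^2d^3/(\sqrt\kappa\,\varepsilon)$, and the count $N_{\nabla V}=T_\varepsilon(d+1+b)$. The paper proves the APG bound as Lemma~\ref{lem:prox-fixed-batch}, with $\eta=1/(\alpha s^2)$, $s=\sqrt{\max\{\kappa,4\}}$, and floor $\sigma^2/(\alpha s b)$.

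However, the route you say you would try first, for the infeasible extrapolation points, does not work. Algorithm~\ref{alg:fd-prox} performs one prox per step with constant momentum. If you write it in estimate-sequence form $y_k=(x_k+s^{-1}v_k)/(1+s^{-1})$, the auxiliary sequence is forced to be $v_k=s x_k-(s-1)x_{k-1}$. This is an extrapolation by a factor $s\sim\sqrt\kappa$; it is not a prox output and need not lie in $\operatorname{dom}\bar h$. Indeed $y_k=x_k+\rho(x_k-x_{k-1})$ itself can have a negative diagonal entry or leave the Frobenius ball. So there is no equivalent parametrization of Algorithm~\ref{alg:fd-prox} in which $y_k$ is a convex combination of feasible points. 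Making both sequences feasible requires a second prox or projection, i.e.\ a different method (Ghadimi--Lan's scheme, or the two-projection structure of Algorithm~\ref{alg:score-proj}). A guarantee for that method would not prove the theorem as stated.

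Your fallback is exactly what the paper does, and it is a mild issue rather than the main obstacle. Proposition~\ref{prop:fd-estimator-full} holds for every $(m,L)\in\mathbb R^d\times\mathbb T^d$ and depends on $L$ only through $\|L\|_F$. The prox outputs satisfy $\|L_k\|_F\le\sqrt{d/\alpha}$, and $L_k^y=(1+\rho)L_k-\rho L_{k-1}$ with $0\le\rho<1$, so $\|L_k^y\|_F<3\sqrt{d/\alpha}$. Define $\sigma^2$ as a supremum over this enlarged set rather than over $\operatorname{dom}\bar h$; this costs a constant factor of at most $81$. Nothing else in the analysis needs $y_k$ to be feasible. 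By Lemma~\ref{lem:curvature-inheritance}, $E$ is $\alpha$-strongly convex and $\beta$-smooth on the whole ambient space, and $\bar h$ is only evaluated at prox outputs. Finally, impose $b\ge1$ explicitly, since your formula gives $b=0$ when $L_H=0$.
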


The full constants are given in Theorem~\ref{thm:fd-prox-full}. The same update also accepts the second-order estimator $G^{\mathrm{paired}}$, which evaluates the gradient and the Hessian at independent Gaussian samples (Proposition~\ref{prop:hessian-assisted-final} and Corollary~\ref{cor:hessian-assisted-fixed-batch}). The finite-difference estimator can also be combined with a Euclidean projection, with the same asymptotic first-order complexity; see Appendix~\ref{subsec:fd-projected}.

\subsection{Score corrections in symmetric-root coordinates}

The finite-difference method uses only first-order derivatives but requires $d+1$ additional calibration queries per update. We next use Gaussian score corrections to obtain relative-noise bounds without this additional cost.
For $q = \mathcal{N}(m, C^2)$ with $C\in\mathbb S_{++}^d$ and $X := m + C \xi$, $\xi \sim \mathcal{N}(0, I_d)$, we have  $\nabla_x\log q(X)=-C^{-1}\xi$, so the sticking-the-landing correction \citep{roeder2017sticking} gives
\[
G^{\mathrm{STL}}(m,C;\xi):=\left(\nabla V(X)-C^{-1}\xi,\operatorname{sym}\left((\nabla V(X)-C^{-1}\xi)\xi^\top\right)\right).
\]
Using Stein's identity \citep{stein1981estimation,price1958useful} 
\[
\mathbb E[\nabla V(X)\xi^\top]=\mathbb E[\nabla^2V(X)C],
\]
we define the Price--STL estimator~\citep{luu2025stochastic, kim2026stochastic}
\[
G^{\mathrm{Price\text{-}STL}}(m,C;\xi):=\left(\nabla V(X)-C^{-1}\xi,\operatorname{sym}(\nabla^2V(X)C)-C^{-1}\right).
\]
Both estimators are unbiased for the full gradient $\nabla F(m,C)$. To control the inverse factor and bound the estimator moments, we restrict the iterates to
\[
K_C:=\left\{(m,C):m\in\mathbb R^d,\ C\in\mathbb S^d,\ \frac{1}{2\sqrt\beta}I\preceq C\preceq\frac{2}{\sqrt\alpha}I\right\}.
\]
The optimal symmetric factor lies in the interior of $K_C$. For a batch of size $b$, let $G_b$ denote the average of $b$ independent samples of either score estimator $G^{\mathrm{STL}}$ or $G^{\mathrm{Price\text{-}STL}}$. On $K_C$, both estimators have second moments bounded by a multiple of $\|\nabla F\|^2$ plus an additive term proportional to $L_H^2/b$ (Lemma~\ref{lem:score-growth-full}). We apply the projected acceleration scheme below with $s=\sqrt{15\kappa}$ and $\eta=1/(15\beta)$. (Theorem~\ref{thm:score-recovery-full}). See Appendix~\ref{subsec:score} for details.

\begin{algorithm}[hbtp]
\caption{Projected acceleration with STL or Price--STL}
\label{alg:score-proj}
\begin{algorithmic}[1]
\Require $x_0\in K_C$, batch size $b$, estimator $G_b\in\{G_b^{\mathrm{STL}},G_b^{\mathrm{Price\text{-}STL}}\}$, horizon length $T$
\State  $v_0\gets x_0$, $y_0\gets x_0$
\For{$k=0,\ldots,T-1$}
\State $g_k\gets G_b(y_k)$
\State $x_{k+1}\gets P_{K_C}(y_k-\eta g_k)$
\State $v_{k+1}\gets P_{K_C}((1-s^{-1})v_k+s^{-1}y_k-(\alpha s)^{-1}g_k)$
\State $y_{k+1}\gets(x_{k+1}+s^{-1}v_{k+1})/(1+s^{-1})$
\EndFor
\State \Return $q_T=\mathcal N(m_T,C_T^2)$
\end{algorithmic}
\end{algorithm}

Both projected sequences remain in $K_C$, and each query point is their convex combination. Hence the inverse factor and both score estimators remain well-defined throughout the run.

\begin{theorem}\label{thm:score-recovery-clean}
Suppose Assumptions~\ref{assump:curvature} and~\ref{assump:hessian-lipschitz} hold, $x_0\in K_C$, and $0<\varepsilon\le\Delta_0$. For $G^{\mathrm{STL}}$, there is a fixed batch size such that Algorithm~\ref{alg:score-proj} attains
\[
\mathbb E[F(q_{T_\varepsilon})-F^\star]\le\varepsilon,\qquad T_\varepsilon=O\left(\sqrt\kappa\log\frac{2\Delta_0}{\varepsilon}\right),
\qquad
N_{\nabla V}=O\left(\left(d\sqrt\kappa+\frac{\lambda_H^2d^3}{\varepsilon}\right)\log\frac{2\Delta_0}{\varepsilon}\right).
\]
\end{theorem}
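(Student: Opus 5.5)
We combine three ingredients: a relative-noise (multiplicative plus additive) second-moment bound for $G_b^{\mathrm{STL}}$ on $K_C$, strong convexity and smoothness of $F$ restricted to $K_C$, and a stochastic analysis of projected accelerated gradient under relative noise in the spirit of \citet{ghadimi2013optimal}. The choice $s=\sqrt{15\kappa}$, $\eta=1/(15\beta)$ is exactly the damping that absorbs a multiplicative noise term of constant size.

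\textbf{Step 1: conditioning of $F$ on $K_C$.} On $K_C$ we have $C\succeq(2\sqrt\beta)^{-1}I$. Hence the Hessian of $-\log\det C$ in symmetric coordinates, $U\mapsto C^{-1}UC^{-1}$ symmetrized, is bounded by $4\beta$. By Lemma~\ref{lem:curvature-inheritance}, $F=E-\log\det C$ is therefore $\alpha$-strongly convex and $5\beta$-smooth on $K_C$. Since $x_\star\in\operatorname{int}K_C$, we get $\nabla F(x_\star)=0$, and $\|\nabla F(y)\|^2\le 10\beta\,(F(y)-F^\star)$ on $K_C$.

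\textbf{Step 2: variance bound.} Write $\nabla V(X)-C^{-1}\xi=\nabla V(X)-\nabla V(m)-\bar H C\xi+(\bar H C-C^{-1})\xi+\nabla V(m)$, with $\bar H:=\mathbb E\nabla^2V(X)$.
\begin{itemize}[leftmargin=2em,itemsep=1pt,topsep=3pt]
\item By the Hessian-Lipschitz Taylor remainder, the nonlinear part has second moment $O(L_H^2\,\mathbb E\|C\xi\|^4)=O(L_H^2d^2/\alpha^2)$. The $\xi\xi^\top$ factor in the $C$-block contributes an extra factor $d$.
\item The linear parts $\nabla V(m)$ and $(\bar HC-C^{-1})\xi$ have second moments controlled by $\|\nabla_mF\|^2$ and $\|\nabla_CF\|^2$. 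Use $\nabla_CF=\operatorname{sym}(\bar HC)-C^{-1}$ up to an $L_H$ error, together with $\|C^{\pm1}\|_{\mathrm{op}}$ bounds on $K_C$, to get a constant multiplicative factor.
\end{itemize}
This yields, as in Lemma~\ref{lem:score-growth-full},
\[
\mathbb E\|G_b-\nabla F\|^2\le \frac{c_1}{b}\|\nabla F(y)\|^2+\frac{c_2\,L_H^2d^3}{b\,\alpha^{2}}.
\]
With $b\ge c_1$, Step 1 gives a multiplicative term at most $10\beta(F(y)-F^\star)$ and an additive term $\sigma_b^2:=c_2L_H^2d^3/(b\alpha^2)$.

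\textbf{Step 3: Lyapunov recursion.} Use the potential $\Phi_k:=F(x_k)-F^\star+\frac{\alpha}{2}\|v_k-x_\star\|^2$.
\begin{itemize}[leftmargin=2em,itemsep=1pt,topsep=3pt]
\item Use the standard estimate-sequence argument for projected AGD: the three-point inequality for both projections, smoothness for the $x$-step, and strong convexity at $y_k$. Projections are nonexpansive and $x_\star\in K_C$, so they only help.
\item Taking conditional expectations, the noise enters through $\eta\,\mathbb E\|g_k-\nabla F(y_k)\|^2$ and $(\alpha s^2)^{-1}\mathbb E\|g_k-\nabla F(y_k)\|^2$.
\item The relative part $\lesssim\beta(F(y_k)-F^\star)$ must be dominated by the negative $F(y_k)$ terms the analysis produces. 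The constant 15 is chosen so that a fraction of those terms survives.
\end{itemize}
The target recursion is
\[
\mathbb E\Phi_{k+1}\le(1-c/\sqrt\kappa)\,\mathbb E\Phi_k+c'\sigma_b^2/\beta .
\]
Unrolling gives $\mathbb E[F(q_T)-F^\star]\le (1-c/\sqrt\kappa)^T\Delta_0+c''\sqrt\kappa\,\sigma_b^2/\beta$.

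\textbf{Step 4: parameter choice.} Take $T_\varepsilon=O(\sqrt\kappa\log(2\Delta_0/\varepsilon))$ so the first term is at most $\varepsilon/2$. Choose
\[
b\asymp\max\Bigl\{c_1,\ \frac{\sqrt\kappa\,L_H^2d^3}{\beta\alpha^2\varepsilon}\Bigr\}=\max\Bigl\{c_1,\ \frac{\lambda_H^2d^3}{\sqrt\kappa\,\varepsilon}\Bigr\}.
\]
Then $N_{\nabla V}=bT_\varepsilon=O\bigl((\sqrt\kappa+\lambda_H^2d^3/\varepsilon)\log(2\Delta_0/\varepsilon)\bigr)$, which is within the claimed bound.

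\textbf{Main obstacle.} The hard part is Step 3: making the multiplicative noise term, evaluated at the extrapolated point $y_k$, compatible with acceleration. The standard projected AGD proof uses the exact gradient in two places, and the relative-noise term must be absorbed by $-(F(y_k)-F^\star)$ contributions. If those do not appear with enough slack, I would instead bound $\|\nabla F(y_k)\|^2\le 2\|\nabla F(y_k)-\nabla F(x_\star)\|^2$ via co-coercivity. The resulting $F(y_k)-F^\star-\langle\nabla F(x_\star),y_k-x_\star\rangle$ Bregman term is exactly what the strong convexity lower bound at $y_k$ leaves unused, with a factor proportional to $\eta\beta$. That is where the small stepsize $1/(15\beta)$ is spent.
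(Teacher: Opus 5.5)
Your skeleton matches the paper's: a relative-noise moment bound on $K_C$, an accelerated projected recursion, and a batch chosen to kill the additive term. But Step~3 has a genuine gap, exactly at the obstacle you flag. In the estimate-sequence bookkeeping for Algorithm~\ref{alg:score-proj}, the $F(y_k)$ terms cancel exactly. They enter with coefficient $+1$ from the smoothness bound on $F(x_{k+1})$, and with $-(1-s^{-1})$ and $-s^{-1}$ from the strong-convexity lower bounds toward $x_k$ and $x_\star$. So there are no surplus negative $F(y_k)-F^\star$ terms of order one to absorb the noise.

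Your co-coercivity fallback gains only $\frac{1}{s}\cdot\frac{1}{10\beta}\|\nabla F(y_k)\|^2$, because it rides on the $s^{-1}$-weighted comparison with $x_\star$. The relative noise, by contrast, enters as about $\eta\frac{c_1}{b}\|\nabla F(y_k)\|^2$ with $\eta=1/(15\beta)$. Balancing the two forces $b\gtrsim c_1\sqrt\kappa$, hence $N_{\nabla V}\gtrsim d\kappa\log(2\Delta_0/\varepsilon)$, which is no acceleration at all.

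The only order-$\eta$ slack in $\|\nabla F(y_k)\|^2$ is the descent of the gradient step at $y_k$. For a projected step, however, the three-point inequality yields descent only in the gradient mapping $(y_k-x_{k+1})/\eta$. That can be far smaller than $\|\nabla F(y_k)\|$ when the constraint is active. It also does not match the $-\frac{\eta}{2}\|g_k\|^2$ cost of the $v$-step, which is built from the raw $g_k$. So ``projections only help'' holds for the distance terms, but not for this one.

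The missing idea is the paper's global extension $\widetilde F=E+\sum_i\psi(\lambda_i(C))$. It is $\alpha$-strongly convex and $5\beta$-smooth on all of $\mathbb R^d\times\mathbb S^d$, agrees with $F$ and $\nabla F$ on $K_C$, and satisfies $\widetilde F(P_{K_C}z)\le\widetilde F(z)$. The last property comes from Gaussian integration by parts in the eigenbasis of $C$: clipping an eigenvalue into $[1/(2\sqrt\beta),2/\sqrt\alpha]$ never increases $\widetilde F$. Then $\widetilde F(x_{k+1})\le\widetilde F(y_k-\eta g_k)$ and the unconstrained descent lemma give $\mathbb E[\widetilde F(x_{k+1})\mid\mathcal F_k]\le\widetilde F(y_k)-\eta\|\nabla\widetilde F(y_k)\|^2+\frac{5\beta\eta^2}{2}\mathbb E[\|g_k\|^2\mid\mathcal F_k]$. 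With $5\beta\eta=\frac13$ (this is where $15=3\cdot5$ comes from) and relative noise at most $\frac12\|\nabla\widetilde F(y_k)\|^2$, this absorbs $\frac{\eta}{2}(1+5\beta\eta)\mathbb E\|g_k\|^2$ up to $\eta\sigma^2$ (Lemma~\ref{lem:score-proj-fixed-batch}).

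The extension is also needed for your Step~1 claim $\|\nabla F(y)\|^2\le10\beta(F(y)-F^\star)$. Its proof evaluates $F$ at $y-\nabla F(y)/(5\beta)$, which may lie outside $K_C$.

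Separately, Step~2's multiplicative constant is not dimension-free. Even for quadratic $V$, the covariance block contains $\operatorname{sym}(\nabla V(m)\xi^\top)$, whose second moment is $\frac{d+1}{2}\|\nabla_mF\|^2$. So $c_1=\Theta(d)$ (the paper has $4d+16$), and $b\ge8d+32$ is required. That is the source of the $d\sqrt\kappa$ term. Your claimed $O\bigl((\sqrt\kappa+\lambda_H^2d^3/\varepsilon)\log(2\Delta_0/\varepsilon)\bigr)$ therefore does not follow, although the theorem's bound does once $b\gtrsim d$.
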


Table~\ref{tab:stochastic-rates} summarizes the fixed batch sizes and oracle complexities.
See Theorem~\ref{thm:score-recovery-full} for the full second-order guarantees.

\begin{table}[hbtp]
\centering
\footnotesize
\renewcommand{\arraystretch}{1.5}
\setlength{\tabcolsep}{5pt}
\begin{tabular}{@{\hspace{10pt}}lccc@{\hspace{10pt}}}
\toprule
\textbf{Method} & \textbf{Batch size} & \textbf{Gradient queries} & \textbf{HVP queries}\\
\midrule
\mbox{FD + Alg.~\ref{alg:fd-prox}} & $1+\dfrac{\lambda_H^2d^3}{\varepsilon\sqrt\kappa}$
& $d\sqrt\kappa+\dfrac{\lambda_H^2d^3}{\varepsilon}$ & $0$\\[0.5em]
\mbox{STL + Alg.~\ref{alg:score-proj}} & $d+\dfrac{\lambda_H^2d^3}{\varepsilon\sqrt\kappa}$
& $d\sqrt\kappa+\dfrac{\lambda_H^2d^3}{\varepsilon}$ & $0$\\[0.5em]
\mbox{Paired + Alg.~\ref{alg:fd-prox}} & $1+\dfrac{\lambda_H^2d^2}{\varepsilon\sqrt\kappa}$
& $\sqrt\kappa+\dfrac{\lambda_H^2d^2}{\varepsilon}$
& $d\sqrt\kappa+\dfrac{\lambda_H^2d^3}{\varepsilon}$\\[0.5em]
\mbox{Price--STL + Alg.~\ref{alg:score-proj}} & $1+\dfrac{\lambda_H^2d^2}{\varepsilon\sqrt\kappa}$
& $\sqrt\kappa+\dfrac{\lambda_H^2d^2}{\varepsilon}$
& $d\sqrt\kappa+\dfrac{\lambda_H^2d^3}{\varepsilon}$\\
\bottomrule
\end{tabular}
\caption{Fixed-batch guarantees. All four methods (estimator, algorithm) attain expected objective gap at most $\varepsilon$ in $T_\varepsilon=O\left(\sqrt\kappa\log(2\Delta_0/\varepsilon)\right)$ iterations. Entries give orders in $O(\cdot)$ notation; the two oracle-count columns omit the common factor $\log(2\Delta_0/\varepsilon)$.}
\label{tab:stochastic-rates}
\end{table}

\paragraph{Interpretation.}
The two first-order methods use gradient evaluations differently: FD uses $d+1$ calibration gradients per update to reduce variance, while STL avoids calibration but requires a batch size proportional to $d$ (Table~\ref{tab:stochastic-rates}).

When $\lambda_H^2d^2\lesssim\varepsilon\sqrt\kappa$, the variance-dependent terms are dominated by the accelerated terms in all oracle-complexity bounds in Table~\ref{tab:stochastic-rates}. Second-order estimators then need only constant batches, and the first-order ones batches of size $O(d)$. This complements Section~\ref{sec:lb}: curvature bounds alone do not permit uniform acceleration, while controlled Hessian variation provides a guarantee for acceleration.
 
We have not optimized the dimension dependence.
Existing analyses of full-covariance GVI give iteration bounds with explicit factors linear in $d$~\citep{domke2023provable,kim2024linear,kim2026stochastic}.
For Price--STL, a sharper variance analysis under additional structural assumptions also yields a gradient-query sampling term linear in $d$ (Appendix~\ref{app:dimension}).

\section{Experiments}\label{sec:experiments}

In this section, we present experimental results for our accelerated methods. 
Each method is labeled with the estimator, proximal/projected update (prox/proj), and accelerated/nonaccelerated variant (acc./nonacc.) that is used.
We refer to Appendix~\ref{app:experiments} for full experimental details.

\paragraph{Gaussian targets.}
We compare Algorithm~\ref{alg:score-proj} using $G_b^{\mathrm{STL}}$ with its nonaccelerated projected-gradient counterpart, at $b=64$.
We use the standard normal distribution initialization $q_0 = \mathcal{N}(0,I_d)$, and vary the condition number $\kappa$ from $10$ to $3000$. 
Figure~\ref{fig:exp-gaussian} confirms the predicted scaling: the iteration count grows as $\sqrt\kappa$ with momentum and as $\kappa$ without. 
Appendix~\ref{exp:gaussian} also reports the samplewise-exact Paired benchmark and comparisons with all seventeen methods at $\kappa=300$.

\begin{figure}[!htbp]
\centering
\includegraphics[width=\linewidth]{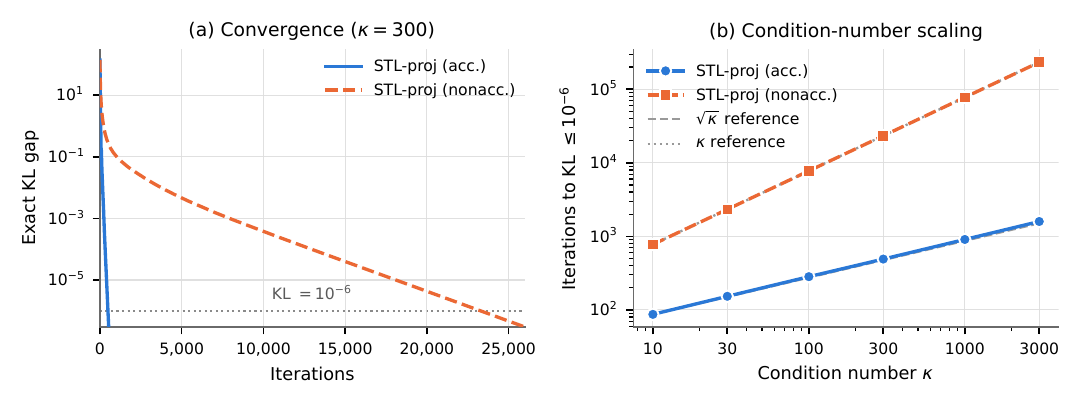}
\caption{\textbf{Stochastic acceleration on Gaussian targets.} STL-proj with $b=64$: (a) mean KL gap at $\kappa=300$ with $20$--$80$ percentile bands over five seeds; (b) updates to mean KL $\le10^{-6}$, with $\sqrt\kappa$ and $\kappa$ reference lines.}
\label{fig:exp-gaussian}
\end{figure}

\paragraph{Non-Gaussian targets.}
We construct graph-probit posteriors~\citep{miller2020graphprobit,bertozzi2018uncertainty} from five binary-classification datasets in OpenML \citep{vanschoren2014openml}. Standardized predictors define a $10$-nearest-neighbor graph with normalized Laplacian $L_G$. We use one latent score per observation and Gaussian prior precision $Q_\tau=I_n+\tau L_G$, with chosen hyperparameter $\tau=49$.
Graph-probit allows conditioning to worsen without increasing
Hessian variation: increasing $\tau$ raises
$\kappa=2+\tau\lambda_{\max}(L_G)$, while $L_H$ remains
independent of $\tau$.
Refer to Appendix~\ref{exp:targets} for further details. 

We compare seventeen first- and second-order methods, including five baselines~\citep{domke2023provable,diao2023forward,luu2025stochastic,kim2026stochastic}, from $\mathcal N(0,I_d)$ for $600$ updates with five seeds. 
FD and paired use proximal Algorithm~\ref{alg:fd-prox} or projected Algorithm~\ref{alg:fd-proj}, and STL and Price-STL use Algorithm~\ref{alg:score-proj}. 
The matching variants share the estimator, stepsize, and derivative cost per update. Appendix~\ref{exp:graph-probit-results} also compares methods by gradient-query cost. 
Objective gaps use deterministic quadrature and a common high-accuracy numerical reference $F_{\mathrm{ref}}\approx F^\star$ (Appendix~\ref{exp:evaluation}).

\vspace{-1em}
\begin{figure}[!t]
\centering
\includegraphics[width=\linewidth]{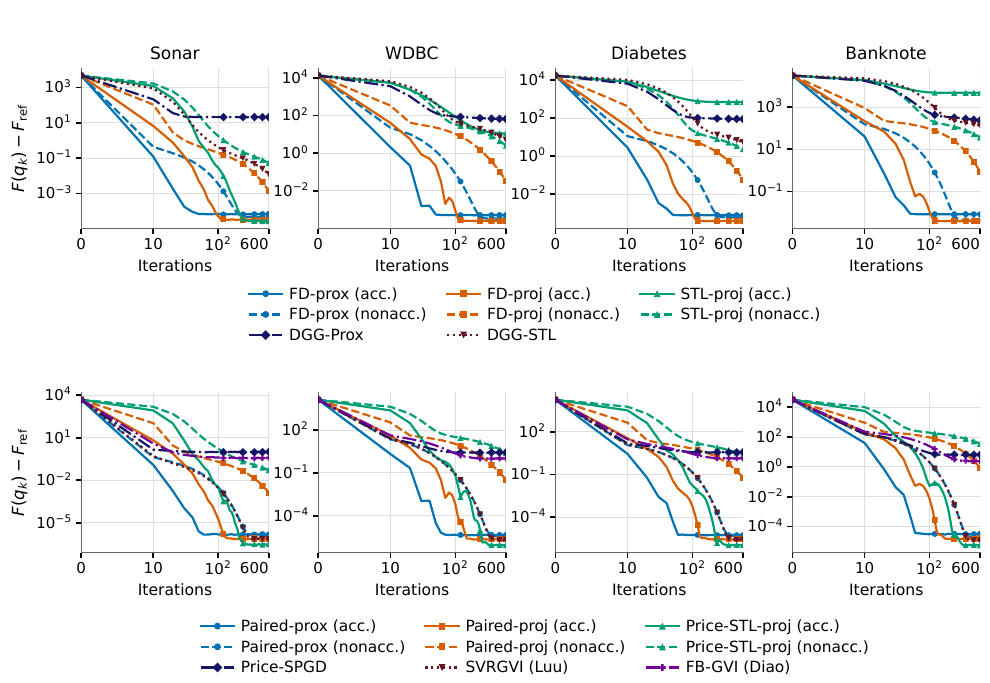}
\caption{\textbf{Convergence on graph-probit posteriors.} The top row compares first-order methods and the bottom row compares second-order methods, all with batch size $64$. Curves are five-seed mean objective gaps from $\mathcal N(0,I_d)$, with $20$--$80$ percentile bands. Solid and dashed lines distinguish accelerated and nonaccelerated variants within each matched pair.
Appendix~\ref{exp:graph-probit-results} reports comparisons across all five datasets.
}
\label{fig:exp-openml}
\end{figure}

\paragraph{Results.}
At $b=64$, every accelerated variant except STL-proj reaches $10^{-5}$ of the initial mean objective gap before its nonaccelerated counterpart on all five datasets (Table~\ref{tab:exp-probit-hitting}).
Consistent with its dimension-dependent batch requirement, 
momentum can hurt STL-proj at small batches.
At $b=256$, it reaches the $1\%$ threshold sooner and attains smaller final gaps than its nonaccelerated counterpart on all five datasets
(Appendix~\ref{exp:graph-probit-results}).

\section{Conclusion} \label{sec:conclusion}

This work takes a first step toward the theoretical foundations of momentum-based stochastic acceleration in Gaussian variational inference. Exact Gaussian expectations permit classical $\sqrt{\kappa}$ acceleration, but, perhaps surprisingly, curvature bounds alone cannot ensure uniform acceleration under pointwise access. This contrasts with recent progress on accelerated sampling for strongly log-concave targets~\citep{altschuler2025ballistic,chen2026exact,lu2026windowed}. 
Our variance-reduced stochastic methods recover accelerated guarantees under sufficiently small Hessian variation, and experiments show empirical gains on real-data posteriors.
Future work includes identifying which structures determine stochastic acceleration, and comparing systematically with geometry-aware methods such as natural gradient descent, which are subject to the same lower bound yet often perform well in practice, suggesting that they exploit different favorable structure.

\clearpage
\section*{Disclosure of AI assistance}
In this work, we used AI tools (GPT-5.6 Sol and GPT-6 Astra) throughout the research.
The research questions, the mathematical claims, and the main ideas were developed by the authors. For the theoretical results in Section~\ref{sec:lb} and Section~\ref{sec:stochastic}, AI tools assisted in exploring possible proof strategies and checking the validity of individual proof steps. They were also used to search for examples presented in Section~\ref{app:dimension},
For experiments, AI tools assisted in drafting code for implementing the algorithms suggested in the paper, and the authors reviewed it. 
AI tools also assisted with manuscript drafting and proofreading. 
The authors reviewed and checked the mathematical arguments and examined the experimental results. We take full responsibility for the final content of this work.

\appendix

\section{Omitted proofs for Section~\ref{sec:intro} and Section~\ref{sec:exact}}\label{app:intro}

We first justify differentiating under the expectation. Recall
\[
E(m,A):=\mathbb E_\xi[V(m+A\xi)],\qquad \xi\sim\mathcal N(0,I_d).
\]
Assumption~\ref{assump:curvature} gives $\|\nabla^2V(x)\|_{\mathrm{op}}\le\beta$, and hence
\[
\|\nabla V(x)\|\le\|\nabla V(0)\|+\beta\|x\|,\qquad
|V(x)|\le |V(0)|+\|\nabla V(0)\|\|x\|+\frac{\beta}{2}\|x\|^2.
\]
On any bounded neighborhood of $(m,A)$, $V(m+A\xi)$ and its first two derivatives with respect to $(m,A)$ are therefore bounded in absolute value by $C(1+\|\xi\|^2)$ for some constant $C$. Since this bound is Gaussian-integrable, dominated convergence justifies differentiating twice under the expectation. We can now prove Lemma~\ref{lem:curvature-inheritance}.

\begin{proof}[Proof of Lemma~\ref{lem:curvature-inheritance}]
Fix $\theta=(m,A)$ and a direction $(u,U)$, and define
\[
g(s):=E(m+su,A+sU)=\mathbb{E}_{\xi}\left[V\bigl(m+A\xi+s(u+U\xi)\bigr)\right].
\]
The second derivative of the integrand is bounded in absolute value by $\beta\|u+U\xi\|^2$, which is integrable. Thus
\[
g''(0)=\mathbb{E}_{\xi}\left[(u+U\xi)^\top\nabla^2V(m+A\xi)(u+U\xi)\right].
\]
Applying Assumption~\ref{assump:curvature} pointwise gives
\[
\alpha\mathbb{E}_{\xi}\|u+U\xi\|^2\le g''(0)\le\beta\mathbb{E}_{\xi}\|u+U\xi\|^2.
\]
Since $\mathbb{E}[\xi]=0$ and $\mathbb{E}[\xi\xi^\top]=I_d$,
\[
\mathbb{E}_{\xi}\|u+U\xi\|^2=\|u\|^2+\|U\|_F^2=\|(u,U)\|^2.
\]
Because $g''(0)=\nabla^2E(\theta)[(u,U),(u,U)]$, the result follows.
\end{proof}

We then show the explicit proximal map of the entropy term used in Section~\ref{sec:exact}.

\begin{proposition}
\label{prop:logdet-prox}
For $\eta>0$, let $\phi_\eta(y):=\frac{y+\sqrt{y^2+4\eta}}{2}.$ If $\mathcal A=\mathbb T^d$, then $\prox_{\eta h}$ leaves the strictly lower-triangular entries unchanged and replaces each diagonal entry $Y_{ii}$ by $\phi_\eta(Y_{ii})$. If $\mathcal A=\mathbb S^d$ and $Y=Q\operatorname{diag}(\lambda_1,\ldots,\lambda_d)Q^\top$, then $ \prox_{\eta h}(Y) = Q\operatorname{diag}\bigl( \phi_\eta(\lambda_1),\ldots,\phi_\eta(\lambda_d) \bigr)Q^\top. $ 
\end{proposition}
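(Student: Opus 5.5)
The plan is to compute $\prox_{\eta h}(Y)=\arg\min_{A\in\mathcal A}\{\frac{1}{2\eta}\|A-Y\|_F^2-\log\det A\}$ directly in each case, with $h=+\infty$ off $\mathcal A_{++}$. Since $h$ is proper, closed and convex and the quadratic is strongly convex on $\mathcal A$, the minimizer exists and is unique. It therefore suffices to exhibit a point in $\mathcal A_{++}$ satisfying the first-order optimality condition.

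For the Cholesky case $\mathcal A=\mathbb T^d$, I would use the fact that $\det A=\prod_i A_{ii}$ for lower-triangular $A$. The objective then separates into a sum of independent terms.
\begin{itemize}
\item The strictly lower entries contribute $\frac{1}{2\eta}(A_{ij}-Y_{ij})^2$, which is minimized at $A_{ij}=Y_{ij}$.
\item Each diagonal entry contributes the scalar problem $\frac{1}{2\eta}(a-Y_{ii})^2-\log a$ over $a>0$.
\end{itemize}
For the scalar problem, the stationarity condition $a-y-\eta/a=0$ becomes $a^2-ya-\eta=0$. Its unique positive root is $\phi_\eta(y)$; the other root is negative because the product of the roots is $-\eta<0$. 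Strict convexity of the scalar objective on $(0,\infty)$ makes this root the minimizer.

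For the symmetric case $\mathcal A=\mathbb S^d$, the gradient of $-\log\det A$ on $\mathbb S_{++}^d$ under the Frobenius inner product is $-A^{-1}$. The optimality condition is therefore $A-Y-\eta A^{-1}=0$ with $A\succ0$. I would take the candidate $A=Q\operatorname{diag}(\phi_\eta(\lambda_i))Q^\top$, which is positive definite because $\phi_\eta>0$. This candidate commutes with $Y$, so the matrix equation reduces eigenvalue by eigenvalue to the scalar equation $\phi_\eta(\lambda_i)-\lambda_i-\eta/\phi_\eta(\lambda_i)=0$, which was already verified above. Convexity of the objective then shows that this stationary point is the global minimizer, and uniqueness of the prox shows the answer does not depend on the choice of eigendecomposition.

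The only mildly delicate point is making sure stationarity implies minimality over all of $\mathcal A$, not just over $\mathcal A_{++}$. This follows because the objective is $+\infty$ off $\mathcal A_{++}$, the set $\mathcal A_{++}$ is open in $\mathcal A$, and the objective is convex and differentiable there. Consequently, a zero gradient at an interior point of the domain is a global minimum.
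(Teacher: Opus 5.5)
Your proof is correct and follows essentially the same route as the paper. In the Cholesky case you separate entrywise, and in the symmetric case you use the first-order condition $A-Y-\eta A^{-1}=0$, which reduces to the scalar quadratic $\ell^2-\lambda\ell-\eta=0$. The only difference is direction: you verify that the candidate $Q\operatorname{diag}(\phi_\eta(\lambda_i))Q^\top$ is stationary and invoke convexity, whereas the paper starts from the minimizer and deduces that it commutes with $Y$. Your direction makes independence from the choice of $Q$ (when $Y$ has repeated eigenvalues) automatic via uniqueness of the prox.
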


\begin{proof}
Suppose first that $\mathcal A=\mathbb T^d$. Since
$-\log\det A=-\sum_i\log A_{ii}$, the proximal objective separates over
the entries. Its strictly lower-triangular entries are minimized at
$A_{ij}=Y_{ij}$, while each diagonal entry solves
\[
\min_{\ell>0}
\left\{
-\log\ell+\frac{(\ell-Y_{ii})^2}{2\eta}
\right\}.
\]
The stationarity condition gives
$\ell^2-Y_{ii}\ell-\eta=0$, whose unique positive solution is
$\ell=\phi_\eta(Y_{ii})$.

Now suppose that $\mathcal A=\mathbb S^d$. The objective is strictly
convex, coercive, and diverges at the boundary of $\mathbb S_{++}^d$,
so it has a unique minimizer $A^+$. Its first-order condition is
\[
-(A^+)^{-1}+\frac{A^+-Y}{\eta}=0,
\]
and hence $Y=A^+-\eta(A^+)^{-1}$. Thus, $Y$ and $A^+$ are simultaneously
diagonalizable. Writing
\[
Y=Q\operatorname{diag}(\lambda_1,\ldots,\lambda_d)Q^\top,
\qquad
A^+=Q\operatorname{diag}(\ell_1,\ldots,\ell_d)Q^\top,
\]
we obtain $\ell_i^2-\lambda_i\ell_i-\eta=0$. Its unique positive solution
is $\ell_i=\phi_\eta(\lambda_i)$, which proves the result.
\end{proof}

The following proof establishes the convergence guarantee of Theorem~\ref{thm:exact-apg}.

\begin{proof}[Proof of Theorem~\ref{thm:exact-apg}]
We first verify that every update is well defined. By Lemma~\ref{lem:curvature-inheritance}, $E$ is $\alpha$-strongly convex and $\beta$-smooth on the entire linear space $\mathbb{R}^d\times\mathcal{A}$. Hence $\nabla E(y_k)$ is well defined even when the extrapolated factor in $y_k$ does not belong to $\mathcal{A}_{++}$. Proposition~\ref{prop:logdet-prox} shows that the subsequent proximal step returns a factor in $\mathcal{A}_{++}$. Starting from $x_0\in\mathbb{R}^d\times\mathcal{A}_{++}$, induction therefore shows that every accepted iterate $x_k$ belongs to $\mathbb{R}^d\times\mathcal{A}_{++}$.

The function $h$ is proper, closed, and convex. Therefore, the standard guarantee for accelerated proximal gradient with step size $1/\beta$ and momentum $\rho=\tfrac{1-\sqrt{\alpha/\beta}}{1+\sqrt{\alpha/\beta}}$
gives \citep{nesterov2013gradient}
\[
F(x_k)-F(x_\star)\le(1-\kappa^{-1/2})^k\left(F(x_0)-F(x_\star)+\frac{\alpha}{2}\|x_0-x_\star\|^2\right).
\]

For either $\mathcal{A}=\mathbb{T}^d$ or $\mathcal{A}=\mathbb{S}^d$, the map $A\mapsto AA^\top$ is a bijection from $\mathcal{A}_{++}$ onto $\mathbb{S}_{++}^d$. Thus minimizing in factor coordinates is equivalent to minimizing over $\mathcal{G}_d$, and
\[
F(x_k)=F(q_k),\qquad F(x_\star)=F^\star.
\]
Substitution into the preceding estimate proves the claimed contraction.

Finally, $1-t\le e^{-t}$ gives
\[
(1-\kappa^{-1/2})^{T_\varepsilon}\Delta_0\le e^{-T_\varepsilon/\sqrt\kappa}\Delta_0\le\varepsilon.
\]
Each update makes exactly one exact energy-gradient oracle call. Hence both the number of updates and the number of oracle calls equal $T_\varepsilon$, which proves the stated complexity bounds.
\end{proof}

\section{Proofs for Section~\ref{sec:lb}}\label{app:lb}

We provide the technical details of the construction and proof outlined in Section~\ref{sec:lb}. 

Throughout this section, all asymptotic notation is with respect to $\kappa\to\infty$. The symbols $c,C>0$ denote constants whose values may change from line to line; the fixed constant $C$ appearing in Theorem~\ref{thm:lb-main} will be chosen at the end of the proof. Each occurrence of ``for all sufficiently large $\kappa$'' means that the corresponding statement holds beyond a finite threshold depending only on constants fixed up to that point. At the end of the proof, $\kappa_0$ is chosen larger than all of the finitely many thresholds introduced.

The hidden direction $v$ is uniformly distributed on
$\mathbb S^{d-1}$ and is independent of the algorithm's
randomness. We write $\mathbb P_v$ for probability over
$v$ with the algorithm's randomness fixed, and
$\mathbb P_{v,\mathrm{alg}}$ for the joint probability.

For sufficiently large $\kappa$, define $R=R(\kappa)>0$ as the unique solution of
\[
\frac{\kappa-1}{2}\mathbb E[(G-R)_+^2]=1,\qquad G\sim\mathcal N(0,1).
\]
Indeed, $R\mapsto\mathbb E[(G-R)_+^2]$ is continuous and strictly decreasing,
\[
\frac{\kappa-1}{2}\mathbb E[(G-0)_+^2]=\frac{\kappa-1}{4}>1
\]
for all sufficiently large $\kappa$, while
\[
\lim_{R\to\infty}\frac{\kappa-1}{2}\mathbb E[(G-R)_+^2]=0.
\]
Hence the required positive solution exists and is unique. Moreover, this definition implies $R\to\infty$ as $\kappa\to\infty$. We use the standard Gaussian tail expansion
\[
\mathbb E[(G-R)_+^2]=\frac{2}{\sqrt{2\pi}R^3}e^{-R^2/2}\left(1+O(R^{-2})\right).
\]
Substituting this into the defining equation for $R$ gives
\[
\frac{\kappa-1}{\sqrt{2\pi}R^3}e^{-R^2/2}\left(1+O(R^{-2})\right)=1.
\]
Taking logarithms yields
\[
R^2=2\log\kappa-6\log R+O(1).
\]
Since $R\to\infty$, this implies
$R^2\sim 2\log\kappa$, and hence
$\log R=\frac12\log\log\kappa+O(1)$.
Substituting back gives
\[
R^2=2\log\kappa-3\log\log\kappa+O(1).
\]
Now, set
\[
d:=\lceil R^4\rceil,\qquad a:=(1+R^{-1/2})\sqrt d,
\]
and let
\[
B:=\overline B_2(0,a):=\{x\in\mathbb R^d:\|x\|\le a\}
\]
be the closed Euclidean ball of radius $a$ centered at the origin. Then
\[
d=\Theta(R^4)=\Theta((\log\kappa)^2),\qquad a-\sqrt d=R^{3/2}+o(R^{3/2}).
\]
We write $\gamma:=\mathcal N(0,I_d)$ for simplicity. For $v\in\mathbb S^{d-1}$, define
\[
H_v:=\{x:v^\top x\le R\},\qquad K_v:=B\cap H_v,
\]
and, for every nonempty closed convex set $K$, let
\[
e_K(x):=\frac12\operatorname{dist}(x,K)^2.
\]
Consider the potentials
\[
V_v^0(x):=\frac12\|x\|^2+(\kappa-1)e_{K_v}(x),\qquad V_B^0(x):=\frac12\|x\|^2+(\kappa-1)e_B(x).
\]
For every nonempty closed convex set $K$,
\[
\nabla e_K(x)=x-\proj_Kx,\qquad \|\nabla e_K(x)-\nabla e_K(y)\|\le\|x-y\|.
\]
Since $e_K$ is convex and has a $1$-Lipschitz gradient, both $V_v^0$ and $V_B^0$ are $1$-strongly convex and have gradient Lipschitz constant at most $\kappa$. The following projection identities describe precisely where the hidden target $V_v^0$ agrees with the reference target $V_B^0$ and will be used repeatedly.

\begin{lemma}\label{lem:lb-projection-identity}
For every $x\in\mathbb R^d$,
\[
\proj_{K_v}x=\proj_Bx\quad\Longleftrightarrow\quad v^\top\proj_Bx\le R.
\]
Moreover,
\[
\operatorname{dist}(x,H_v)\le\operatorname{dist}(x,K_v)\le\operatorname{dist}(x,H_v)+\operatorname{dist}(x,B).
\]
\end{lemma}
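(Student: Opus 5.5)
The plan is to use the variational characterization of projections onto closed convex sets: $p=\proj_K x$ if and only if $p\in K$ and $\langle x-p,\,y-p\rangle\le0$ for all $y\in K$. For the equivalence, I would first take the forward direction. If $\proj_{K_v}x=\proj_Bx$, then $\proj_Bx\in K_v\subseteq H_v$, and hence $v^\top\proj_Bx\le R$.

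For the converse, suppose $p:=\proj_Bx$ satisfies $v^\top p\le R$. Then $p\in B\cap H_v=K_v$. The obtuse-angle inequality $\langle x-p,y-p\rangle\le0$ holds for every $y\in B$, so in particular it holds for every $y\in K_v\subseteq B$. By the characterization above, $p=\proj_{K_v}x$. An equivalent metric argument is also available: $p$ minimizes distance to $x$ over the larger set $B$ and lies in the smaller set $K_v$, so it is also the unique minimizer over $K_v$. Either route finishes the equivalence, and the only fact needed is uniqueness of projections.

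For the distance inequalities, the lower bound is immediate: $K_v\subseteq H_v$ gives $\operatorname{dist}(x,H_v)\le\operatorname{dist}(x,K_v)$. For the upper bound, I would split into the two cases of the equivalence.

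\emph{Case $v^\top\proj_Bx\le R$.} Here $\proj_Bx\in K_v$, so
\[
\operatorname{dist}(x,K_v)\le\|x-\proj_Bx\|=\operatorname{dist}(x,B),
\]
which is already at most the right-hand side.

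\emph{Case $v^\top p>R$ with $p=\proj_Bx$.} Here I would construct an explicit point of $K_v$ close to $x$, namely $z:=p-(v^\top p-R)v$, the orthogonal projection of $p$ onto the hyperplane $\{v^\top y=R\}$. It satisfies $v^\top z=R$. To see $z\in B$, I would compute
\[
\|z\|^2=\|p\|^2-2(v^\top p-R)v^\top p+(v^\top p-R)^2=\|p\|^2-(v^\top p)^2+R^2\le\|p\|^2\le a^2,
\]
where the first inequality uses $v^\top p>R\ge0$. So $z\in K_v$, and the triangle inequality gives
\[
\operatorname{dist}(x,K_v)\le\|x-p\|+\|p-z\|=\operatorname{dist}(x,B)+(v^\top p-R).
\]

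It remains to bound $v^\top p-R$ by $\operatorname{dist}(x,H_v)=(v^\top x-R)_+$. This is the main obstacle. The key fact I would use is that, since $B$ is a ball centered at the origin, $p=\lambda x$ with $\lambda=\min(1,a/\|x\|)\in(0,1]$. Then $v^\top p>R>0$ forces $v^\top x>0$, and $v^\top x=v^\top p/\lambda\ge v^\top p$. Hence $v^\top p-R\le v^\top x-R=\operatorname{dist}(x,H_v)$, which completes the upper bound. This step relies on $B$ being centered at the origin and on $R>0$; an arbitrary convex $B$ would not give the inequality directly.
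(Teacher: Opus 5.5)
Your proof is correct. The equivalence is argued as in the paper, by uniqueness of the projection; your variational-inequality version is an equivalent rephrasing.

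The upper bound, however, composes the two projections in the opposite order from the paper. The paper sets $y:=\Pi_{H_v}x$ and $z:=\Pi_By$.
\begin{itemize}
\item Because $B$ is an origin-centered ball, $z$ lies on the segment $[0,y]$, and $0\in H_v$ gives $z\in H_v$, hence $z\in K_v$.
\item Nonexpansiveness of $\Pi_{H_v}$ together with $\Pi_{H_v}0=0$ gives $\|y\|\le\|x\|$, so $\operatorname{dist}(y,B)=(\|y\|-a)_+\le\operatorname{dist}(x,B)$.
\end{itemize}

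You instead project onto $B$ first and then drop $p=\Pi_Bx$ onto the hyperplane $\{v^\top y=R\}$.
\begin{itemize}
\item You verify $z\in B$ through the identity $\|z\|^2=\|p\|^2-(v^\top p)^2+R^2$.
\item You control the second leg $\|p-z\|=v^\top p-R$ through the radial form $p=\lambda x$ with $\lambda\in(0,1]$.
\end{itemize}

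The two arguments are mirror images. Both rest on the same structural facts, which you correctly flag: $B$ is centered at the origin and $R\ge 0$. The paper's version avoids a case split and any explicit computation, using only nonexpansiveness and the radial form of $\Pi_B$. Yours needs the case distinction, but in the case $v^\top\Pi_Bx\le R$ it reuses the first part of the lemma and immediately gives the stronger bound $\operatorname{dist}(x,K_v)\le\operatorname{dist}(x,B)$.
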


\begin{proof}
Let $y=\proj_Bx$. If $v^\top y\le R$, then $y\in K_v$. Since $K_v\subseteq B$,
\[
\operatorname{dist}(x,B)\le\operatorname{dist}(x,K_v)\le\|x-y\|=\operatorname{dist}(x,B).
\]
Thus $y$ attains the minimum distance from $x$ to $K_v$, and uniqueness of the Euclidean projection onto the closed convex set $K_v$ gives $\proj_{K_v}x=y=\proj_Bx$. Conversely, if $\proj_{K_v}x=\proj_Bx$, then $\proj_Bx\in K_v$, and hence $v^\top\proj_Bx\le R$.

The first distance inequality follows immediately from $K_v\subseteq H_v$. For the second, let $y=\proj_{H_v}x$ and $z=\proj_By$. Since $0,y\in H_v$ and $z$ lies on the line segment joining $0$ and $y$, convexity of $H_v$ gives $z\in H_v$. Hence $z\in B\cap H_v=K_v$. Therefore,
\[
\operatorname{dist}(x,K_v)\le\|x-z\|\le\|x-y\|+\|y-z\|=\operatorname{dist}(x,H_v)+\operatorname{dist}(y,B).
\]
Since $0\in H_v$, nonexpansiveness of the projection onto $H_v$ gives
\[
\|y\|=\|\proj_{H_v}x-\proj_{H_v}0\|\le\|x\|.
\]
Because $B$ is the ball of radius $a$ centered at the origin,
\[
\operatorname{dist}(y,B)=(\|y\|-a)_+\le(\|x\|-a)_+=\operatorname{dist}(x,B).
\]
Combining the last two inequalities proves the lemma.
\end{proof}

We next show that the probability mass outside the enclosing ball is negligible.

\begin{lemma}\label{lem:lb-outer-ball-tail}
If $X\sim\mathcal N(m,\Sigma)$, $\Sigma\preceq I_d$, and $\|m\|=o(R^{3/2})$, then
\[
\mathbb E\operatorname{dist}(X,B)^2\le\exp\left[-\frac12R^3+o(R^3)\right].
\]
\end{lemma}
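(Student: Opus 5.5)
We prove Lemma~\ref{lem:lb-outer-ball-tail} by Gaussian concentration of the norm $\|X\|$. Since $B$ is the centered ball of radius $a$, we have $\operatorname{dist}(X,B)=(\|X\|-a)_+$. Write $X=m+\Sigma^{1/2}\xi$ with $\xi\sim\gamma$.

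The map $\xi\mapsto\|m+\Sigma^{1/2}\xi\|$ is $1$-Lipschitz, because $\|\Sigma^{1/2}\|_{\mathrm{op}}\le1$. Its mean satisfies
\[
\mathbb E\|X\|\le\|m\|+\sqrt{\operatorname{tr}\Sigma}\le\|m\|+\sqrt d.
\]
Set $t_0:=a-\sqrt d-\|m\|$. Since $a-\sqrt d=R^{3/2}+o(R^{3/2})$ and $\|m\|=o(R^{3/2})$, we get $t_0=R^{3/2}(1+o(1))$; in particular $t_0>0$ for large $\kappa$. The Gaussian Lipschitz concentration inequality gives, for every $s\ge0$,
\[
\mathbb P(\|X\|-a\ge s)\le\mathbb P\bigl(\|X\|-\mathbb E\|X\|\ge t_0+s\bigr)\le e^{-(t_0+s)^2/2}.
\]

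By the layer-cake formula,
\[
\mathbb E(\|X\|-a)_+^2=\int_0^\infty 2s\,\mathbb P(\|X\|-a\ge s)\,ds\le\int_0^\infty 2s\,e^{-(t_0+s)^2/2}\,ds.
\]
We bound the integrand using $(t_0+s)^2\ge t_0^2+s^2$. This gives
\[
\int_0^\infty 2s\,e^{-(t_0+s)^2/2}\,ds\le e^{-t_0^2/2}\int_0^\infty 2s\,e^{-s^2/2}\,ds=2e^{-t_0^2/2}.
\]
Finally, $t_0^2=R^3(1+o(1))$, so $2e^{-t_0^2/2}=\exp[-\tfrac12R^3+o(R^3)]$, absorbing the factor $2$ into $o(R^3)$.

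The only delicate point is the bookkeeping of the $o(\cdot)$ terms, namely confirming that $t_0^2=R^3+o(R^3)$. This uses $\sqrt{\operatorname{tr}\Sigma}\le\sqrt d$ together with the given expansion of $a-\sqrt d$. The argument is otherwise routine, and the $o(R^3)$ term can be made uniform over all $m$ obeying a fixed $o(R^{3/2})$ bound.
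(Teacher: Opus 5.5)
Your proposal is correct and follows essentially the same route as the paper's proof: $1$-Lipschitz Gaussian concentration for $\|X\|$, the bound $\mathbb E\|X\|\le\|m\|+\sqrt d$, layer-cake integration of $(\|X\|-a)_+^2$, and $(u+t)^2\ge u^2+t^2$ to obtain $2e^{-u^2/2}$ with $u^2=R^3+o(R^3)$. The only cosmetic difference is that you work with the explicit lower bound $t_0=a-\sqrt d-\|m\|$ in place of $a-\mathbb E\|X\|$, which changes nothing.
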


\begin{proof}
Write $X=m+\Sigma^{1/2}G_d$ with $G_d\sim\mathcal N(0,I_d)$, and define $f(g):=\|m+\Sigma^{1/2}g\|$.
Since $\Sigma\preceq I_d$,
\[
|f(g)-f(h)|\le\|\Sigma^{1/2}(g-h)\|\le\|g-h\|,
\]
so $f$ is $1$-Lipschitz. Moreover,
\[
\mathbb E\|X\|\le\|m\|+\mathbb E\|\Sigma^{1/2}G_d\|\le\|m\|+\sqrt{\operatorname{tr}\Sigma}\le\|m\|+\sqrt d.
\]
Hence, using $a=(1+R^{-1/2})\sqrt d$, $d=\lceil R^4\rceil$, and $\|m\|=o(R^{3/2})$,
\[
a-\mathbb E\|X\|\ge R^{-1/2}\sqrt d-\|m\|=R^{3/2}+o(R^{3/2}).
\]
Thus $a-\mathbb E\|X\|>0$ for all sufficiently large $\kappa$. By Gaussian concentration for $1$-Lipschitz functions,
\[
\mathbb P\left(\|X\|-\mathbb E\|X\|\ge t\right)\le e^{-t^2/2}.
\]
Since $\operatorname{dist}(X,B)=(\|X\|-a)_+$, tail integration gives
\[
\mathbb E\operatorname{dist}(X,B)^2=2\int_0^\infty t\,\mathbb P(\|X\|>a+t)\,dt.
\]
Therefore,
\[
\mathbb E\operatorname{dist}(X,B)^2\le2\int_0^\infty t\exp\left[-\frac12(a-\mathbb E\|X\|+t)^2\right]dt.
\]
Using $(u+t)^2\ge u^2+t^2$ for $u,t\ge0$,
\[
\mathbb E\operatorname{dist}(X,B)^2\le2\exp\left[-\frac12(a-\mathbb E\|X\|)^2\right].
\]
The claimed estimate follows from
\[
(a-\mathbb E\|X\|)^2\ge R^3+o(R^3).
\]
\end{proof}

We can now verify the $O(1)$ initial gap.

\begin{proposition}\label{prop:lb-initial-gap}
Uniformly over $v\in\mathbb S^{d-1}$,
\[
F_{V_v^0}(\gamma)-F_{V_v^0}^\star=1+o(1).
\]
\end{proposition}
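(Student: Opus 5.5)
The plan is to prove matching upper and lower bounds on $F_{V_v^0}(\gamma)-F_{V_v^0}^\star$. Since $V_v^0(x)=\frac12\|x\|^2+(\kappa-1)e_{K_v}(x)$, we have, up to the common constant $\frac d2\log(2\pi)$,
\[
F_{V_v^0}(q)=\mathrm{KL}(q\|\gamma)+(\kappa-1)\mathbb E_q[e_{K_v}(X)].
\]
At $q=\gamma$ the KL term vanishes, so $F_{V_v^0}(\gamma)=(\kappa-1)\mathbb E_\gamma e_{K_v}$. Lemma~\ref{lem:lb-projection-identity} gives
\[
\operatorname{dist}(x,H_v)^2\le\operatorname{dist}(x,K_v)^2\le\bigl(\operatorname{dist}(x,H_v)+\operatorname{dist}(x,B)\bigr)^2 .
\]
Under $\gamma$, $\operatorname{dist}(X,H_v)=(v^\top X-R)_+$ with $v^\top X\sim G$. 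Hence $\frac{\kappa-1}{2}\mathbb E\operatorname{dist}(X,H_v)^2=1$ exactly, by the choice of $R$. The cross term and the squared ball term are controlled by Cauchy--Schwarz together with Lemma~\ref{lem:lb-outer-ball-tail} (with $m=0$, $\Sigma=I_d$). Since $(\kappa-1)e^{-R^3/4+o(R^3)}\to0$ because $R^3\gg\log\kappa$, this yields $F_{V_v^0}(\gamma)=1+o(1)$ uniformly in $v$, since $v^\top X\sim G$ for every unit $v$.

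It then remains to show $F^\star_{V_v^0}\ge -o(1)$ (relative to the same constant). The lower bound $F^\star\ge 0$ is immediate because both terms are nonnegative. So the claim reduces to showing $F^\star\to 0$, i.e., that the upper bound $1+o(1)$ is also essentially attained as a gap.

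The proposed competitor is $q=\mathcal N(-s v,\, I_d-tvv^\top)$, or more simply $q=\mathcal N(0,\,I_d-(1-\sigma^2)vv^\top)$ with one-dimensional standard deviation $\sigma<1$ along $v$. Then $\mathrm{KL}(q\|\gamma)=\frac12(\sigma^2-1-\log\sigma^2)$, which is small when $\sigma$ is close to $1$. The hidden-direction term becomes $(\kappa-1)\mathbb E[(\sigma G-R)_+^2]/2$ plus a ball contribution, which is $o(1)$ by Lemma~\ref{lem:lb-outer-ball-tail} since $\Sigma\preceq I_d$ and $m=0$. The Gaussian tail expansion gives
\[
\mathbb E[(\sigma G-R)_+^2]\approx \sigma^2\frac{2\sigma^3}{\sqrt{2\pi}R^3}e^{-R^2/(2\sigma^2)},
\]
so the ratio to the $\sigma=1$ value is about $\exp\bigl(-\frac{R^2}{2}(\sigma^{-2}-1)\bigr)$. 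Choosing $\sigma^2=1-\delta$ with $\delta=\delta(\kappa)\to0$ but $\delta R^2\to\infty$ (e.g.\ $\delta=R^{-1}$) makes this term $o(1)$, while the KL cost is $O(\delta^2)=o(1)$. Hence $F^\star\le o(1)$ and the gap equals $1+o(1)$, uniformly in $v$ by rotation invariance.

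The main obstacle is the bookkeeping in the upper bound for $F_{V_v^0}(\gamma)$: making the cross term $2\mathbb E[\operatorname{dist}(X,H_v)\operatorname{dist}(X,B)]$ negligible after multiplication by $\kappa-1$. Cauchy--Schwarz with $\mathbb E\operatorname{dist}(X,H_v)^2=O(1/\kappa)$ and $\mathbb E\operatorname{dist}(X,B)^2\le e^{-R^3/2+o(R^3)}$ gives a term of order $\sqrt{\kappa}\,e^{-R^3/4}$. This is $o(1)$ because $R^3\sim(2\log\kappa)^{3/2}\gg\log\kappa$. The Gaussian tail expansion with uniform error in $\sigma$ near $1$ also needs a brief check; it follows from the same expansion used for $R$.
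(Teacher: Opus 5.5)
Your proof is correct and follows the paper's argument. You use the same decomposition $F_{V_v^0}(q)=\mathrm{KL}(q\|\gamma)+(\kappa-1)\mathbb E_q[e_{K_v}(X)]$ up to a constant, the same $K_v$-versus-$H_v$ comparison via Lemma~\ref{lem:lb-projection-identity}, Cauchy--Schwarz and Lemma~\ref{lem:lb-outer-ball-tail} to get $F_{V_v^0}(\gamma)=1+o(1)$, and then a test Gaussian to show the normalized infimum is $o(1)$.

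The only difference is the test Gaussian. The paper shifts the mean, $q=\mathcal N(-\mu v,I_d)$ with $\mu=\log R/R$, which costs $\mu^2/2$ in KL and leaves a hidden-cap term of order $e^{-R\mu}=1/R$. You instead shrink the variance along $v$ with $\sigma^2=1-R^{-1}$, which costs $O(R^{-2})$ in KL and leaves a hidden-cap term of order $e^{-R/2}$; both work. One small correction: the step you need is $F^\star\le o(1)$, not $F^\star\ge -o(1)$ as you wrote, and your subsequent argument does in fact establish $F^\star\le o(1)$.
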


\begin{proof}
Since $\gamma=\mathcal N(0,I_d)$, for every $q\in\mathcal G_d$,
\[
F_{V_v^0}(q)=\mathrm{KL}(q\,\|\,\gamma)+(\kappa-1)\mathbb E_q[e_{K_v}(X)]-\frac d2\log(2\pi).
\]
Consequently,
\[
F_{V_v^0}(\gamma)-F_{V_v^0}^\star=(\kappa-1)\mathbb E_\gamma[e_{K_v}(X)]-\inf_{p\in\mathcal G_d}\left\{\mathrm{KL}(p\,\|\,\gamma)+(\kappa-1)\mathbb E_p[e_{K_v}(X)]\right\}.
\]

We first evaluate the normalized objective at $\gamma$. Since $\|v\|=1$ and $H_v=\{x:v^\top x\le R\}$,
\[
\operatorname{dist}(x,H_v)=(v^\top x-R)_+.
\]
Lemma~\ref{lem:lb-projection-identity} gives
\[
0\le\operatorname{dist}(x,K_v)^2-\operatorname{dist}(x,H_v)^2\le2\operatorname{dist}(x,H_v)\operatorname{dist}(x,B)+\operatorname{dist}(x,B)^2.
\]
For $X\sim\gamma$, rotational invariance gives $v^\top X\sim\mathcal N(0,1)$, and hence
\[
\mathbb E_\gamma\operatorname{dist}(X,H_v)^2=\mathbb E[(G-R)_+^2]\le1.
\]
Therefore, by Cauchy--Schwarz,
\[
0\le\mathbb E_\gamma[e_{K_v}(X)]-\mathbb E_\gamma[e_{H_v}(X)]\le\left(\mathbb E_\gamma\operatorname{dist}(X,B)^2\right)^{1/2}+\frac12\mathbb E_\gamma\operatorname{dist}(X,B)^2.
\]
Lemma~\ref{lem:lb-outer-ball-tail} gives
\[
\mathbb E_\gamma[e_{K_v}(X)]-\mathbb E_\gamma[e_{H_v}(X)]\le\exp\left[-\frac14R^3+o(R^3)\right].
\]
Since $\log\kappa=O(R^2)=o(R^3)$,
\[
(\kappa-1)\left(\mathbb E_\gamma[e_{K_v}(X)]-\mathbb E_\gamma[e_{H_v}(X)]\right)=O(e^{-cR^3})=o(1)
\]
uniformly over $v$. Using the defining equation for $R$,
\[
(\kappa-1)\mathbb E_\gamma[e_{K_v}(X)]=\frac{\kappa-1}{2}\mathbb E[(G-R)_+^2]+o(1)=1+o(1)
\]
uniformly over $v$.

It remains to show that the infimum of the normalized objective is $o(1)$. Set $\mu:=\frac{\log R}{R}$ and consider the Gaussian $q:=\mathcal N(-\mu v,I_d)$. Its relative entropy with respect to $\gamma$ is
\[
\mathrm{KL}(q\,\|\,\gamma)=\frac{\mu^2}{2}=O\left(\frac{(\log R)^2}{R^2}\right)=o(1).
\]
Under $q$,
\[
v^\top X\overset{d}=G-\mu,\qquad G\sim\mathcal N(0,1),
\]
and hence
\[
(\kappa-1)\mathbb E_q[e_{H_v}(X)]=\frac{\kappa-1}{2}\mathbb E[(G-(R+\mu))_+^2].
\]
Using the defining relation for $R$,
\[
(\kappa-1)\mathbb E_q[e_{H_v}(X)]=\frac{\mathbb E[(G-(R+\mu))_+^2]}{\mathbb E[(G-R)_+^2]}.
\]
The Gaussian tail expansion gives
\[
\frac{\mathbb E[(G-(R+\mu))_+^2]}{\mathbb E[(G-R)_+^2]}=\left(\frac{R}{R+\mu}\right)^3\exp\left[-R\mu-\frac{\mu^2}{2}\right](1+o(1)).
\]
Since $R\mu=\log R$ and $\mu/R\to0$,
\[
(\kappa-1)\mathbb E_q[e_{H_v}(X)]=\frac{1+o(1)}{R}=o(1).
\]
Moreover, $\|-\mu v\|=o(R^{3/2})$, so Lemma~\ref{lem:lb-outer-ball-tail} and the same comparison between $K_v$ and $H_v$ give
\[
(\kappa-1)\left(\mathbb E_q[e_{K_v}(X)]-\mathbb E_q[e_{H_v}(X)]\right)=o(1).
\]
Therefore,
\[
\mathrm{KL}(q\,\|\,\gamma)+(\kappa-1)\mathbb E_q[e_{K_v}(X)]=o(1).
\]
So,
\[
0\le\inf_{p\in\mathcal G_d}\left\{\mathrm{KL}(p\,\|\,\gamma)+(\kappa-1)\mathbb E_p[e_{K_v}(X)]\right\}\le o(1).
\]
\end{proof}
We next smooth the construction to make the targets infinitely differentiable. Choose a nonnegative, even, radial density $\rho\in C_c^\infty(\mathbb R^d)$ supported on the closed unit ball $\overline{B}_2(0,1)$. Set
\[
\tau:=(\kappa-1)^{-2},\qquad \rho_\tau(u):=\tau^{-d}\rho(u/\tau),\qquad e_K^\tau:=e_K*\rho_\tau.
\]
Define
\[
V_v(x):=\frac12\|x\|^2+(\kappa-1)e_{K_v}^\tau(x),\qquad V_B(x):=\frac12\|x\|^2+(\kappa-1)e_B^\tau(x).
\]

\begin{lemma}\label{lem:lb-full-jet}
For every nonempty closed convex set $K$,
\[
0\le e_K^\tau(x)-e_K(x)\le\frac{\tau^2}{2}.
\]
Moreover,
\[
0\preceq\nabla^2e_K^\tau(x)\preceq I_d.
\]
Consequently, $V_v\in C^\infty(\mathbb R^d)$ and
$I_d\preceq\nabla^2V_v(x)\preceq\kappa I_d$. 

If $v^\top\proj_Bx\le R-\tau$, then $V_v$ and $V_B$ have the same derivatives of every finite order at $x$.

Furthermore, for every $q\in\mathcal G_d$,
\[
0\le F_{V_v}(q)-F_{V_v^0}(q)\le\frac{1}{2(\kappa-1)^3},
\]
and
\[
0\le F_{V_v}^\star-F_{V_v^0}^\star\le\frac{1}{2(\kappa-1)^3}.
\]
In particular,
\[
F_{V_v}(\gamma)-F_{V_v}^\star=1+o(1)
\]
uniformly over $v\in\mathbb S^{d-1}$.
\end{lemma}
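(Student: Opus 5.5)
We prove the parts of Lemma~\ref{lem:lb-full-jet} in the order stated, using only two properties of $e_K$: it is convex with $\nabla e_K(x)=x-\proj_Kx$, and $\nabla e_K$ is $1$-Lipschitz.

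\textbf{Step 1: the sandwich $0\le e_K^\tau-e_K\le\tau^2/2$.} Write $e_K^\tau(x)=\int e_K(x-u)\rho_\tau(u)\,du$. Since $\rho_\tau$ is even, $\int u\,\rho_\tau(u)\,du=0$. Convexity gives $e_K(x-u)\ge e_K(x)-\nabla e_K(x)^\top u$, and integrating yields the lower bound. The $1$-smoothness bound $e_K(x-u)\le e_K(x)-\nabla e_K(x)^\top u+\frac12\|u\|^2$ integrates to the upper bound, since $\|u\|\le\tau$ on the support.

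\textbf{Step 2: Hessian bounds and regularity.} Smoothness holds because $e_K^\tau=e_K*\rho_\tau$ with $\rho_\tau\in C_c^\infty$ and $e_K$ continuous. Convolution preserves convexity, so $\nabla^2e_K^\tau\succeq0$. We also have $\nabla e_K^\tau=\nabla e_K*\rho_\tau$, which is $1$-Lipschitz as an average of $1$-Lipschitz maps; hence $\nabla^2e_K^\tau\preceq I_d$. Adding the term $\frac12\|x\|^2$ gives $I_d\preceq\nabla^2V_v\preceq\kappa I_d$.

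\textbf{Step 3: agreement of jets.} This is the step needing the most care. Suppose $v^\top\proj_Bx\le R-\tau$. For $\|y-x\|\le\tau$, nonexpansiveness of $\proj_B$ gives $v^\top\proj_By\le R$. Lemma~\ref{lem:lb-projection-identity} then gives $\proj_{K_v}y=\proj_By$, so $\operatorname{dist}(y,K_v)=\operatorname{dist}(y,B)$. Hence $e_{K_v}=e_B$ on the closed ball $\overline B_2(x,\tau)$. This is not quite enough, because derivatives at $x$ of $e*\rho_\tau$ require $e_{K_v}=e_B$ on a neighborhood of $\overline B_2(x,\tau)$. Two fixes are possible:
\begin{itemize}
\item note that the hypothesis actually gives equality on the set $\{y:v^\top\proj_By\le R\}$, and handle the boundary case by continuity; or
\item use the fact that $\rho$ is supported on the closed unit ball. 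Then $D^j(e*\rho_\tau)(x)=\int e(x-u)D^j\rho_\tau(u)\,du$ depends only on the values of $e$ on $\overline B_2(x,\tau)$.
\end{itemize}
The second route is cleaner: moving derivatives onto the mollifier avoids any neighborhood issue. It follows that $e_{K_v}^\tau$ and $e_B^\tau$ have identical derivatives of all orders at $x$, and so do $V_v$ and $V_B$.

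\textbf{Step 4: objective comparisons.} By Step 1, for every $q\in\mathcal G_d$,
\[
0\le F_{V_v}(q)-F_{V_v^0}(q)=(\kappa-1)\mathbb E_q\bigl[e_{K_v}^\tau-e_{K_v}\bigr]\le\frac{(\kappa-1)\tau^2}{2}=\frac{1}{2(\kappa-1)^3}.
\]
Taking infima over $q$ on both sides of this pointwise sandwich gives the same bounds for $F_{V_v}^\star-F_{V_v^0}^\star$.

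\textbf{Step 5: the initial gap.} Combine Step 4 with Proposition~\ref{prop:lb-initial-gap}. The gap changes by at most $2\cdot\frac{1}{2(\kappa-1)^3}=o(1)$, so $F_{V_v}(\gamma)-F_{V_v}^\star=1+o(1)$ uniformly over $v\in\mathbb S^{d-1}$.
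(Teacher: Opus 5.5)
Your proposal is correct and follows essentially the same route as the paper. The sandwich is the same, except that you use the first-order convexity inequality where the paper uses Jensen, which is equivalent. The Hessian bound uses the same averaged $1$-Lipschitz gradient, and the jet-agreement step uses the same device of moving derivatives onto the compactly supported mollifier, so that only the values of $e_{K_v}$ and $e_B$ on $\overline B_2(x,\tau)$ matter. Your factor $2$ in the last step is slightly loose: both comparison errors lie in $[0,\frac{1}{2(\kappa-1)^3}]$, so their difference is at most $\frac{1}{2(\kappa-1)^3}$. This does not affect the $o(1)$ conclusion.
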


\begin{proof}
Let $Z\sim\rho_\tau$. Since $\rho_\tau$ is even, $\mathbb EZ=0$. Hence, by convexity of $e_K$ and Jensen's inequality,
\[
e_K^\tau(x)=\mathbb E[e_K(x-Z)]\ge e_K(x-\mathbb EZ)=e_K(x).
\]
Since $e_K$ has a $1$-Lipschitz gradient,
\[
e_K(x-Z)\le e_K(x)-\langle\nabla e_K(x),Z\rangle+\frac12\|Z\|^2.
\]
Taking expectations and using $\mathbb EZ=0$ and $\|Z\|\le\tau$ gives
\[
0\le e_K^\tau(x)-e_K(x)\le\frac{\tau^2}{2}.
\]

Since $\rho_\tau\in C_c^\infty(\mathbb R^d)$, convolution with $\rho_\tau$ gives $e_K^\tau\in C^\infty(\mathbb R^d)$. Moreover, $e_K^\tau$ is convex because it is an average of translates of the convex function $e_K$. Also,
\[
\nabla e_K^\tau(x)=\mathbb E[\nabla e_K(x-Z)],
\]
and therefore
\[
\|\nabla e_K^\tau(x)-\nabla e_K^\tau(y)\|\le\mathbb E\|\nabla e_K(x-Z)-\nabla e_K(y-Z)\|\le\|x-y\|.
\]
Thus $e_K^\tau$ is smooth and convex with a $1$-Lipschitz gradient, so $0\preceq\nabla^2e_K^\tau(x)\preceq I_d.$ The Hessian bounds for $V_v$ follow immediately.

The map $x\mapsto v^\top\proj_Bx$ is $1$-Lipschitz because $\proj_B$ is nonexpansive and $\|v\|=1$. Hence, if
\[
v^\top\proj_Bx\le R-\tau
\]
and $\|u\|\le\tau$, then
\[
v^\top\proj_B(x-u)\le v^\top\proj_Bx+\|u\|\le R.
\]
By Lemma~\ref{lem:lb-projection-identity}, this implies
\[
\proj_{K_v}(x-u)=\proj_B(x-u),
\]
and therefore
\[
e_{K_v}(x-u)=e_B(x-u).
\]
For every multi-index $\alpha$, differentiating the convolution kernel gives
\[
\partial^\alpha(e_{K_v}^\tau-e_B^\tau)(x)
=\int_{\mathbb R^d}[e_{K_v}(x-u)-e_B(x-u)]\,\partial^\alpha\rho_\tau(u)\,du=0,
\]
since $\operatorname{supp}(\partial^\alpha\rho_\tau)\subseteq\overline B_2(0,\tau)$. Thus $V_v$ and $V_B$ have the same values and all finite-order derivatives at $x$, including when $v^\top\Pi_Bx=R-\tau$.

Finally, the first bound and $\tau=(\kappa-1)^{-2}$ give, for every $x\in\mathbb R^d$,
\[
0\le V_v(x)-V_v^0(x)\le\frac{\kappa-1}{2}\tau^2=\frac{1}{2(\kappa-1)^3}.
\]
Taking expectations under any $q\in\mathcal G_d$ gives
\[
0\le F_{V_v}(q)-F_{V_v^0}(q)\le\frac{1}{2(\kappa-1)^3}.
\]
Taking infima over $q\in\mathcal G_d$ similarly yields
\[
0\le F_{V_v}^\star-F_{V_v^0}^\star\le\frac{1}{2(\kappa-1)^3}.
\]
Therefore,
\[
\left|\left(F_{V_v}(\gamma)-F_{V_v}^\star\right)-\left(F_{V_v^0}(\gamma)-F_{V_v^0}^\star\right)\right|\le\frac{1}{2(\kappa-1)^3}=o(1).
\]
Proposition~\ref{prop:lb-initial-gap} now gives
\[
F_{V_v}(\gamma)-F_{V_v}^\star=1+o(1)
\]
uniformly over $v$.
\end{proof}

We next bound the probability that a point query reaches the region where the oracle response depends on $v$.

\begin{lemma}\label{lem:lb-spherical-cap}
There exists a constant $C>0$ such that, for every fixed $z\in\mathbb R^d$ and $0<u<\|z\|$,
\[
\mathbb P_v(v^\top z>u)\le\frac{C\|z\|}{u\sqrt d}\exp\left[-\frac{d-1}{2}\frac{u^2}{\|z\|^2}\right].
\]
Moreover, for all sufficiently large $\kappa$, uniformly over $x\in\mathbb R^d$,
\[
\mathbb P_v\left(v^\top\proj_Bx>R-\tau\right)\le\frac{C}{\kappa}\exp\left[C(\log\kappa)^{3/4}\right].
\]
\end{lemma}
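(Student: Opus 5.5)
The proof has two parts: a spherical-cap tail estimate, and its application with $z=\proj_Bx$.

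\emph{Cap estimate.} By rotational invariance, $v^\top z\overset{d}{=}\|z\|\,v_1$, where $v_1$ is the first coordinate of a uniform point on $\mathbb S^{d-1}$. Its density is $c_d(1-t^2)^{(d-3)/2}$ on $[-1,1]$, with $c_d=\Gamma(d/2)/(\sqrt\pi\,\Gamma((d-1)/2))\le C\sqrt d$. Set $s:=u/\|z\|\in(0,1)$. Using $1-t^2\le e^{-t^2}$ and $1\le t/s$ on $[s,1]$,
\[
\mathbb P(v_1>s)\le c_d\int_s^1 \frac{t}{s}e^{-\frac{d-3}{2}t^2}\,dt\le \frac{c_d}{s(d-3)}e^{-\frac{d-3}{2}s^2}.
\]
Since $c_d/(d-3)\le C/\sqrt d$, this gives the claimed form up to the exponent. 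To get the exponent $(d-1)/2$ exactly, I would either accept an extra factor $e^{s^2}\le e$ absorbed into $C$, or use the sharper substitution $w=1-t^2$. The small-$d$ cases are trivial, since the prefactor then dominates.

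\emph{Application.} Let $z=\proj_Bx$, so $\|z\|\le a$. If $\|z\|\le R-\tau$, the probability is zero. Otherwise set $u=R-\tau$. The bound in the first part is increasing in $\|z\|$ in the relevant range: the exponent has $u^2/\|z\|^2$, and the prefactor grows with $\|z\|$. So we may replace $\|z\|$ by $a$. Then $a/(u\sqrt d)=(1+R^{-1/2})/(R-\tau)\le C/R$, and
\[
\frac{d-1}{2}\cdot\frac{(R-\tau)^2}{a^2}=\frac{(R-\tau)^2}{2}\cdot\frac{d-1}{d}\,(1+R^{-1/2})^{-2}.
\]
Expand $(1+R^{-1/2})^{-2}\ge 1-2R^{-1/2}$, together with $(d-1)/d=1-O(R^{-4})$ and $\tau R=o(1)$. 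The exponent is then at least $R^2/2-R^{3/2}-O(1)$.

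\emph{Conversion to $\kappa$.} This step carries the main bookkeeping. From $R^2=2\log\kappa-3\log\log\kappa+O(1)$ we get $e^{-R^2/2}\le C\kappa^{-1}(\log\kappa)^{3/2}$. Also $R^{3/2}\le C(\log\kappa)^{3/4}$, and the polylog factors $(\log\kappa)^{3/2}/R$ are $\exp[o((\log\kappa)^{3/4})]$. Combining,
\[
\mathbb P_v\left(v^\top\proj_Bx>R-\tau\right)\le \frac{C}{\kappa}\exp\!\left[C(\log\kappa)^{3/4}\right]
\]
after enlarging $C$, uniformly in $x$. The $R^{3/2}$ loss, which comes from the ball radius exceeding $\sqrt d$ by $R^{3/2}$, is exactly what produces the $(\log\kappa)^{3/4}$ term.

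The only delicate point is the monotonicity in $\|z\|$ used to substitute $a$. I would verify it directly rather than rely on it, or sidestep it by noting that $\mathbb P(v^\top z>u)$ itself is monotone in $\|z\|$ for fixed $u$, which is clear from the representation $\|z\|v_1$.
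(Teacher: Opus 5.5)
Your proposal is correct and follows the paper's route: rotational invariance, the density $c_d(1-t^2)^{(d-3)/2}$ with $c_d\le C\sqrt d$, the $t/s$ trick, then $z=\proj_Bx$, $u=R-\tau$, $\|z\|\le a$, and the expansion $R^2=2\log\kappa-3\log\log\kappa+O(1)$. The paper takes your ``sharper'' option, integrating $t(1-t^2)^{(d-3)/2}$ exactly to get $(1-s^2)^{(d-1)/2}/(d-1)$ before applying $1-x\le e^{-x}$, and the monotonicity you flag is not delicate, since both factors of the bound increase with $\|z\|$.
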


\begin{proof}
By rotational invariance, we may assume without loss of generality that $z=\|z\|e_1$. Writing $v=(v_1,\ldots,v_d)$, we then have $v^\top z=\|z\|v_1$.
To derive its distribution, let
\[
g=(g_1,\ldots,g_d)\sim\mathcal N(0,I_d).
\]
Since the standard Gaussian distribution is rotationally invariant, $v:=\frac{g}{\|g\|}$ is uniformly distributed on $\mathbb S^{d-1}$. In particular,
\[
v_1^2=\frac{g_1^2}{g_1^2+\cdots+g_d^2}.
\]
Set
\[
U:=g_1^2,\qquad W:=g_2^2+\cdots+g_d^2.
\]
Then $U$ and $W$ are independent and satisfy
\[
U\sim\chi_1^2,\qquad W\sim\chi_{d-1}^2.
\]
Equivalently, they are independent Gamma random variables with the same scale parameter and shape parameters $1/2$ and $(d-1)/2$, respectively. Therefore,
\[
\frac{U}{U+W}\sim\operatorname{Beta}\left(\frac12,\frac{d-1}{2}\right).
\]
Thus $T:=v_1^2$ has density
\[
f_T(t)=\frac{1}{B\left(\frac12,\frac{d-1}{2}\right)}t^{-1/2}(1-t)^{(d-3)/2},\qquad 0<t<1,
\]
where $B(a,b):=\Gamma(a)\Gamma(b)/\Gamma(a+b)$ is the beta function. The distribution of $v_1$ is symmetric about zero. Since $T=v_1^2$, a change of variables gives, for $0<x<1$,
\[
f_T(x^2)=\frac{f_{v_1}(x)+f_{v_1}(-x)}{2x}=\frac{f_{v_1}(x)}{x}.
\]
Hence
\[
f_{v_1}(x)=c_d(1-x^2)^{(d-3)/2},\qquad -1<x<1,
\]
where $c_d:=\frac{1}{B\left(\frac12,\frac{d-1}{2}\right)}$. Using $\Gamma(1/2)=\sqrt{\pi}$, we obtain
\[
c_d=\frac{\Gamma(d/2)}{\sqrt{\pi}\,\Gamma((d-1)/2)}.
\]
Finally, using Stirling's formula gives
\[
\frac{\Gamma(x+1/2)}{\Gamma(x)}=x^{1/2}\left(1+O(x^{-1})\right).
\]
Taking $x=(d-1)/2$ yields
\[
c_d=\sqrt{\frac{d-1}{2\pi}}\left(1+O(d^{-1})\right).
\]
so there exists a constant $C>0$ such that
$c_d\le C\sqrt d$ for every $d-1\ge1$. 

Since
\[
\mathbb P_v(v^\top z>u)=c_d\int_{u/\|z\|}^1(1-t^2)^{(d-3)/2}\,dt,
\]
and $t\ge u/\|z\|$ on the domain of integration,
\[
\int_{u/\|z\|}^1(1-t^2)^{(d-3)/2}\,dt
\le\frac{\|z\|}{u}\int_{u/\|z\|}^1t(1-t^2)^{(d-3)/2}\,dt
=\frac{\|z\|}{u(d-1)}\left(1-\frac{u^2}{\|z\|^2}\right)^{(d-1)/2}.
\]
Using $c_d\le C\sqrt d$ and, for $d\ge2$,
\[
\frac{\sqrt d}{d-1}=\frac{1}{\sqrt d}\frac{d}{d-1}\le\frac{2}{\sqrt d},
\]
we obtain
\[
\mathbb P_v(v^\top z>u)\le\frac{C\|z\|}{u\sqrt d}\left(1-\frac{u^2}{\|z\|^2}\right)^{(d-1)/2}.
\]
Finally, since $1-x\le e^{-x}$ for $x\ge0$,
\[
\mathbb P_v(v^\top z>u)\le\frac{C\|z\|}{u\sqrt d}\exp\left[-\frac{d-1}{2}\frac{u^2}{\|z\|^2}\right].
\]
We apply the preceding estimate with $z=\proj_Bx$ and $u=R-\tau$. If
\[
\|\proj_Bx\|\le R-\tau,
\]
then $v^\top\proj_Bx\le\|\proj_Bx\|\le R-\tau$ for every $v\in\mathbb S^{d-1}$, so the probability is zero. Otherwise, since $\proj_Bx\in B$ and $B$ is the Euclidean ball of radius $a$,
\[
\|\proj_Bx\|\le a=(1+R^{-1/2})\sqrt d.
\]
the preceding bound gives
\[
\mathbb P_v\left(v^\top\proj_Bx>R-\tau\right)\le C\exp\left[-\frac{d-1}{d}\frac{(R-\tau)^2}{2(1+R^{-1/2})^2}\right],
\]
after increasing the constant $C$ if necessary. Since $d=\lceil R^4\rceil$,
\[
\frac{d-1}{d}=1+O(R^{-4}).
\]
Moreover, $\tau=(\kappa-1)^{-2}$ is negligible compared with any inverse power of $R$, so
\[
(R-\tau)^2=R^2+o(1).
\]
Using
\[
(1+R^{-1/2})^{-2}=1-2R^{-1/2}+3R^{-1}+O(R^{-3/2}),
\]
we obtain
\[
\frac{d-1}{d}\frac{(R-\tau)^2}{2(1+R^{-1/2})^2}
=\frac{R^2}{2}-R^{3/2}+O(R).
\]
Also,
\[
\frac{R^2}{2}=\log\kappa-\frac32\log\log\kappa+O(1),
\]
so $R^2=O(\log\kappa)$ and hence
\[
R^{3/2}=O((\log\kappa)^{3/4}).
\]
Increasing $C$ once more if necessary therefore gives
\[
\mathbb P_v\left(v^\top\proj_Bx>R-\tau\right)\le\frac{C}{\kappa}\exp\left[C(\log\kappa)^{3/4}\right]
\]
uniformly over $x\in\mathbb R^d$.
\end{proof}

We next compare the algorithm with its trajectory under $V_B$. Until the hidden region is reached, the two trajectories coincide.

\begin{lemma}\label{lem:lb-common-path}
Fix an integer $r\ge1$ and a randomized algorithm making at most $N$ pointwise queries. At each query point, the algorithm observes the value of $V$ and its derivatives up to order $r$. For each fixed realization $\omega$ of the algorithm's internal randomness, let
$x_1^0(\omega),\ldots,x_N^0(\omega)$
denote its queries against $V_B$.  If
\[
v^\top\proj_Bx_t^0\le R-\tau
\]
for every query made by the algorithm, then the oracle responses, query points, and final outputs are identical for $V_v$ and $V_B$. Moreover,
\[
\mathbb P_v\left(\exists t:v^\top\proj_Bx_t^0>R-\tau\right)\le C\frac{N}{\kappa}\exp\left[C(\log\kappa)^{3/4}\right].
\]
\end{lemma}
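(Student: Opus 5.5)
The plan is a standard coupling argument: condition on the algorithm's randomness $\omega$, run it against the reference potential $V_B$, and compare with the run against $V_v$. Fix $\omega$. The algorithm is then deterministic. Each query point $x_t$ is a measurable function of $\omega$ and the previous oracle responses. The output is a function of $\omega$ and all $N$ responses, where each response is the $r$-jet $(V(x_s),DV(x_s),\dots,D^rV(x_s))$ for $s<t$. Let $x_t^0=x_t^0(\omega)$ be the queries generated against $V_B$. These depend on $\omega$ only, not on $v$, since $V_B$ does not involve $v$.

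\textbf{Step 1 (identical transcripts).} Assume $v^\top\proj_Bx_t^0\le R-\tau$ for all $t\le N$. I will show by induction on $t$ that the query points under $V_v$ satisfy $x_t=x_t^0$, and that the responses coincide. For $t=1$, the first query depends only on $\omega$, so $x_1=x_1^0$. Suppose $x_s=x_s^0$ for all $s\le t$. By hypothesis $v^\top\proj_Bx_s\le R-\tau$, so Lemma~\ref{lem:lb-full-jet} gives $D^jV_v(x_s)=D^jV_B(x_s)$ for every $j\le r$. The responses up to time $t$ therefore agree. Since $x_{t+1}$ is the same function of $\omega$ and these responses in both runs, $x_{t+1}=x_{t+1}^0$. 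After $N$ steps, all responses agree, so the output, which is a function of $\omega$ and the transcript, is identical. If the algorithm stops adaptively before $N$ queries, the same argument applies to the realized number of queries, which is also determined by the common transcript.

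\textbf{Step 2 (probability bound).} For fixed $\omega$, each $x_t^0$ is a fixed point of $\mathbb R^d$ independent of $v$. Lemma~\ref{lem:lb-spherical-cap} then gives
\[
\mathbb P_v\bigl(v^\top\proj_Bx_t^0>R-\tau\bigr)\le\frac{C}{\kappa}\exp\bigl[C(\log\kappa)^{3/4}\bigr]
\]
uniformly in $x_t^0$. A union bound over $t=1,\dots,N$ yields the stated estimate. Because the bound is uniform in $\omega$, it also holds after averaging over the algorithm's randomness.

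\textbf{Main obstacle.} The delicate point is measurability and the role of the reference trajectory. The event must be defined through $x_t^0$, not through the queries $x_t$ under $V_v$, because the latter depend on $v$ and a union bound over them would be invalid. Step 1 is what justifies replacing $x_t$ by $x_t^0$ on the good event. I would also note that the argument needs only that the observations depend on the $r$-jet at each query point, so it covers any fixed order $r$.
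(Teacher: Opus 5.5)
Your proposal is correct and follows essentially the same route as the paper: you fix $\omega$, run an induction showing that the transcripts under $V_v$ and $V_B$ coincide by Lemma~\ref{lem:lb-full-jet}, and then apply Lemma~\ref{lem:lb-spherical-cap} to the $v$-independent reference queries $x_t^0$ with a union bound and average over $\omega$. Your additional remarks, that the union bound must be taken over the reference trajectory rather than over the $V_v$-queries and that adaptive stopping is covered, are sound and only make explicit what the paper's proof leaves implicit.
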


\begin{proof}
Fix a realization $\omega$ of the algorithm's
internal randomness. The first query is then
the same under $V_v$ and $V_B$. Suppose that the two transcripts agree through query $t-1$. Since the algorithm chooses its next query from the previous transcript, the $t$-th query is also the same and equals $x_t^0$. If
\[
v^\top\proj_Bx_t^0\le R-\tau,
\]
Lemma~\ref{lem:lb-full-jet} implies that $V_v$ and $V_B$ return the same derivatives through order $r$ at $x_t^0$. Hence the transcripts continue to agree. By induction, the two query sequences, oracle responses, and final outputs are identical until a query satisfies
\[
v^\top\proj_Bx_t^0>R-\tau.
\]
The points $x_1^0,\ldots,x_N^0$ are determined by $V_B$ and therefore do not depend on $v$. Applying Lemma~\ref{lem:lb-spherical-cap} to each point and taking a union bound gives
\[
\mathbb P_v\left(\exists t:v^\top\proj_Bx_t^0>R-\tau\right)\le C\frac{N}{\kappa}\exp\left[C(\log\kappa)^{3/4}\right].
\]
Averaging over the algorithm's randomness preserves the same bound.
\end{proof}

We next prove Lemma~\ref{lem:lb-compromise}.

\begin{proof}[Proof of Lemma~\ref{lem:lb-compromise}]
Choose $\varepsilon_0>0$ sufficiently small to satisfy the conditions below, and then take $\kappa$ sufficiently large. All choices will be independent of $q$.

The proof of Proposition~\ref{prop:lb-initial-gap} and Lemma~\ref{lem:lb-full-jet} imply that
\[
\inf_{p\in\mathcal G_d}\left\{\mathrm{KL}(p\,\|\,\gamma)+(\kappa-1)\mathbb E_p[e_{K_v}^\tau(X)]\right\}=o(1)
\]
uniformly over $v$. Hence, if $F_{V_v}(q)-F_{V_v}^\star\le2\varepsilon_0$,
then, for all sufficiently large $\kappa$,
\[
\mathrm{KL}(q\,\|\,\gamma)+(\kappa-1)\mathbb E_q[e_{K_v}^\tau(X)]\le3\varepsilon_0.
\]
Since $e_{K_v}^\tau\ge e_{K_v}\ge e_{H_v}$, we obtain
\[
\mathrm{KL}(q\,\|\,\gamma)+\frac{\kappa-1}{2}\mathbb E_q[(v^\top X-R)_+^2]\le3\varepsilon_0.
\]

Write $q=\mathcal N(m,\Sigma)$, and let $\lambda_1,\ldots,\lambda_d$ be the eigenvalues of $\Sigma$. The Gaussian KL formula yields
\[
\|m\|^2+\sum_{i=1}^d(\lambda_i-\log\lambda_i-1)\le6\varepsilon_0.
\]
Each summand is nonnegative and is bounded away from zero outside $[1/2,2]$. Moreover,
\[
\lambda-\log\lambda-1\ge\frac18(\lambda-1)^2,\qquad \lambda\in[1/2,2].
\]
Thus our choice of sufficiently small $\varepsilon_0$ ensures
\[
\|m\|^2\le6\varepsilon_0,\qquad \frac12I_d\preceq\Sigma\preceq2I_d,
\]
and
\[
\|I_d-\Sigma\|_F^2=\sum_{i=1}^d(\lambda_i-1)^2\le48\varepsilon_0.
\]
Set
\[
\sigma^2:=v^\top\Sigma v,\qquad t:=\frac{R-v^\top m}{\sigma}.
\]
Since $v^\top X\overset{d}=v^\top m+\sigma G, \, G\sim\mathcal N(0,1)$,
we have
\[
\mathbb E_q[(v^\top X-R)_+^2]=\sigma^2\mathbb E[(G-t)_+^2].
\]
Hence the preceding bound gives
\[
\frac{\kappa-1}{2}\sigma^2\mathbb E[(G-t)_+^2]\le3\varepsilon_0.
\]
Using $\frac{\kappa-1}{2}\mathbb E[(G-R)_+^2]=1$,
we obtain
\[
\sigma^2\frac{\mathbb E[(G-t)_+^2]}{\mathbb E[(G-R)_+^2]}\le 3\varepsilon_0.
\]
Since $\sigma^2\in[1/2,2]$ and $|v^\top m|\le\sqrt{6\varepsilon_0}$, we have $t/R\in[1/2,2]$ for all sufficiently large $\kappa$. The two-sided Gaussian tail estimate
\[
c\frac{e^{-u^2/2}}{u^3}\le\mathbb E[(G-u)_+^2]\le C\frac{e^{-u^2/2}}{u^3},\qquad u\ge1,
\]
therefore gives
\[
c\exp\left[-\frac{t^2-R^2}{2}\right]\le\sigma^2\frac{\mathbb E[(G-t)_+^2]}{\mathbb E[(G-R)_+^2]}\le3\varepsilon_0.
\]
Choosing $\varepsilon_0$ sufficiently small ensures $t^2-R^2\ge2$. We have
\[
\sigma^2(t^2-R^2)=(R-v^\top m)^2-\sigma^2R^2.
\]
Expanding the square and using $\|v\|=1$ gives
\[
\sigma^2(t^2-R^2)=R^2v^\top(I_d-\Sigma)v-2Rv^\top m+(v^\top m)^2.
\]
Hence
\[
-2Rv^\top m+R^2v^\top(I_d-\Sigma)v=\sigma^2(t^2-R^2)-(v^\top m)^2.
\]
Using $t^2-R^2\ge2$, $\sigma^2\ge1/2$, and $(v^\top m)^2\le\|m\|^2\le6\varepsilon_0$, we obtain
\[
-2Rv^\top m+R^2v^\top(I_d-\Sigma)v\ge1-6\varepsilon_0 \ge \frac12
\]
by choosing $\varepsilon_0\le1/12$. Thus, if the objective gap is at most $2\varepsilon_0$, then at least one of $v^\top(-m)\ge\frac{1}{8R}$ or $v^\top(I_d-\Sigma)v\ge\frac{1}{4R^2}$ must hold. We show that each event has probability at most $e^{-cR^2}$ over the random direction $v$.

For the first event, if $\|m\|<1/(8R)$, then it is impossible. Otherwise, Lemma~\ref{lem:lb-spherical-cap} with $z=-m$ and $u=1/(8R)$ gives
\[
\mathbb P_v\left(v^\top(-m)\ge\frac{1}{8R}\right)\le\frac{CR\|m\|}{\sqrt d}\exp\left[-c\frac{d}{R^2\|m\|^2}\right].
\]
Since $\|m\|^2\le6\varepsilon_0$ and $d=\lceil R^4\rceil$,
\[
\frac{d}{R^2\|m\|^2}\ge cR^2,
\]
while $\frac{CR\|m\|}{\sqrt d}$ is also bounded for all sufficiently large $\kappa$. Hence
\[
\mathbb P_v\left(v^\top(-m)\ge\frac{1}{8R}\right)\le Ce^{-cR^2}.
\]

For the second event, write $v=\frac{g}{\|g\|}, \, g\sim\mathcal N(0,I_d).$ On the event $\{\|g\|^2\ge d/2\}$,
\[
v^\top(I_d-\Sigma)v\ge\frac{1}{4R^2}
\]
implies
\[
g^\top(I_d-\Sigma)g\ge\frac{d}{8R^2}.
\]
Moreover,
\[
|\operatorname{tr}(I_d-\Sigma)|\le\sqrt d\,\|I_d-\Sigma\|_F\le \sqrt{48\varepsilon_0}\sqrt d.
\]
Since $d=\lceil R^4\rceil$, by choosing $\varepsilon_0$ sufficiently small, we get $|\operatorname{tr}(I_d-\Sigma)|\le\frac{d}{16R^2}$. Therefore,
\[
g^\top(I_d-\Sigma)g-\operatorname{tr}(I_d-\Sigma)\ge\frac{d}{16R^2}.
\]
Now the Hanson--Wright inequality \citep{hanson1971bound} gives
\[
\mathbb P\left(g^\top(I_d-\Sigma)g-\operatorname{tr}(I_d-\Sigma)\ge u\right)\le2\exp\left[-c\min\left\{\frac{u^2}{\|I_d-\Sigma\|_F^2},\frac{u}{\|I_d-\Sigma\|_{\mathrm{op}}}\right\}\right].
\]
Using
\[
\|I_d-\Sigma\|_F\le \sqrt{48\varepsilon_0},\qquad \|I_d-\Sigma\|_{\mathrm{op}}\le \sqrt{48\varepsilon_0},
\]
and taking $u=d/(16R^2)$, together with $d=\lceil R^4\rceil$, gives
\[
\mathbb P\left(g^\top(I_d-\Sigma)g-\operatorname{tr}(I_d-\Sigma)\ge\frac{d}{16R^2}\right)\le2e^{-cR^2}
\]
for a constant $c>0$. Moreover, $\|g\|^2\sim\chi_d^2$, and the standard chi-square lower-tail bound gives
\[
\mathbb P(\|g\|^2<d/2)\le e^{-cd}\le e^{-cR^2}
\]
for all sufficiently large $\kappa$. The last inequality holds since $d\ge R^4$. So, the event
\[
F_{V_v}(q)-F_{V_v}^\star\le2\varepsilon_0
\] 
is therefore contained in the union of these two events. Hence
\[
\mathbb P_v\left(F_{V_v}(q)-F_{V_v}^\star\le2\varepsilon_0\right)\le Ce^{-cR^2}.
\]
Since $C$ is fixed, decreasing $c>0$ and increasing the threshold on $\kappa$ if necessary gives
\[
\mathbb P_v\left(F_{V_v}(q)-F_{V_v}^\star\le2\varepsilon_0\right)\le e^{-cR^2}.
\]
\end{proof}

We now combine Lemmas~\ref{lem:lb-common-path} and~\ref{lem:lb-compromise} to prove the main theorem.

\begin{proof}[Proof of Theorem~\ref{thm:lb-main}]
Fix the constants $\varepsilon_0,c>0$ supplied by Lemma~\ref{lem:lb-compromise}, and let $C_1\ge1$ be a fixed constant for which the probability estimate in Lemma~\ref{lem:lb-common-path} holds. We first establish the required bounds for all sufficiently large $\kappa$ and then choose a single threshold $\kappa_0$.

Fix an integer $r\ge1$ and a randomized order-$r$ algorithm that makes at most $N$ queries. For a fixed algorithm randomness $\omega$, let $x_1^0,\ldots,x_N^0$ denote its queries against $V_B$, and let $q^0(\omega)$ be the corresponding output. By Lemma~\ref{lem:lb-common-path},
\[
\mathbb P_v\left(\exists t:v^\top\proj_Bx_t^0>R-\tau\right)\le C_1\frac{N}{\kappa}\exp\left[C_1(\log\kappa)^{3/4}\right].
\]
Run the algorithm against $V_v$ using the same $\omega$. If none of the reference queries satisfies $v^\top\proj_Bx_t^0>R-\tau$, the two transcripts and outputs coincide, so the actual output is $q=q^0(\omega)$.

For each fixed $\omega$, the reference output $q^0(\omega)$ does not depend on $v$. Lemma~\ref{lem:lb-compromise} therefore gives
\[
\mathbb P_v\left(F_{V_v}(q^0(\omega))-F_{V_v}^\star\le2\varepsilon_0\right)\le e^{-cR^2}.
\]
Consequently, for each fixed $\omega$,
\[
\left\{ F_{V_v}(q)-F_{V_v}^\star\le2\varepsilon_0 \right\}
\subseteq
\left\{\exists t:\, v^\top\Pi_B x_t^0(\omega)>R-\tau \right\} \cup
\left\{ F_{V_v}(q^0(\omega))-F_{V_v}^\star
\le2\varepsilon_0 \right\}.
\]
Taking a union bound and averaging over the
algorithm's randomness yields
\[
\mathbb P_{v,\mathrm{alg}}\left(F_{V_v}(q)-F_{V_v}^\star\le2\varepsilon_0\right)\le C_1\frac{N}{\kappa}\exp\left[C_1(\log\kappa)^{3/4}\right]+e^{-cR^2}.
\]

Now suppose that the algorithm satisfies
\[
\sup_{v\in\mathbb S^{d-1}}\mathbb E\left[F_{V_v}(q)-F_{V_v}^\star\right]\le\varepsilon_0.
\]
Averaging over $v$ gives
\[
\mathbb E_{v,\mathrm{alg}}\left[F_{V_v}(q)-F_{V_v}^\star\right]\le\varepsilon_0.
\]
Since the objective gap is nonnegative, Markov's inequality implies
\[
\mathbb P_{v,\mathrm{alg}}\left(F_{V_v}(q)-F_{V_v}^\star\le2\varepsilon_0\right)\ge\frac12.
\]
Combining the two probability bounds gives
\[
\frac12\le C_1\frac{N}{\kappa}\exp\left[C_1(\log\kappa)^{3/4}\right]+e^{-cR^2}.
\]
Because $R\to\infty$, the last term is at most $1/4$ for all sufficiently large $\kappa$. Hence
\[
C_1\frac{N}{\kappa}\exp\left[C_1(\log\kappa)^{3/4}\right]\ge\frac14,
\]
and therefore
\[
N\ge\frac{\kappa}{4C_1}\exp\left[-C_1(\log\kappa)^{3/4}\right].
\]

To remove the fixed multiplicative factor, choose the constant in the theorem as $C:=C_1+1$. Since for all sufficiently large $\kappa$,
\[
\exp\left[-(\log\kappa)^{3/4}\right]\le\frac{1}{4C_1}.
\]
Consequently,
\[
N\ge\frac{\kappa}{4C_1}\exp\left[-C_1(\log\kappa)^{3/4}\right]\ge\kappa\exp\left[-C(\log\kappa)^{3/4}\right].
\]

It remains to verify the dimension and initial-gap requirements. The construction gives
\[
d=\Theta((\log\kappa)^2),
\]
and Lemma~\ref{lem:lb-full-jet} gives
\[
V_v\in C^\infty(\mathbb R^d),\qquad I_d\preceq\nabla^2V_v(x)\preceq\kappa I_d
\]
for every $v\in\mathbb S^{d-1}$ and $x\in\mathbb R^d$. Thus the entire family belongs to $\mathcal V_d(\kappa)$. The same lemma also gives
\[
F_{V_v}(\gamma)-F_{V_v}^\star=1+o(1)
\]
uniformly over $v\in\mathbb S^{d-1}$. Fix $C_0:=2$. Then, for all sufficiently large $\kappa$,
\[
\sup_{v\in\mathbb S^{d-1}}\left(F_{V_v}(\gamma)-F_{V_v}^\star\right)\le C_0.
\]

We have now fixed $\varepsilon_0$, $C_0$, and $C$. Choose $\kappa_0>1$ larger than the finitely many thresholds needed for the construction and the preceding lemmas, and this proves the theorem.
\end{proof}

\section{Proofs for Section~\ref{sec:stochastic}}\label{app:stochastic}

We first analyze the Cholesky finite-difference construction and its variants, and then turn to the symmetric-root score estimators.

\subsection{Cholesky proximal acceleration}\label{subsec:fd-prox}

We first establish the unbiasedness and variance bound of the finite-difference estimator used in Algorithm~\ref{alg:fd-prox}.

\begin{proposition}\label{prop:fd-estimator-full}
For every $(m,L)\in\mathbb{R}^d\times\mathbb{T}^d$,
\[
\mathbb{E}[G^{\mathrm{FD}}(m,L;\xi)]=\nabla E(m,L).
\]
Under Assumption~\ref{assump:hessian-lipschitz},
\[
\mathbb{E}\|G^{\mathrm{FD}}(m,L;\xi)-\nabla E(m,L)\|^2\le\frac{L_H^2}{2}\left(3(d+5)\|L\|_F^4+\nu^2d(d+3)\|L\|_F^2\right).
\]
If $V$ is quadratic, then $G^{\mathrm{FD}}(m,L;\xi)=\nabla E(m,L)$ for every $\xi$.
\end{proposition}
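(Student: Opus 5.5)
The plan is to write the FD estimator as the exact reparameterization gradient plus a zero-mean correction, and then to bound the variance by the raw second moment of the residual $r_\nu$. Write $X=m+L\xi$.

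\textbf{Unbiasedness.} Since $\mathbb E\xi=0$ and $\mathbb E[\xi\xi^\top]=I_d$, we have
\[
\mathbb E[r_\nu]=\mathbb E\nabla V(X)-\nabla V(m).
\]
Hence the mean component has expectation $\mathbb E\nabla V(X)$. For the matrix component,
\[
\mathbb E[r_\nu\xi^\top]=\mathbb E[\nabla V(X)\xi^\top]-D_\nu(m)L,
\]
so $D_\nu(m)L+\mathbb E[r_\nu\xi^\top]=\mathbb E[\nabla V(X)\xi^\top]$. Applying $\operatorname{low}$ gives the Cholesky gradient formula from Section~\ref{sec:exact}. The integrability needed to pass expectations inside follows from the linear growth of $\nabla V$, as at the start of Appendix~\ref{app:intro}.

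\textbf{Variance: reduction to a residual moment.} The error equals $\bigl(r_\nu-\mathbb E r_\nu,\ \operatorname{low}(r_\nu\xi^\top-\mathbb E[r_\nu\xi^\top])\bigr)$. The $\operatorname{low}$ projection is a Frobenius contraction, and a variance is at most the corresponding second moment. Together with $\|r\xi^\top\|_F=\|r\|\|\xi\|$, this gives
\[
\mathbb E\|G^{\mathrm{FD}}-\nabla E\|^2\le\mathbb E\bigl[\|r_\nu\|^2(1+\|\xi\|^2)\bigr].
\]

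\textbf{Variance: splitting the residual.} Next split
\[
r_\nu=\bigl[\nabla V(m+L\xi)-\nabla V(m)-\nabla^2V(m)L\xi\bigr]+\bigl(\nabla^2V(m)-D_\nu(m)\bigr)L\xi.
\]
Taylor's theorem with Assumption~\ref{assump:hessian-lipschitz} bounds the first bracket by $\tfrac{L_H}{2}\|L\xi\|^2$. For the second term, each column of $D_\nu(m)-\nabla^2V(m)$ is
\[
\nu^{-1}\!\int_0^\nu\bigl(\nabla^2V(m+se_j)-\nabla^2V(m)\bigr)e_j\,ds,
\]
which has norm at most $L_H|\nu|/2$. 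Hence the operator norm of this matrix is at most its Frobenius norm, which is at most $L_H|\nu|\sqrt d/2$. Using $(a+b)^2\le 2a^2+2b^2$ yields
\[
\mathbb E\bigl[\|r_\nu\|^2(1+\|\xi\|^2)\bigr]\le\frac{L_H^2}{2}\,\mathbb E\bigl[\|L\xi\|^4(1+\|\xi\|^2)\bigr]+\frac{L_H^2\nu^2d}{2}\,\mathbb E\bigl[\|L\xi\|^2(1+\|\xi\|^2)\bigr].
\]

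\textbf{Gaussian moments (main obstacle).} The main obstacle is computing these Gaussian moments with the stated constants, which I would do with Isserlis' theorem. Set $M=L^\top L$, so $\operatorname{tr}M=\|L\|_F^2$.
\begin{itemize}
\item $\mathbb E\|L\xi\|^2=\|L\|_F^2$.
\item $\mathbb E\bigl[\|L\xi\|^2\|\xi\|^2\bigr]=(d+2)\|L\|_F^2$, so the second expectation equals $(d+3)\|L\|_F^2$.
\item $\mathbb E\|L\xi\|^4=(\operatorname{tr}M)^2+2\operatorname{tr}M^2\le 3\|L\|_F^4$.
\item The sixth moment $\mathbb E\bigl[(\xi^\top M\xi)^2\|\xi\|^2\bigr]$ expands into $(d+4)(\operatorname{tr}M)^2+2(d+4)\operatorname{tr}M^2$ plus lower-order terms in $\operatorname{tr}M^3$. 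I would verify that these are absorbed so the total is at most $3(d+4)\|L\|_F^4$, using $\operatorname{tr}M^2\le(\operatorname{tr}M)^2$ and $\operatorname{tr}M^3\le(\operatorname{tr}M)^3$.
\end{itemize}
Summing the last two bounds gives $3(d+5)\|L\|_F^4$, which is exactly the claimed bound.

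\textbf{Quadratic case.} If $V$ is quadratic, then $D_\nu(m)=\nabla^2V(m)$ and the Taylor expansion is exact, so $r_\nu\equiv0$. Therefore $G^{\mathrm{FD}}=\bigl(\nabla V(m),\operatorname{low}(\nabla^2V\,L)\bigr)$ deterministically, and this equals $\nabla E(m,L)$ by Stein's identity.
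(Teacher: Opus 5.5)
Your argument matches the paper's proof step for step. It uses the same unbiasedness computation and the same reduction to $\mathbb E[(1+\|\xi\|^2)\|r_\nu\|^2]$, via variance $\le$ second moment and the contraction property of $\operatorname{low}$. It uses the same Taylor/finite-difference split of $r_\nu$ with $\|D_\nu(m)-\nabla^2V(m)\|_F\le L_H|\nu|\sqrt d/2$, and the same Gaussian moment bounds.

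The one point to fix is your sixth-moment remark: there are no $\operatorname{tr}M^3$ terms. The quantity $\mathbb E[(\xi^\top M\xi)^2\|\xi\|^2]$ is homogeneous of degree two in $M$, and the Isserlis expansion over its $15$ pairings gives exactly
\[
d(\operatorname{tr}M)^2+4(\operatorname{tr}M)^2+2d\operatorname{tr}M^2+8\operatorname{tr}M^2=(d+4)\bigl[(\operatorname{tr}M)^2+2\operatorname{tr}M^2\bigr]=(d+4)\,\mathbb E\|L\xi\|^4 .
\]
So the leading terms you wrote are the whole answer. This matters because your proposed absorption would have failed had such terms existed: $\operatorname{tr}M^3\le(\operatorname{tr}M)^3=\|L\|_F^6$ cannot be dominated by a multiple of $\|L\|_F^4$ uniformly in $L$. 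With the exact identity, $\operatorname{tr}M^2\le(\operatorname{tr}M)^2$ alone gives the bound $3(d+4)\|L\|_F^4$.

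The paper obtains the same identity without Isserlis. It writes $\xi=\|\xi\|u$ with $\|\xi\|$ independent of the uniform direction $u$, so that $\mathbb E[\|\xi\|^2\|L\xi\|^4]=\bigl(\mathbb E\|\xi\|^6/\mathbb E\|\xi\|^4\bigr)\,\mathbb E\|L\xi\|^4=(d+4)\,\mathbb E\|L\xi\|^4$.
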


\begin{proof}
Fix $(m,L)$ and write $D=D_\nu(m)$ and $r=r_\nu(m,L;\xi)$. Since $\mathbb{E}\xi=0$ and $\mathbb{E}[\xi\xi^\top]=I_d$,
\[
\mathbb{E}[\nabla V(m)+r]=\mathbb{E}[\nabla V(m+L\xi)],\qquad DL+\mathbb{E}[r\xi^\top]=\mathbb{E}[\nabla V(m+L\xi)\xi^\top].
\]
Taking the lower-triangular part proves unbiasedness. Let $H=\nabla^2 V(m)$. For each $j$,
\[
D_\nu(m;e_j)-He_j=\int_0^1[\nabla^2V(m+t\nu e_j)-H]e_j\,dt.
\]
Therefore
\[
\|D-H\|_{\mathrm{op}}\le\|D-H\|_F\le\frac{L_H|\nu|\sqrt d}{2}.
\]
Taylor's theorem gives
\[
\|\nabla V(m+L\xi)-\nabla V(m)-HL\xi\|\le\frac{L_H}{2}\|L\xi\|^2,
\]
and hence
\[
\|r\|\le\frac{L_H}{2}\|L\xi\|^2+\frac{L_H|\nu|\sqrt d}{2}\|L\xi\|.
\]
Since $\mathbb{E}G^{\mathrm{FD}}=\nabla E$, the centered estimator is
\[
G^{\mathrm{FD}}-\nabla E=\left(r-\mathbb{E}r,\operatorname{low}\left(r\xi^\top-\mathbb{E}[r\xi^\top]\right)\right).
\]
For every square-integrable $Z$,
\[
\mathbb{E}\|Z-\mathbb{E}Z\|^2\le\mathbb{E}\|Z\|^2.
\]
Moreover, $\operatorname{low}(\cdot)$ is a Frobenius contraction and $\|r\xi^\top\|_F^2=\|r\|^2\|\xi\|^2$. Hence
\[
\mathbb{E}\|G^{\mathrm{FD}}-\nabla E\|^2\le\mathbb{E}[(1+\|\xi\|^2)\|r\|^2].
\]
Using $(a+b)^2\le2a^2+2b^2$,
\[
\|r\|^2\le\frac{L_H^2}{2}\left(\|L\xi\|^4+\nu^2d\|L\xi\|^2\right).
\]

Let $A=L^\top L\succeq0$. Then $\|L\xi\|^2=\xi^\top A\xi$ and $\operatorname{tr}(A)=\|L\|_F^2$. The Gaussian identities
\[
\mathbb{E}[\xi_i\xi_j]=\delta_{ij},\qquad \mathbb{E}[\xi_i\xi_j\xi_k\xi_\ell]=\delta_{ij}\delta_{k\ell}+\delta_{ik}\delta_{j\ell}+\delta_{i\ell}\delta_{jk}
\]
give
\[
\mathbb{E}[\xi^\top A\xi]=\operatorname{tr}(A),\qquad \mathbb{E}[(\xi^\top A\xi)^2]=(\operatorname{tr}A)^2+2\operatorname{tr}(A^2)\le3(\operatorname{tr}A)^2
\]
and
\[
\mathbb{E}[\|\xi\|^2(\xi^\top A\xi)]=(d+2)\operatorname{tr}(A).
\]
Consequently,
\[
\mathbb{E}[(1+\|\xi\|^2)\|L\xi\|^2]=(d+3)\|L\|_F^2.
\]
Writing $\xi=Ru$ with $R=\|\xi\|$ and using $\mathbb{E}[R^6]=(d+4)\mathbb{E}[R^4]$ gives
\[
\mathbb{E}[\|\xi\|^2\|L\xi\|^4]=(d+4)\mathbb{E}\|L\xi\|^4\le3(d+4)\|L\|_F^4.
\]
Thus
\[
\mathbb{E}[(1+\|\xi\|^2)\|L\xi\|^4]\le3(d+5)\|L\|_F^4.
\]
Combining the estimates yields
\[
\mathbb{E}\|G^{\mathrm{FD}}-\nabla E\|^2\le\frac{L_H^2}{2}\left(3(d+5)\|L\|_F^4+\nu^2d(d+3)\|L\|_F^2\right).
\]

If $V$ is quadratic with Hessian $H$, then $D_\nu(m;e_j)=He_j$ for every $j$ and $\nabla V(m+L\xi)=\nabla V(m)+HL\xi$. Thus $D=H$, $r=0$, and $G^{\mathrm{FD}}(m,L;\xi)=\nabla E(m,L)$ for every $\xi$.
\end{proof}

We next derive the constrained proximal map used in the Cholesky update.

\begin{proposition}\label{prop:fd-prox-map}
For $\eta>0$ and $Y\in\mathbb{T}^d$, define $L^{(\gamma)}$ for $\gamma\ge0$ by
\[
L^{(\gamma)}_{ij}:=\frac{Y_{ij}}{1+\gamma}\quad(i>j),\qquad L^{(\gamma)}_{ii}:=\frac{Y_{ii}+\sqrt{Y_{ii}^2+4\eta(1+\gamma)}}{2(1+\gamma)}.
\]
If $\|L^{(0)}\|_F\le\sqrt{d/\alpha}$, then
$\prox_{\eta\bar h}(Y)=L^{(0)}$. Otherwise, there is a unique $\gamma>0$ satisfying $\|L^{(\gamma)}\|_F^2=d/\alpha$ and $\prox_{\eta\bar h}(Y)=L^{(\gamma)}$.
\end{proposition}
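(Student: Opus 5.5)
The plan is to treat $\prox_{\eta\bar h}(Y)$ as a strictly convex program and solve it through a single Lagrange multiplier for the Frobenius-ball constraint. By definition,
\[
\prox_{\eta\bar h}(Y)=\arg\min_{L\in\mathbb T_{++}^d,\ \|L\|_F^2\le d/\alpha}\ \Big\{-\eta\log\det L+\tfrac12\|L-Y\|_F^2\Big\}.
\]
The objective is strictly convex on $\mathbb T_{++}^d$ and diverges as any diagonal entry tends to $0^+$. The feasible set is closed in $\mathbb T_{++}^d$ once we account for this barrier, and it is bounded and nonempty (it contains $tI$ for small $t>0$). Hence a unique minimizer exists, it has strictly positive diagonal, and Slater's condition holds. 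KKT is therefore necessary and sufficient: $L$ is the minimizer if and only if there is $\gamma\ge0$ such that
\[
L-Y-\eta\,\operatorname{diag}(1/L_{11},\dots,1/L_{dd})+\gamma L=0,\qquad \gamma\bigl(\|L\|_F^2-d/\alpha\bigr)=0,
\]
with $\|L\|_F^2\le d/\alpha$.

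Next, fix $\gamma$ and solve the stationarity equation entrywise, exactly as in Proposition~\ref{prop:logdet-prox}.
\begin{itemize}
\item Off-diagonal entries: $(1+\gamma)L_{ij}=Y_{ij}$ for $i>j$.
\item Diagonal entries: $(1+\gamma)\ell^2-Y_{ii}\ell-\eta=0$, whose unique positive root is the stated $L^{(\gamma)}_{ii}$.
\end{itemize}
So for each $\gamma\ge0$, $L^{(\gamma)}$ is the unique minimizer of the penalized problem without the constraint.

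The case split then follows directly.
\begin{itemize}
\item If $\|L^{(0)}\|_F\le\sqrt{d/\alpha}$, then $(L^{(0)},\gamma=0)$ satisfies KKT, so $\prox_{\eta\bar h}(Y)=L^{(0)}$.
\item Otherwise $\gamma=0$ is infeasible, so any KKT pair must have $\gamma>0$. Complementary slackness then forces $\|L^{(\gamma)}\|_F^2=d/\alpha$.
\end{itemize}

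It remains, and this is the only real step, to show the equation $\phi(\gamma):=\|L^{(\gamma)}\|_F^2=d/\alpha$ has exactly one root in $(0,\infty)$. I plan to prove three properties of $\phi$.
\begin{itemize}
\item \textbf{Continuity:} $\phi$ is continuous on $[0,\infty)$.
\item \textbf{Monotonicity:} $\phi$ is strictly decreasing. Each off-diagonal term $Y_{ij}^2/(1+\gamma)^2$ is nonincreasing. For the diagonal terms, write $s=1+\gamma$; then $\ell(s)=\bigl(Y_{ii}+\sqrt{Y_{ii}^2+4\eta s}\bigr)/(2s)$ is the positive root of $s\ell^2-Y_{ii}\ell-\eta=0$. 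Implicit differentiation gives $\ell'(s)=-\ell^2/(2s\ell-Y_{ii})$, and $2s\ell-Y_{ii}=\sqrt{Y_{ii}^2+4\eta s}>0$, so $\ell$ is strictly decreasing in $s$. Since $\ell>0$, so is $\ell^2$.
\item \textbf{Limit:} $\phi(\gamma)\to0$ as $\gamma\to\infty$, because each diagonal entry behaves like $\sqrt{\eta/s}$.
\end{itemize}
Because $\phi(0)>d/\alpha$ in this case, the intermediate value theorem together with strict monotonicity gives a unique $\gamma>0$ with $\phi(\gamma)=d/\alpha$. The pair $(L^{(\gamma)},\gamma)$ is then a KKT point, so it is the minimizer, which completes the argument.
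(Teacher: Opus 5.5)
Your proposal is correct and follows essentially the same route as the paper: one KKT multiplier for the Frobenius-ball constraint, an entrywise solution of stationarity that yields $L^{(\gamma)}$, and uniqueness of the active-case multiplier from strict monotonicity together with $\|L^{(\gamma)}\|_F\to0$. Your implicit derivative $-\ell^2/\sqrt{Y_{ii}^2+4\eta s}$ equals the paper's $-L_{ii}/(1+\gamma+\eta/L_{ii}^2)$, and your explicit appeal to Slater's condition and KKT sufficiency slightly tightens the paper's brief justification of the case $\|L^{(0)}\|_F\le\sqrt{d/\alpha}$.
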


\begin{proof}
The proximal subproblem is
\[
\min_{L\in\mathbb{T}_{++}^d}\left\{-\eta\log\det L+\frac12\|L-Y\|_F^2:\|L\|_F^2\le\frac{d}{\alpha}\right\}.
\]
The objective is strictly convex and diverges as any diagonal entry approaches zero, so the minimizer is unique and belongs to $\mathbb T_{++}^d$. Let $\gamma\ge0$ be the multiplier for the norm constraint. The KKT conditions are
\[
(1+\gamma)L_{ij}=Y_{ij}\quad(i>j),\qquad (1+\gamma)L_{ii}-Y_{ii}-\frac{\eta}{L_{ii}}=0.
\]
The diagonal equation has a unique positive root, giving $L=L^{(\gamma)}$. If $\|L^{(0)}\|_F^2\le d/\alpha$, complementary slackness gives $\gamma=0$. Otherwise, the norm constraint is active and $\|L^{(\gamma)}\|_F^2=d/\alpha$. For $i>j$, $|L_{ij}^{(\gamma)}|=|Y_{ij}|/(1+\gamma)$ is nonincreasing in $\gamma$. For each diagonal entry, differentiating the stationarity equation gives
\[
\frac{dL_{ii}^{(\gamma)}}{d\gamma}=-\frac{L_{ii}^{(\gamma)}}{1+\gamma+\eta/(L_{ii}^{(\gamma)})^2}<0.
\]
Hence $\gamma\mapsto\|L^{(\gamma)}\|_F$ is continuous and strictly decreasing. Moreover,
\[
\lim_{\gamma\to\infty}\|L^{(\gamma)}\|_F=0.
\]
Since $\|L^{(0)}\|_F>\sqrt{d/\alpha}$, there is a unique $\gamma>0$ satisfying the constraint.
\end{proof}

We now prove the fixed-batch proximal acceleration result used in the complexity analysis.

\begin{lemma}\label{lem:prox-fixed-batch}
Let $\Phi=f+h$, where $f$ is $\alpha$-strongly convex and $\beta$-smooth and $h$ is proper, closed, and convex. Let $x_\star$ minimize $\Phi$, write $\Phi^\star:=\Phi(x_\star)$ and $\kappa:=\beta/\alpha$, and set
\[
s:=\sqrt{\max\{\kappa,4\}},\qquad \eta:=\frac{1}{\alpha s^2},\qquad \rho:=\frac{1-s^{-1}}{1+s^{-1}}.
\]
Suppose $x_0=y_0$ and
\[
x_{k+1}=\prox_{\eta h}(y_k-\eta g_k),\qquad y_{k+1}=x_{k+1}+\rho(x_{k+1}-x_k),
\]
where, conditionally on the past $\mathcal F_k$,
\[
\mathbb E[g_k\mid\mathcal F_k]=\nabla f(y_k),\qquad \mathbb E[\|g_k-\nabla f(y_k)\|^2\mid\mathcal F_k]\le\sigma^2.
\]
Then, for every integer $T\ge0$,
\[
\mathbb E[\Phi(x_T)-\Phi^\star]\le e^{-T/s}\left(\Phi(x_0)-\Phi^\star+\frac{\alpha}{2}\|x_0-x_\star\|^2\right)+\frac{\sigma^2}{\alpha s}.
\]
\end{lemma}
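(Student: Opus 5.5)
The plan is to run the standard estimate-sequence (Lyapunov) argument for accelerated proximal gradient with constant momentum, carrying an additional stochastic error term. Write $\theta:=1/s$. The parameters are tuned exactly so that $\theta^2=\eta\alpha$. Since $s^2\ge\kappa$, we also have $\eta\le1/\beta$, and $\rho=(1-\theta)/(1+\theta)$ is the momentum that makes the following auxiliary sequence consistent:
\[
z_{k}:=x_{k-1}+s(x_k-x_{k-1}),\qquad z_0:=x_0.
\]
Concretely, with $\rho=(1-\theta)/(1+\theta)$, one checks that $y_k=\frac{x_k+\theta z_k}{1+\theta}$, equivalently $y_k=(1-\theta)x_k+\theta z_k+O(\cdot)$ in the usual form. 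This check is pure algebra and is the first thing I would verify. The potential will be
\[
\Psi_k:=\Phi(x_k)-\Phi^\star+\frac{\alpha}{2}\|z_k-x_\star\|^2,
\]
so that $\Psi_0=\Phi(x_0)-\Phi^\star+\frac\alpha2\|x_0-x_\star\|^2$ is exactly the bracket in the claim.

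\textbf{One-step inequality.} The key step is an inexact prox-gradient inequality. Set $e_k:=g_k-\nabla f(y_k)$ and $x_{k+1}=\prox_{\eta h}(y_k-\eta g_k)$. Using $\beta$-smoothness with $\eta\le1/\beta$, $\alpha$-strong convexity of $f$, and the prox optimality condition for $h$, I will show that for every $u$,
\[
\Phi(x_{k+1})\le\Phi(u)-\frac\alpha2\|u-y_k\|^2+\frac1{2\eta}\bigl(\|u-y_k\|^2-\|u-x_{k+1}\|^2\bigr)-\langle e_k,x_{k+1}-u\rangle .
\]
The noise term is handled by introducing the noiseless point $\bar x_{k+1}:=\prox_{\eta h}(y_k-\eta\nabla f(y_k))$, which is $\mathcal F_k$-measurable, and splitting
\[
\langle e_k,x_{k+1}-u\rangle=\langle e_k,\bar x_{k+1}-u\rangle+\langle e_k,x_{k+1}-\bar x_{k+1}\rangle .
\]
For $\mathcal F_k$-measurable $u$, the first term has zero conditional mean. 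The second is at most $\eta\|e_k\|^2$ by nonexpansiveness of the prox, so its conditional expectation is at most $\eta\sigma^2$.

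\textbf{Choice of $u$ and recursion.} I then take $u=(1-\theta)x_k+\theta x_\star$, which is $\mathcal F_k$-measurable, and use convexity of $\Phi$ to get $\Phi(u)\le(1-\theta)\Phi(x_k)+\theta\Phi^\star$. Next I rewrite
\[
u-x_{k+1}=\theta(x_\star-z_{k+1}),\qquad u-y_k=\theta(x_\star-\tilde z_k),
\]
where $\tilde z_k$ is the appropriate convex combination of $z_k$ and $y_k$. Using $\theta^2/(2\eta)=\alpha/2$ together with $\|\cdot\|^2$-convexity to absorb the $-\frac\alpha2\|u-y_k\|^2$ term, this yields
\[
\mathbb E[\Psi_{k+1}\mid\mathcal F_k]\le(1-\theta)\Psi_k+\eta\sigma^2 .
\]

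\textbf{Unrolling.} Iterating the recursion and summing the geometric series gives
\[
\mathbb E\Psi_T\le(1-1/s)^T\Psi_0+\eta\sigma^2 s=e^{-T/s}\Psi_0\cdot(\text{via }1-t\le e^{-t})+\frac{\sigma^2}{\alpha s}.
\]
The claim then follows because $\Phi(x_T)-\Phi^\star\le\Psi_T$.

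\textbf{Main obstacle.} The step I expect to be the main obstacle is the bookkeeping in the recursion step. I must confirm that the momentum $\rho$ makes $y_k$ the exact convex combination of $x_k$ and $z_k$ required, and that the strong-convexity term $-\frac\alpha2\|u-y_k\|^2$ exactly compensates the mismatch between $\|x_\star-\tilde z_k\|^2$ and $(1-\theta)\|x_\star-z_k\|^2$. If the constants do not close exactly, I would fall back on the standard three-sequence form of Nesterov's method (as in Algorithm~\ref{alg:score-proj}), which is algebraically equivalent when $h$ is handled by the prox step.
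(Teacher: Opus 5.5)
There is a gap in the recursion step, exactly where you suspected. The rest checks out. $z_k=x_{k-1}+s(x_k-x_{k-1})$ is exactly the paper's $v_k$, and the identity $y_k=(x_k+\theta z_k)/(1+\theta)$ holds. Your one-step inequality is valid after dropping $-(\frac{1}{2\eta}-\frac{\beta}{2})\|x_{k+1}-y_k\|^2\le0$. Your handling of the noise through the $\mathcal F_k$-measurable point $\bar x_{k+1}$ and nonexpansiveness is the same device the paper uses.

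\textbf{Where the recursion fails.} With $u=(1-\theta)x_k+\theta x_\star$ one has $\tilde z_k=(1-\theta)z_k+\theta y_k=\frac{\theta x_k+z_k}{1+\theta}$. The two $\|u-y_k\|^2$ terms combine into $\frac{\alpha}{2}(1-\theta^2)\|\tilde z_k-x_\star\|^2$. If you bound $\Phi(u)$ by plain convexity, as you propose, closing the recursion requires $(1-\theta^2)\|\tilde z_k-x_\star\|^2\le(1-\theta)\|z_k-x_\star\|^2$. This is false: for $z_k=x_\star\neq x_k$ the left side is $\frac{(1-\theta)\theta^2}{1+\theta}\|x_k-x_\star\|^2>0$. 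The term $-\frac{\alpha}{2}\|u-y_k\|^2$ only rescales the same quantity $\|\tilde z_k-x_\star\|^2$. So nothing remains to absorb the contribution of $x_k$, or of $y_k$ if you apply Jensen to the split $(1-\theta)z_k+\theta y_k$. The natural salvage via $\frac{\alpha}{2}\|x_k-x_\star\|^2\le\Phi(x_k)-\Phi^\star$ only gives the unaccelerated factor $1-\theta^2$.

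\textbf{The fix.} The missing ingredient is strong convexity of $\Phi$ along the segment $[x_k,x_\star]$. This is available because $f$ is $\alpha$-strongly convex and $h$ is convex:
\[
\Phi(u)\le(1-\theta)\Phi(x_k)+\theta\Phi^\star-\frac{\alpha}{2}\theta(1-\theta)\|x_k-x_\star\|^2 .
\]
Jensen with weights $\frac{\theta}{1+\theta},\frac{1}{1+\theta}$ gives $(1-\theta^2)\|\tilde z_k-x_\star\|^2\le(1-\theta)\theta\|x_k-x_\star\|^2+(1-\theta)\|z_k-x_\star\|^2$. The first term cancels exactly, and $\mathbb E[\Psi_{k+1}\mid\mathcal F_k]\le(1-\theta)\Psi_k+\eta\sigma^2$ follows.

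An equivalent repair is to split $f(y_k)+\langle\nabla f(y_k),u-y_k\rangle$ linearly between $x_k$ and $x_\star$ and use strong convexity only at $x_\star$. The resulting $-\frac{\alpha\theta}{2}\|y_k-x_\star\|^2$ then cancels the Jensen defect of the split $\tilde z_k=(1-\theta)z_k+\theta y_k$.

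\textbf{Comparison with the paper.} With this repair, your Lyapunov argument is a valid and somewhat shorter alternative to the paper's proof. The paper instead builds the quadratic estimate sequence $\varphi_k$ from lower models $\Lambda_k$ containing a subgradient $p_k\in\partial h(x_{k+1})$. It then proves $\mathbb E[\varphi_k^{\min}]\ge\mathbb E[\Phi(x_k)]-\zeta_k$ by induction. Both routes give the same contraction and the same accumulated noise $\eta s\sigma^2=\sigma^2/(\alpha s)$. The estimate-sequence form adapts directly to the three-sequence projected scheme of Lemma~\ref{lem:score-proj-fixed-batch}. There $v_k$ is projected separately and is no longer determined by $x_k$ and $x_{k-1}$.
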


\begin{proof}
Since $s\ge2$ and $s^2\ge\kappa=\beta/\alpha$, we have $\beta\eta\le1$. Proximal optimality gives
\[
p_k:=\frac{y_k-x_{k+1}}{\eta}-g_k\in\partial h(x_{k+1}),\qquad \widetilde g_k:=g_k+p_k=\frac{y_k-x_{k+1}}{\eta}.
\]
Define
\[
\Lambda_k(z):=f(y_k)+\langle g_k,z-y_k\rangle+\frac{\alpha}{2}\|z-y_k\|^2+h(x_{k+1})+\langle p_k,z-x_{k+1}\rangle.
\]
By strong convexity of $f$, convexity of $h$, and conditional unbiasedness,
\[
\mathbb E[\Lambda_k(z)\mid\mathcal F_k]\le\Phi(z)
\]
for every $\mathcal F_k$-measurable $z$. Let
\[
\bar x_k:=\prox_{\eta h}(y_k-\eta\nabla f(y_k)).
\]
Decomposing $x_{k+1}-y_k=(x_{k+1}-\bar x_k)+(\bar x_k-y_k)$ and using conditional centering gives
\[
\mathbb E[\langle\nabla f(y_k)-g_k,x_{k+1}-y_k\rangle\mid\mathcal F_k]=\mathbb E[\langle\nabla f(y_k)-g_k,x_{k+1}-\bar x_k\rangle\mid\mathcal F_k].
\]
Nonexpansiveness of the proximal map gives
\[
\|x_{k+1}-\bar x_k\|\le\eta\|g_k-\nabla f(y_k)\|,
\]
and therefore
\[
\mathbb E[\langle\nabla f(y_k)-g_k,x_{k+1}-y_k\rangle\mid\mathcal F_k]\le\eta\sigma^2.
\]
By $\beta$-smoothness,
\[
\Phi(x_{k+1})\le f(y_k)+h(x_{k+1})+\langle\nabla f(y_k),x_{k+1}-y_k\rangle+\frac{\beta}{2}\|x_{k+1}-y_k\|^2.
\]
Since
\[
\Lambda_k(y_k)=f(y_k)+h(x_{k+1})+\langle p_k,y_k-x_{k+1}\rangle,
\]
we obtain
\[
\Phi(x_{k+1})\le\Lambda_k(y_k)+\langle\widetilde g_k,x_{k+1}-y_k\rangle+\langle\nabla f(y_k)-g_k,x_{k+1}-y_k\rangle+\frac{\beta}{2}\|x_{k+1}-y_k\|^2.
\]
Using $x_{k+1}-y_k=-\eta\widetilde g_k$ and taking conditional expectations,
\[
\mathbb E[\Phi(x_{k+1})\mid\mathcal F_k]\le\mathbb E[\Lambda_k(y_k)\mid\mathcal F_k]+\left(\frac{\beta\eta^2}{2}-\eta\right)\mathbb E[\|\widetilde g_k\|^2\mid\mathcal F_k]+\eta\sigma^2.
\]

Define
\[
\varphi_0(z):=\Phi(x_0)+\frac{\alpha}{2}\|z-x_0\|^2,\qquad \varphi_{k+1}:=(1-s^{-1})\varphi_k+s^{-1}\Lambda_k.
\]
Let
\[
v_k:=\arg\min_z\varphi_k(z),\qquad \varphi_k^{\min}:=\varphi_k(v_k).
\]
Every $\varphi_k$ is a quadratic with Hessian $\alpha I$, so
\[
\varphi_k(z)=\varphi_k^{\min}+\frac{\alpha}{2}\|z-v_k\|^2.
\]
The stationarity condition for $\varphi_{k+1}$ gives
\[
v_{k+1}=(1-s^{-1})v_k+s^{-1}y_k-\frac{1}{\alpha s}\widetilde g_k.
\]
We next establish
\[
y_k=\frac{x_k+s^{-1}v_k}{1+s^{-1}}.
\]
It holds at $k=0$. Assuming it at iteration $k$ and using $x_{k+1}=y_k-\eta\widetilde g_k$ gives
\[
s^{-1}v_{k+1}=x_{k+1}-(1-s^{-1})x_k,
\]
which is equivalent to the momentum update for $y_{k+1}$. Set
\[
\zeta_0:=0,\qquad \zeta_{k+1}:=(1-s^{-1})\zeta_k+\eta\sigma^2.
\]
We prove by induction that
\[
\mathbb E[\varphi_k^{\min}]\ge\mathbb E[\Phi(x_k)]-\zeta_k.
\]
The claim holds with equality at $k=0$. Since
\[
x_k-y_k=-s^{-1}(v_k-y_k),
\]
the lower-model property at $x_k$ gives
\[
\Lambda_k(x_k)=\Lambda_k(y_k)-s^{-1}\langle\widetilde g_k,v_k-y_k\rangle+\frac{\alpha}{2s^2}\|v_k-y_k\|^2.
\]
Completing the square in $\varphi_{k+1}$ gives
\[
\varphi_{k+1}^{\min}=(1-s^{-1})\varphi_k^{\min}+s^{-1}\Lambda_k(y_k)+(1-s^{-1})s^{-1}\left(\langle\widetilde g_k,v_k-y_k\rangle+\frac{\alpha}{2}\|v_k-y_k\|^2\right)-\frac{1}{2\alpha s^2}\|\widetilde g_k\|^2.
\]
Using the induction hypothesis and the lower-model bound at $x_k$, the inner-product terms cancel and yield
\[
\mathbb E[\varphi_{k+1}^{\min}]\ge\mathbb E[\Lambda_k(y_k)]-(1-s^{-1})\zeta_k-\frac{\eta}{2}\mathbb E\|\widetilde g_k\|^2+\frac{\alpha}{2}(1-s^{-1})s^{-1}(1+s^{-1})\mathbb E\|v_k-y_k\|^2.
\]
Dropping the final nonnegative term,
\[
\mathbb E[\varphi_{k+1}^{\min}]\ge\mathbb E[\Lambda_k(y_k)]-(1-s^{-1})\zeta_k-\frac{\eta}{2}\mathbb E\|\widetilde g_k\|^2.
\]
The one-step upper bound implies
\[
\mathbb E[\Lambda_k(y_k)]\ge\mathbb E[\Phi(x_{k+1})]+\left(\eta-\frac{\beta\eta^2}{2}\right)\mathbb E\|\widetilde g_k\|^2-\eta\sigma^2.
\]
Therefore
\[
\mathbb E[\varphi_{k+1}^{\min}]\ge\mathbb E[\Phi(x_{k+1})]-\zeta_{k+1}+\frac{\eta}{2}(1-\beta\eta)\mathbb E\|\widetilde g_k\|^2.
\]
Since $\beta\eta\le1$, the induction closes. Finally, the lower-model property at $x_\star$ gives
\[
\mathbb E[\varphi_T(x_\star)]-\Phi^\star\le(1-s^{-1})^T\left(\varphi_0(x_\star)-\Phi^\star\right).
\]
Using the bound established above,
\[
\mathbb E[\varphi_T^{\min}]\ge\mathbb E[\Phi(x_T)]-\zeta_T,
\]
together with $\varphi_T^{\min}\le\varphi_T(x_\star)$, we obtain
\[
\mathbb E[\Phi(x_T)-\Phi^\star] \le (1-s^{-1})^T \left( \Phi(x_0)-\Phi^\star+\frac{\alpha}{2}\|x_0-x_\star\|^2 \right) +\zeta_T.
\]
Moreover,
\[
\zeta_T= \eta\sigma^2\sum_{j=0}^{T-1}(1-s^{-1})^j \le \eta s\sigma^2 = \frac{\sigma^2}{\alpha s}.
\]
Using $(1-s^{-1})^T\le e^{-T/s}$ completes the proof.
\end{proof}

We can now combine the estimator bound with the  acceleration lemma.

\begin{theorem}[Full version of Theorem~\ref{thm:fd-prox}]\label{thm:fd-prox-full}
Suppose Assumptions~\ref{assump:curvature} and~\ref{assump:hessian-lipschitz} hold, $x_0\in\operatorname{dom}\bar h$, $0<|\nu|\le\alpha^{-1/2}$, and $0<\varepsilon\le\Delta_0$, where
\[
\Delta_0:=F(q_0)-F^\star+\frac{\alpha}{2}\|x_0-x_\star\|^2.
\]
Let
\[
s:=\sqrt{\max\{\kappa,4\}},\qquad b:=\left\lceil\max\left\{1,\frac{1494\lambda_H^2d^3}{s\varepsilon}\right\}\right\rceil,\qquad T_\varepsilon:=\left\lceil s\log\frac{2\Delta_0}{\varepsilon}\right\rceil.
\]
Choose
\[
\eta:=\frac{1}{\alpha s^2},\qquad \rho:=\frac{s-1}{s+1}.
\]
Then Algorithm~\ref{alg:fd-prox} with $G_b^{\mathrm{FD}}$ satisfies
\[
\mathbb E[F(q_{T_\varepsilon})-F^\star]\le\varepsilon.
\]
The exact first-order oracle count is
\[
N_{\nabla V}=T_\varepsilon(d+1+b),
\]
and consequently
\[
T_\varepsilon=O\left(\sqrt\kappa\log\frac{2\Delta_0}{\varepsilon}\right),\qquad N_{\nabla V}=O\left(\left(d\sqrt\kappa+\frac{\lambda_H^2d^3}{\varepsilon}\right)\log\frac{2\Delta_0}{\varepsilon}\right).
\]
\end{theorem}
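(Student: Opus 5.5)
We apply Lemma~\ref{lem:prox-fixed-batch} with $f=E$, $h=\bar h$ and $g_k=G_b^{\mathrm{FD}}(y_k)$. We then bound the conditional variance $\sigma^2$ uniformly along the trajectory using Proposition~\ref{prop:fd-estimator-full}, and choose $b$ so that $\sigma^2/(\alpha s)\le\varepsilon/2$.

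\textbf{Hypotheses of the lemma.} By Lemma~\ref{lem:curvature-inheritance}, $E$ is $\alpha$-strongly convex and $\beta$-smooth on all of $\mathbb R^d\times\mathbb T^d$. The function $\bar h$ is the sum of $-\log\det$ (restricted to $\mathbb T^d_{++}$) and the indicator of a closed Frobenius ball, so it is proper, closed and convex. Its proximal map is given by Proposition~\ref{prop:fd-prox-map}.

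Since $\|L_\star\|_F^2=\operatorname{tr}\Sigma_\star\le d/\alpha$, the minimizer of $E+\bar h$ coincides with $x_\star$. The optimal value is $F^\star$, and $F(q_k)=(E+\bar h)(x_k)$ because $x_k\in\operatorname{dom}\bar h$. Unbiasedness conditional on $\mathcal F_k$ follows from Proposition~\ref{prop:fd-estimator-full}, since $y_k$ is $\mathcal F_k$-measurable and the batch uses fresh samples. The shared $D_\nu(y_k)$ is deterministic given $y_k$, so the $b$ samples are i.i.d.\ conditionally on $\mathcal F_k$ and the variance divides by $b$.

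\textbf{Main obstacle: the variance along extrapolated points.} The estimator is evaluated at $y_k$, not at the proximal iterates, and $y_k$ may leave the ball $\|L\|_F\le\sqrt{d/\alpha}$. However, $y_k=(1+\rho)x_k-\rho x_{k-1}$ with $\rho<1$ and both $x_k,x_{k-1}$ in the ball, so
\[
\|L_{y_k}\|_F\le (1+2\rho)\sqrt{d/\alpha}\le 3\sqrt{d/\alpha}.
\]
Plugging $\|L\|_F^2\le 9d/\alpha$ and $\nu^2\le 1/\alpha$ into Proposition~\ref{prop:fd-estimator-full} gives
\[
\mathbb E\|G^{\mathrm{FD}}-\nabla E\|^2\le \frac{L_H^2}{2}\Bigl(3(d+5)\frac{81d^2}{\alpha^2}+\frac{9d^2(d+3)}{\alpha^2}\Bigr)\le c\,\frac{L_H^2d^3}{\alpha^2},
\]
using $d+5\le 6d$ and $d+3\le 4d$. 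The constant $c$ is of order the stated $1494/2$. Hence
\[
\sigma^2\le \frac{c L_H^2d^3}{\alpha^2 b},\qquad \frac{\sigma^2}{\alpha s}\le \frac{c\lambda_H^2d^3}{s b}.
\]
With $b\ge 1494\lambda_H^2d^3/(s\varepsilon)$, this noise floor is at most $\varepsilon/2$. I expect the only real work to be checking that the arithmetic of the constant matches $1494$, i.e.\ $2c\le 1494$; if it does not, I would refine the bound $\|L_{y_k}\|_F\le(1+2\rho)\sqrt{d/\alpha}$ or the fourth-moment constants.

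\textbf{Conclusion.} Lemma~\ref{lem:prox-fixed-batch} gives
\[
\mathbb E[F(q_T)-F^\star]\le e^{-T/s}\Delta_0+\varepsilon/2.
\]
Choosing $T_\varepsilon=\lceil s\log(2\Delta_0/\varepsilon)\rceil$ makes the first term at most $\varepsilon/2$, so the total is at most $\varepsilon$.

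\textbf{Oracle count.} Each update uses $d+1$ calibration gradients plus $b$ batch gradients, so $N_{\nabla V}=T_\varepsilon(d+1+b)$. Substituting $s=\Theta(\sqrt\kappa)$ and $b\le 1+1494\lambda_H^2d^3/(s\varepsilon)$ gives
\[
T_\varepsilon(d+1+b)=O\Bigl(\bigl(d\sqrt\kappa+\lambda_H^2d^3/\varepsilon\bigr)\log(2\Delta_0/\varepsilon)\Bigr).
\]
Here $\log(2\Delta_0/\varepsilon)\ge\log2$ absorbs the ceiling.
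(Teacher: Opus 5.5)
Your proposal is correct and follows the paper's proof essentially step for step. You bound $\|L_k^y\|_F<3\sqrt{d/\alpha}$ via the momentum relation, plug this and $\nu^2\le1/\alpha$ into Proposition~\ref{prop:fd-estimator-full}, and apply Lemma~\ref{lem:prox-fixed-batch} with variance parameter $\sigma^2/b$. The arithmetic you were unsure about closes exactly, with no refinement needed: $d+5\le6d$ and $d+3\le4d$ give $c=\tfrac12(3\cdot6\cdot81+9\cdot4)=747$. This is the same constant the paper obtains from $(126d+621)d^2\le747d^3$, so $2c=1494$.
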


\begin{proof}
Write $x_k=(m_k,L_k)$ and $y_k=(m_k^y,L_k^y)$. Every proximal iterate satisfies $\|L_k\|_F\le\sqrt{d/\alpha}$.
Since $L_k^y=(1+\rho)L_k-\rho L_{k-1}$ for $k\ge1$ and $0\le\rho<1$, $\|L_k^y\|_F<3\sqrt{d/\alpha}$,
and hence
\[
\|L_k^y\|_F^2<\frac{9d}{\alpha},\qquad \|L_k^y\|_F^4<\frac{81d^2}{\alpha^2}.
\]
Proposition~\ref{prop:fd-estimator-full} and $|\nu|\le\alpha^{-1/2}$ give
\[
\mathbb E[G^{\mathrm{FD}}(m_k^y,L_k^y;\xi)\mid\mathcal F_k]=\nabla E(m_k^y,L_k^y)
\]
and
\[
\mathbb E[\|G^{\mathrm{FD}}(m_k^y,L_k^y;\xi)-\nabla E(m_k^y,L_k^y)\|^2\mid\mathcal F_k]\le\frac{L_H^2}{\alpha^2}(126d+621)d^2.
\]
Let
\[
\sigma^2:=\frac{L_H^2}{\alpha^2}(126d+621)d^2.
\]
For $G_b^{\mathrm{FD}}$,
\[
\mathbb E[G_b^{\mathrm{FD}}(y_k)\mid\mathcal F_k]=\nabla E(y_k),\qquad \mathbb E[\|G_b^{\mathrm{FD}}(y_k)-\nabla E(y_k)\|^2\mid\mathcal F_k]\le\frac{\sigma^2}{b}.
\]
Using $\lambda_H=L_H/\alpha^{3/2}$,
\[
\frac{\sigma^2}{\alpha}=(126d+621)\lambda_H^2d^2\le747\lambda_H^2d^3.
\]

Define $\Phi(m,L):=E(m,L)+\bar h(L)$. The norm restriction in $\bar h$ does not change the optimizer, and every accepted iterate lies in its domain. Therefore
\[
\Phi(x_\star)=F^\star,\qquad \Phi(x_k)=F(q_k).
\]
Applying Lemma~\ref{lem:prox-fixed-batch} with variance parameter $\sigma^2/b$ gives
\[
\mathbb E[F(q_{T_\varepsilon})-F^\star]\le e^{-T_\varepsilon/s}\Delta_0+\frac{\sigma^2}{\alpha bs}\le e^{-T_\varepsilon/s}\Delta_0+\frac{747\lambda_H^2d^3}{bs}.
\]
By construction,
\[
e^{-T_\varepsilon/s}\Delta_0\le\frac{\varepsilon}{2},\qquad \frac{747\lambda_H^2d^3}{bs}\le\frac{\varepsilon}{2}.
\]
Thus the desired accuracy holds. Each update constructs $D_\nu(m_k^y)$ using $d+1$ gradient evaluations and uses $b$ additional evaluations for the batch. Hence
\[
N_{\nabla V}=T_\varepsilon(d+1+b),
\]
which yields the stated complexity.
\end{proof}

The same accelerated update also admits a second-order estimator with improved dimension dependence.

\begin{proposition}\label{prop:hessian-assisted-final}
Under Assumption~\ref{assump:hessian-lipschitz}, for
\[
G^{\mathrm{paired}}(m,L;\xi,\zeta):=\left(\nabla V(m+L\xi)-\nabla^2V(m+L\zeta)L\xi,\operatorname{low}(\nabla^2V(m+L\zeta)L)\right),
\]
where $\xi,\zeta\sim\mathcal N(0,I_d)$ are independent,
\[
\mathbb{E}[G^{\mathrm{paired}}(m,L;\xi,\zeta)]=\nabla E(m,L),
\]
and
\[
\mathbb{E}\|G^{\mathrm{paired}}(m,L;\xi,\zeta)-\nabla E(m,L)\|^2\le\frac92L_H^2\|L\|_F^4.
\]
If $V$ is quadratic, then $G^{\mathrm{paired}}(m,L;\xi,\zeta)=\nabla E(m,L)$ for every $\xi,\zeta$.
\end{proposition}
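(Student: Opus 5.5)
Unbiasedness comes first, and it follows from Stein's identity together with independence. Write $X=m+L\xi$ and $Y=m+L\zeta$. Because $\xi$ and $\zeta$ are independent and $\mathbb E\xi=0$,
\[
\mathbb E[\nabla^2V(Y)L\xi]=\mathbb E[\nabla^2V(Y)]L\,\mathbb E[\xi]=0,
\]
so the mean component has expectation $\mathbb E[\nabla V(X)]$. For the factor component, $\mathbb E[\nabla^2V(Y)L]=\mathbb E[\nabla^2V(X)]L$, since $Y$ and $X$ have the same law. Price's/Stein's identity gives $\mathbb E[\nabla V(X)\xi^\top]=\mathbb E[\nabla^2V(X)]L$, and applying $\operatorname{low}$ matches $\nabla E$. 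If $V$ is quadratic with Hessian $H$, then $\nabla V(X)-HL\xi=\nabla V(m)=\mathbb E\nabla V(X)$ and $\operatorname{low}(HL)$ is deterministic. Both components are then exact, which gives the final claim.

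For the variance I split the error into the two components and use $\operatorname{Var}(Z)\le\mathbb E\|Z-c\|^2$ for a convenient centering $c$. Let $H_0:=\nabla^2V(m)$. For the factor component, $\operatorname{low}$ is a Frobenius contraction, so its variance is at most
\[
\mathbb E\|(\nabla^2V(Y)-H_0)L\|_F^2\le\mathbb E\bigl[\|\nabla^2V(Y)-H_0\|_{\mathrm{op}}^2\bigr]\|L\|_F^2\le L_H^2\,\mathbb E\|L\zeta\|^2\,\|L\|_F^2=L_H^2\|L\|_F^4.
\]
For the mean component, subtract the constant $\nabla V(m)$ and write
\[
\nabla V(X)-\nabla V(m)-\nabla^2V(Y)L\xi=r+(H_0-\nabla^2V(Y))L\xi,
\]
where $r:=\nabla V(X)-\nabla V(m)-H_0L\xi$ satisfies $\|r\|\le\frac{L_H}{2}\|L\xi\|^2$ by Taylor. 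Using $(a+b)^2\le2a^2+2b^2$, the first piece gives $2\cdot\frac{L_H^2}{4}\mathbb E\|L\xi\|^4\le\frac{3}{2}L_H^2\|L\|_F^4$ by the fourth-moment bound $\mathbb E(\xi^\top A\xi)^2\le3(\operatorname{tr}A)^2$ from Proposition~\ref{prop:fd-estimator-full}. For the second piece, independence gives
\[
2\,\mathbb E\|(H_0-\nabla^2V(Y))L\xi\|^2\le2\,\mathbb E\|H_0-\nabla^2V(Y)\|_{\mathrm{op}}^2\,\mathbb E\|L\xi\|^2\le2L_H^2\|L\|_F^4.
\]
Summing the three contributions yields $(1+\tfrac32+2)L_H^2\|L\|_F^4=\tfrac92L_H^2\|L\|_F^4$, exactly the stated constant.

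The main obstacle is the mean component. Its centering must be $\nabla V(m)$ rather than the true mean, so that it is a valid upper bound on the variance and the constants still add up to $9/2$. Independence of $\zeta$ and $\xi$ is what lets the product moment factor in the second piece, so that step needs to be checked with care.
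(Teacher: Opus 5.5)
Your proposal is correct and matches the paper's proof essentially step for step. It uses the same unbiasedness argument via independence and Stein's identity, the same centering at $(\nabla V(m),\operatorname{low}(\nabla^2V(m)L))$, and the same split of the mean error into a Taylor remainder plus $(\nabla^2V(m)-\nabla^2V(Y))L\xi$, giving $\tfrac32+2+1=\tfrac92$. One small correction: centering at any constant already gives a valid upper bound on the variance, so $\nabla V(m)$ is chosen because it makes the Taylor expansion convenient, not because it is needed for validity.
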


\begin{proof}
Let
\[
X:=m+L\xi,\qquad Y:=m+L\zeta,\qquad H_Y:=\nabla^2V(Y).
\]
Since $\xi$ and $\zeta$ are independent and $\mathbb E\xi=0$, we have $\mathbb E[H_YL\xi]=0$. So, the mean component has expectation $\nabla_mE(m,L)$. For the covariance-factor component, Stein's identity gives
\[
\mathbb E[\nabla V(m+L\zeta)\zeta^\top]=\mathbb E[\nabla^2V(m+L\zeta)L],
\]
and therefore
\[
\operatorname{low}\left(\mathbb E[H_YL]\right)=\nabla_LE(m,L).
\]
This proves unbiasedness. Let
\[
H:=\nabla^2V(m),\qquad z:=L\xi,\qquad y:=L\zeta.
\]
Since the mean minimizes mean-square distance,
\[
\mathbb E\|G^{\mathrm{paired}}-\nabla E\|^2\le\mathbb E\left\|G^{\mathrm{paired}}-\left(\nabla V(m),\operatorname{low}(HL)\right)\right\|^2.
\]
For the first component,
\[
\nabla V(m+z)-\nabla V(m)-\nabla^2V(m+y)z=\left(\nabla V(m+z)-\nabla V(m)-Hz\right)+\left(H-\nabla^2V(m+y)\right)z.
\]
Hessian Lipschitzness gives
\[
\|\nabla V(m+z)-\nabla V(m)-Hz\|\le\frac{L_H}{2}\|z\|^2
\]
and
\[
\|(H-\nabla^2V(m+y))z\|\le L_H\|y\|\|z\|.
\]
Thus
\[
\|\nabla V(m+z)-\nabla V(m)-\nabla^2V(m+y)z\|^2\le\frac{L_H^2}{2}\|z\|^4+2L_H^2\|y\|^2\|z\|^2.
\]
Independence gives $\mathbb E[\|y\|^2\|z\|^2]=\|L\|_F^4$,
while $\mathbb E\|L\xi\|^4\le3\|L\|_F^4$. Hence
\[
\mathbb E\|\nabla V(X)-H_YL\xi-\nabla V(m)\|^2\le\frac72L_H^2\|L\|_F^4.
\]
For the covariance-factor component,
\[
\|\operatorname{low}((H_Y-H)L)\|_F\le\|(H_Y-H)L\|_F\le L_H\|y\|\|L\|_F,
\]
and therefore
\[
\mathbb E\|\operatorname{low}(H_YL)-\operatorname{low}(HL)\|_F^2\le L_H^2\|L\|_F^4.
\]
Adding the two components gives the result. If $V$ is quadratic, its Hessian is constant and both errors vanish.
\end{proof}

Combining this variance bound with Lemma~\ref{lem:prox-fixed-batch} gives the second-order complexity.

\begin{corollary}\label{cor:hessian-assisted-fixed-batch}
Suppose Assumptions~\ref{assump:curvature} and~\ref{assump:hessian-lipschitz} hold, $x_0\in\operatorname{dom}\bar h$, and $0<\varepsilon\le\Delta_0$, where
\[
\Delta_0:=F(q_0)-F^\star+\frac{\alpha}{2}\|x_0-x_\star\|^2.
\]
Let
\[
s:=\sqrt{\max\{\kappa,4\}},\qquad b:=\left\lceil\max\left\{1,\frac{729\lambda_H^2d^2}{s\varepsilon}\right\}\right\rceil,\qquad T_\varepsilon:=\left\lceil s\log\frac{2\Delta_0}{\varepsilon}\right\rceil.
\]
Choose
\[
\eta:=\frac{1}{\alpha s^2},\qquad \rho:=\frac{s-1}{s+1}.
\]
Run the update of Algorithm~\ref{alg:fd-prox} with $G_b^{\mathrm{paired}}$ in place of $G_b^{\mathrm{FD}}$. Then
\[
\mathbb E[F(q_{T_\varepsilon})-F^\star]\le\varepsilon.
\]
The exact oracle counts are
\[
N_{\nabla V}=bT_\varepsilon,\qquad N_{\mathrm{HVP}}=dbT_\varepsilon.
\]
Consequently,
\[
N_{\nabla V}=O\left(\left(\sqrt\kappa+\frac{\lambda_H^2d^2}{\varepsilon}\right)\log\frac{2\Delta_0}{\varepsilon}\right),
\]
and
\[
N_{\mathrm{HVP}}=O\left(\left(d\sqrt\kappa+\frac{\lambda_H^2d^3}{\varepsilon}\right)\log\frac{2\Delta_0}{\varepsilon}\right).
\]
\end{corollary}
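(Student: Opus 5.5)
The plan follows the proof of Theorem~\ref{thm:fd-prox-full} step by step, with Proposition~\ref{prop:hessian-assisted-final} supplying the variance bound in place of Proposition~\ref{prop:fd-estimator-full}.

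\textbf{Bounding the query-point factor.} The proximal map of $\bar h$ enforces $\|L_k\|_F\le\sqrt{d/\alpha}$ for every accepted iterate. Since $L_k^y=(1+\rho)L_k-\rho L_{k-1}$ with $0\le\rho<1$, every query point satisfies $\|L_k^y\|_F<3\sqrt{d/\alpha}$. Hence $\|L_k^y\|_F^4<81d^2/\alpha^2$.

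\textbf{Conditional bias and variance.} Conditionally on $\mathcal F_k$, the fresh pair $(\xi,\zeta)$ is independent of the past. Proposition~\ref{prop:hessian-assisted-final} then gives conditional unbiasedness of $G^{\mathrm{paired}}(y_k)$ for $\nabla E(y_k)$. It also gives the conditional variance bound
\[
\tfrac92L_H^2\cdot 81d^2/\alpha^2=\tfrac{729}{2}L_H^2d^2/\alpha^2=:\sigma^2 .
\]
Averaging $b$ independent copies divides this by $b$. Using $\lambda_H=L_H/\alpha^{3/2}$, we get $\sigma^2/\alpha=\tfrac{729}{2}\lambda_H^2d^2$.

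\textbf{Applying the acceleration lemma.} Set $\Phi=E+\bar h$. As in the FD theorem, $\Phi(x_\star)=F^\star$ and $\Phi(x_k)=F(q_k)$, because the norm restriction is inactive at the optimum ($\|L_\star\|_F^2=\operatorname{tr}\Sigma_\star\le d/\alpha$). Lemma~\ref{lem:prox-fixed-batch} with variance parameter $\sigma^2/b$ yields
\[
\mathbb E[F(q_{T_\varepsilon})-F^\star]\le e^{-T_\varepsilon/s}\Delta_0+\frac{729\lambda_H^2d^2}{2bs}.
\]
The choice of $T_\varepsilon$ makes the first term at most $\varepsilon/2$. The choice $b\ge 729\lambda_H^2d^2/(s\varepsilon)$ makes the second term at most $\varepsilon/2$.

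\textbf{Counting queries.} Each sample uses one gradient $\nabla V(m+L\xi)$. It also uses the matrix $\nabla^2V(Y)L$, which costs $d$ HVPs. The term $\nabla^2V(Y)L\xi$ is then obtained from that matrix without further queries. This gives $N_{\nabla V}=bT_\varepsilon$ and $N_{\mathrm{HVP}}=dbT_\varepsilon$. Finally, $b\le 1+729\lambda_H^2d^2/(s\varepsilon)$ and $T_\varepsilon=O(s\log(2\Delta_0/\varepsilon))$ with $s=\Theta(\sqrt\kappa)$. Multiplying these yields the stated $O(\cdot)$ bounds.

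The only mildly delicate point is bookkeeping the constant so that the batch size $729$ matches. Everything else is a direct substitution.
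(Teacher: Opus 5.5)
Your proposal is correct and follows the paper's proof essentially line for line: the momentum relation gives $\|L_k^y\|_F<3\sqrt{d/\alpha}$, Proposition~\ref{prop:hessian-assisted-final} gives the variance constant $\tfrac92\cdot 81=\tfrac{729}{2}$, Lemma~\ref{lem:prox-fixed-batch} is applied with variance parameter $\sigma^2/b$, and each sample costs one gradient plus $d$ HVPs. Your remarks that $\nabla^2V(Y)L\xi$ needs no further queries and that $b\le 1+729\lambda_H^2d^2/(s\varepsilon)$ only make explicit some bookkeeping that the paper leaves implicit.
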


\begin{proof}
As in the proof of Theorem~\ref{thm:fd-prox-full}, $\|L_k^y\|_F<3\sqrt{d/\alpha}$, so $\|L_k^y\|_F^4<{81d^2}/{\alpha^2}$. Proposition~\ref{prop:hessian-assisted-final} gives
\[
\mathbb E[G_b^{\mathrm{paired}}(y_k)\mid\mathcal F_k]=\nabla E(y_k)
\]
and
\[
\mathbb E[\|G_b^{\mathrm{paired}}(y_k)-\nabla E(y_k)\|^2\mid\mathcal F_k]\le\frac{\sigma_2^2}{b},\qquad \sigma_2^2:=\frac{729}{2}\frac{L_H^2d^2}{\alpha^2}.
\]
Thus
\[
\frac{\sigma_2^2}{\alpha}=\frac{729}{2}\lambda_H^2d^2.
\]
Applying Lemma~\ref{lem:prox-fixed-batch},
\[
\mathbb E[F(q_{T_\varepsilon})-F^\star]\le e^{-T_\varepsilon/s}\Delta_0+\frac{729\lambda_H^2d^2}{2bs}.
\]
The prescribed choices make both terms at most $\varepsilon/2$. Each realization of $G^{\mathrm{paired}}$ uses one gradient evaluation and $d$ HVPs, giving the stated oracle counts.
\end{proof}

\subsection{Cholesky projected acceleration}\label{subsec:fd-projected}

The finite-difference estimator can alternatively be combined with an exact entropy derivative and Euclidean projection. Define
\[
K_L:=\left\{(m,L):m\in\mathbb R^d,\ L\in\mathbb T^d,\ L_{ii}\ge\frac{1}{2\sqrt\beta},\ \|L\|_F\le\sqrt{\frac d\alpha}\right\}.
\]
To extend the entropy gradient outside $K_L$, define
\begin{equation}
\psi(t):=\begin{cases}\log(2\sqrt\beta)-2\sqrt\beta\left(t-\dfrac{1}{2\sqrt\beta}\right)+2\beta\left(t-\dfrac{1}{2\sqrt\beta}\right)^2,&t<\dfrac{1}{2\sqrt\beta},\\-\log t,&t\ge\dfrac{1}{2\sqrt\beta}.\end{cases}
\label{eq:entropy-extension-psi}
\end{equation}
and
\[
\widetilde h(L):=\sum_{i=1}^d\psi(L_{ii}),\qquad \widetilde F(m,L):=E(m,L)+\widetilde h(L).
\]

The next lemma establishes the properties needed for projected acceleration.

\begin{lemma}\label{lem:chol-geometry-final}
The minimizer $x_\star=(m_\star,L_\star)$ belongs to $K_L$. Moreover, $\widetilde F$ is globally $\alpha$-strongly convex and $5\beta$-smooth on $\mathbb R^d\times\mathbb T^d$, and
\[
\widetilde F=F,\qquad \nabla\widetilde F=\nabla F\qquad\text{on }K_L.
\]
\end{lemma}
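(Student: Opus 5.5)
The proof has three parts: the location of the minimizer, the convexity and smoothness of $\widetilde F$, and agreement with $F$ on $K_L$. The second and third reduce to one-dimensional facts about $\psi$ combined with Lemma~\ref{lem:curvature-inheritance}. The first uses the optimality conditions $\beta^{-1}I\preceq\Sigma_\star\preceq\alpha^{-1}I$ from Section~\ref{sec:setup}.

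\emph{Minimizer in $K_L$.} The Frobenius constraint holds because $\|L_\star\|_F^2=\operatorname{tr}(\Sigma_\star)\le d/\alpha$. For the diagonal constraint, write $t_0:=1/(2\sqrt\beta)$. I will use the standard Cholesky identity: $(L_\star)_{ii}^2$ equals the Schur complement of the leading $(i-1)\times(i-1)$ block in the leading $i\times i$ block $\Sigma_{\le i}$ of $\Sigma_\star$. Equivalently,
\[
(L_\star)_{ii}^2=\frac{1}{\bigl((\Sigma_{\le i})^{-1}\bigr)_{ii}}.
\]
Eigenvalue interlacing for principal submatrices gives $(\Sigma_{\le i})^{-1}\preceq\lambda_{\min}(\Sigma_\star)^{-1}I\preceq\beta I$. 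Hence $(L_\star)_{ii}^2\ge 1/\beta$, so $(L_\star)_{ii}\ge 1/\sqrt\beta\ge t_0$, and $x_\star\in K_L$. I expect this Cholesky-diagonal bound to be the only step needing a genuine argument rather than a verification. If the Schur-complement formula is inconvenient, I would instead use $e_i^\top\Sigma_{\le i}^{-1}e_i\le\|\Sigma_{\le i}^{-1}\|_{\mathrm{op}}$ directly.

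\emph{Properties of $\psi$.} Next I check that $\psi$ is $C^1$ with a Lipschitz derivative. At $t_0$ both branches take the value $\log(2\sqrt\beta)=-\log t_0$, and both derivatives equal $-2\sqrt\beta=-1/t_0$. The second derivative is $4\beta$ on $(-\infty,t_0)$ and $1/t^2\in(0,4\beta]$ on $[t_0,\infty)$. Therefore $\psi$ is convex and $\psi'$ is $4\beta$-Lipschitz, since $\psi'$ is absolutely continuous with $0\le\psi''\le4\beta$ almost everywhere. Because $\widetilde h(L)=\sum_i\psi(L_{ii})$ depends only on the diagonal coordinates, for any direction $(u,U)$ its second-order behaviour is bounded by $4\beta\sum_iU_{ii}^2\le4\beta\|U\|_F^2$. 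Thus $\widetilde h$ is convex with a $4\beta$-Lipschitz gradient on $\mathbb R^d\times\mathbb T^d$.

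\emph{Conclusion.} Lemma~\ref{lem:curvature-inheritance} makes $E$ globally $\alpha$-strongly convex and $\beta$-smooth on the whole linear space $\mathbb R^d\times\mathbb T^d$. Adding the convex, $4\beta$-smooth function $\widetilde h$ shows that $\widetilde F$ is $\alpha$-strongly convex and $(\beta+4\beta)=5\beta$-smooth. On $K_L$ every diagonal entry satisfies $L_{ii}\ge t_0$, so $\psi(L_{ii})=-\log L_{ii}$ and $\widetilde h(L)=-\log\det L$. This gives $\widetilde F=F$ on $K_L$. For the gradients, at points with $L_{ii}=t_0$ I use the $C^1$ matching of $\psi$ at $t_0$, interpreting $\nabla F$ there through the $-\log$ branch; elsewhere in $K_L$ the functions coincide on a neighbourhood. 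In both cases $\nabla\widetilde h(L)=\operatorname{diag}(-1/L_{ii})=\nabla(-\log\det L)$, so $\nabla\widetilde F=\nabla F$ on $K_L$.
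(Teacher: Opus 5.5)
Your proposal is correct and follows essentially the same route as the paper: the trace bound for $\|L_\star\|_F$, the Cholesky identity $L_{\star,ii}^2=1/(\Sigma_{\star,[i]}^{-1})_{ii}$ with $\Sigma_{\star,[i]}\succeq\beta^{-1}I$, the bound $0\le\psi''\le4\beta$ combined with Lemma~\ref{lem:curvature-inheritance}, and the $C^1$ matching of the two branches of $\psi$ at $1/(2\sqrt\beta)$. Your extra care, namely arguing smoothness through the Lipschitz derivative since $\psi$ is only $C^{1,1}$ and treating boundary points with $L_{ii}=1/(2\sqrt\beta)$, only makes the same argument more precise.
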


\begin{proof}
Since $\Sigma_\star=L_\star L_\star^\top\preceq\alpha^{-1}I$, $\|L_\star\|_F^2=\operatorname{tr}(\Sigma_\star)\le d/\alpha$. Let $\Sigma_{\star,[i]}$ denote the leading $i\times i$ principal submatrix of $\Sigma_\star$. The Cholesky factorization gives
\[
L_{\star,ii}^2=\frac{1}{(\Sigma_{\star,[i]}^{-1})_{ii}}.
\]
Since $\Sigma_{\star,[i]}\succeq\beta^{-1}I$, we have $\Sigma_{\star,[i]}^{-1}\preceq\beta I$, and hence $L_{\star,ii}\ge\frac1{\sqrt\beta}>\frac1{2\sqrt\beta}$.
Therefore $x_\star\in K_L$.

The function $\psi$ is continuously differentiable and convex, with
\[
\psi''(t)=\begin{cases}4\beta,&t<\dfrac{1}{2\sqrt\beta},\\[0.4em]\dfrac1{t^2},&t>\dfrac{1}{2\sqrt\beta}.\end{cases}
\]
Thus $0\le\psi''(t)\le4\beta$, so $\widetilde h$ is convex and $4\beta$-smooth. By Lemma~\ref{lem:curvature-inheritance}, $E$ is $\alpha$-strongly convex and $\beta$-smooth. Hence $\widetilde F$ is globally $\alpha$-strongly convex and $5\beta$-smooth.

For $(m,L)\in K_L$, all diagonal entries satisfy $L_{ii}\ge1/(2\sqrt\beta)$, so
\[
\widetilde h(L)=-\sum_{i=1}^d\log L_{ii}=-\log\det L.
\]
The two branches of $\psi$ have the same first derivative at $1/(2\sqrt\beta)$, so the gradients also agree.
\end{proof}

The projection itself also admits a simple one-dimensional characterization.

\begin{proposition}\label{prop:fd-projection}
For $(m,Y)\in\mathbb R^d\times\mathbb T^d$, the Euclidean projection onto $K_L$ is
\[
P_{K_L}(m,Y)=(m,L^{(\gamma)}),
\]
where
\[
L^{(\gamma)}_{ij}:=\frac{Y_{ij}}{1+\gamma}\quad(i>j),\qquad L^{(\gamma)}_{ii}:=\max\left\{\frac1{2\sqrt\beta},\frac{Y_{ii}}{1+\gamma}\right\}.
\]
If $\|L^{(0)}\|_F\le\sqrt{d/\alpha}$, then $\gamma=0$. Otherwise, $\gamma>0$ is the unique solution of
$\|L^{(\gamma)}\|_F^2=d/\alpha$.
\end{proposition}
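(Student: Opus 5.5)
We compute the projection onto $K_L$ from its convex-program optimality conditions, in the same way as Proposition~\ref{prop:fd-prox-map}, with the logarithmic barrier replaced by lower bounds on the diagonal. The mean component is unconstrained and the objective separates, so $P_{K_L}(m,Y)=(m,L^+)$, where $L^+$ solves
\[
\min_{L\in\mathbb T^d}\ \tfrac12\|L-Y\|_F^2
\quad\text{subject to}\quad
L_{ii}\ge \tfrac{1}{2\sqrt\beta}\ \ (i=1,\dots,d),
\qquad \|L\|_F^2\le \tfrac d\alpha .
\]
The feasible set is closed and convex. It is also nonempty whenever $d/(4\beta)\le d/\alpha$, which holds because $\alpha\le\beta$. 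The objective is strongly convex, so the minimizer is unique. Slater's condition holds: take $L=cI$ with $\tfrac{1}{2\sqrt\beta}<c<\tfrac{1}{\sqrt\alpha}$. Hence the KKT conditions are necessary and sufficient.

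Attach a multiplier $\gamma\ge0$ to the norm constraint and $\mu_i\ge0$ to the diagonal constraints. Stationarity reads
\[
(1+\gamma)L_{ij}=Y_{ij}\quad (i>j),
\qquad
(1+\gamma)L_{ii}-Y_{ii}-\mu_i=0 .
\]
For fixed $\gamma$, the diagonal part together with complementary slackness $\mu_i\bigl(L_{ii}-\tfrac1{2\sqrt\beta}\bigr)=0$ is exactly the KKT system of the scalar problem $\min_{\ell\ge 1/(2\sqrt\beta)}\tfrac{1+\gamma}{2}\bigl(\ell-\tfrac{Y_{ii}}{1+\gamma}\bigr)^2$. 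Its solution is the clipping $\max\{\tfrac1{2\sqrt\beta},Y_{ii}/(1+\gamma)\}$. Therefore $L^+=L^{(\gamma)}$ for some $\gamma\ge0$.

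Next we determine $\gamma$ by complementary slackness for the norm constraint. If $\|L^{(0)}\|_F^2\le d/\alpha$, then $\gamma=0$ together with the clipping multipliers satisfies all KKT conditions, and uniqueness gives $L^+=L^{(0)}$.

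Otherwise, $\gamma=0$ is infeasible, so $\gamma>0$ and the constraint must be active, $\|L^{(\gamma)}\|_F^2=d/\alpha$. The step needing care is showing that this equation has exactly one root. Let $g(\gamma):=\|L^{(\gamma)}\|_F^2$. It is continuous, and each entry $|L^{(\gamma)}_{ij}|$ is nonincreasing in $\gamma$: off-diagonal entries scale by $1/(1+\gamma)$, and each clipped diagonal entry is a nonincreasing positive function of $\gamma$. Hence $g$ is nonincreasing. As $\gamma\to\infty$, $g(\gamma)\to d/(4\beta)\le d/\alpha$, and $g(0)>d/\alpha$, so a root exists.

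For uniqueness, strict monotonicity can fail on ranges of $\gamma$ where everything is clipped and the off-diagonal part is zero. On such a range, however, $g\equiv d/(4\beta)$. This value equals $d/\alpha$ only if $\alpha=4\beta$, which is impossible since $\alpha\le\beta$. So on any range where $g$ takes the value $d/\alpha$, at least one entry is unclipped or off-diagonal and nonzero. That entry's magnitude equals $c/(1+\gamma)$ with $c>0$, so it is strictly decreasing there, and $g$ is strictly decreasing at any root. Combined with monotonicity, the root is unique.

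Finally, the KKT pair at this root is valid: the $\mu_i$ are nonnegative by the clipping construction and $\gamma>0$. By uniqueness of the projection, $L^+=L^{(\gamma)}$.
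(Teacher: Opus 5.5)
Your proposal is correct and follows essentially the same route as the paper: KKT conditions with multipliers $\gamma$ and $\mu_i$, clipping of the diagonal entries, an inactive norm constraint when $\|L^{(0)}\|_F\le\sqrt{d/\alpha}$, and a monotonicity-plus-limit argument ($\|L^{(\gamma)}\|_F^2\to d/(4\beta)<d/\alpha$) that gives a unique active root. Your Slater check and your uniqueness step are a bit more careful than the paper's wording. You use strict decrease of $\gamma\mapsto\|L^{(\gamma)}\|_F^2$ wherever its value exceeds $d/(4\beta)$, whereas the paper only claims strict decrease ``while its value exceeds $\sqrt{d/\alpha}$'', which on its own does not rule out a flat stretch at the level $\sqrt{d/\alpha}$.
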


\begin{proof}
The mean coordinate is unconstrained. For the factor coordinate, consider
\[
\min_{L\in\mathbb T^d}\left\{\frac12\|L-Y\|_F^2:L_{ii}\ge\frac1{2\sqrt\beta},\ \|L\|_F^2\le\frac d\alpha\right\}.
\]
Let $\gamma\ge0$ be the multiplier for the norm constraint and $\mu_i\ge0$ the multipliers for the diagonal lower bounds. The KKT conditions give
\[
(1+\gamma)L_{ij}=Y_{ij}\quad(i>j),\qquad (1+\gamma)L_{ii}=Y_{ii}+\mu_i,
\]
with
\[
\mu_i\left(L_{ii}-\frac1{2\sqrt\beta}\right)=0.
\]
Thus
\[
L_{ii}=\max\left\{\frac1{2\sqrt\beta},\frac{Y_{ii}}{1+\gamma}\right\}.
\]
If $\|L^{(0)}\|_F\le\sqrt{d/\alpha}$, the norm constraint is inactive. Otherwise it is active. The map $\gamma\mapsto\|L^{(\gamma)}\|_F$ is continuous and nonincreasing and is strictly decreasing while its value exceeds $\sqrt{d/\alpha}$. Moreover,
\[
\lim_{\gamma\to\infty}\|L^{(\gamma)}\|_F=\frac{\sqrt d}{2\sqrt\beta}<\sqrt{\frac d\alpha}.
\]
Hence there is a unique positive solution of the active constraint.
\end{proof}

These ingredients lead to Algorithm~\ref{alg:fd-proj}, which is a projected counterpart of the proximal method.

\begin{algorithm}[hbtp]
\caption{Cholesky projected acceleration}
\label{alg:fd-proj}
\begin{algorithmic}[1]
\Require $x_0\in K_L$, batch size $b$, estimator $G_b\in\{G_b^{\mathrm{FD}},G_b^{\mathrm{paired}}\}$, FD scale $\nu$, horizon length $T$
\State $y_0\gets x_0$
\For{$k=0,\ldots,T-1$}
\State $\widehat g_k\gets G_b(y_k)$
\State write $y_k=(m_k^y,L_k^y)$ and set $g_k\gets\widehat g_k+(0,\nabla\widetilde h(L_k^y))$
\State $x_{k+1}\gets P_{K_L}(y_k-\eta g_k)$
\State $y_{k+1}\gets x_{k+1}+\rho(x_{k+1}-x_k)$
\EndFor
\State \Return $q_T=\mathcal N(m_T,L_TL_T^\top)$
\end{algorithmic}
\end{algorithm}

The extrapolated point $y_k$ need not be feasible, but both $E$ and $\widetilde h$ are defined on the entire ambient space. The projection ensures that every accepted iterate belongs to $K_L$.

The same estimator bounds therefore give the following projected guarantee.

\begin{theorem}\label{thm:fd-proj-full}
Suppose Assumptions~\ref{assump:curvature} and~\ref{assump:hessian-lipschitz} hold, $x_0\in K_L$, $0<|\nu|\le\alpha^{-1/2}$, and $0<\varepsilon\le\Delta_0$, where
\[
\Delta_0:=F(q_0)-F^\star+\frac{\alpha}{2}\|x_0-x_\star\|^2.
\]
Let
\[
s:=\sqrt{\max\{5\kappa,4\}},\qquad b:=\left\lceil\max\left\{1,\frac{1494\lambda_H^2d^3}{s\varepsilon}\right\}\right\rceil,\qquad T_\varepsilon:=\left\lceil s\log\frac{2\Delta_0}{\varepsilon}\right\rceil.
\]
Choose
\[
\eta:=\frac{1}{\alpha s^2},
\qquad
\rho:=\frac{s-1}{s+1}.
\]
Then Algorithm~\ref{alg:fd-proj} with $G_b^{\mathrm{FD}}$ satisfies
\[
\mathbb E[F(q_{T_\varepsilon})-F^\star]\le\varepsilon.
\]
The exact first-order oracle count is
\[
N_{\nabla V}=T_\varepsilon(d+1+b),
\]
and consequently
\[
N_{\nabla V}=O\left(\left(d\sqrt\kappa+\frac{\lambda_H^2d^3}{\varepsilon}\right)\log\frac{2\Delta_0}{\varepsilon}\right).
\]
\end{theorem}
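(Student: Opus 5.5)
The plan is to reduce Theorem~\ref{thm:fd-proj-full} to Lemma~\ref{lem:prox-fixed-batch}, applied to the smooth part $f:=\widetilde F$ and the nonsmooth part $h:=\iota_{K_L}$ (the indicator of $K_L$). The set $K_L$ is closed and convex: it is an intersection of half-spaces $\{L_{ii}\ge 1/(2\sqrt\beta)\}$ with a Frobenius ball in the linear space $\mathbb R^d\times\mathbb T^d$. Hence $\iota_{K_L}$ is proper, closed, and convex, and $\prox_{\eta\iota_{K_L}}=P_{K_L}$ for every $\eta>0$. The projection is computed explicitly by Proposition~\ref{prop:fd-projection}.

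By Lemma~\ref{lem:chol-geometry-final}, $\widetilde F$ is $\alpha$-strongly convex and $5\beta$-smooth on the whole ambient space. Its condition number is therefore $5\kappa$, which is why $s=\sqrt{\max\{5\kappa,4\}}$ and $\eta=1/(\alpha s^2)$ match the choices in Lemma~\ref{lem:prox-fixed-batch}. The update $x_{k+1}=P_{K_L}(y_k-\eta g_k)$ together with the same momentum step is then exactly the scheme of that lemma.

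\textbf{Step 1: identify the minimizer and the objective values.} Let $\Phi:=\widetilde F+\iota_{K_L}$. By Lemma~\ref{lem:chol-geometry-final}, $x_\star\in K_L$. Its proof shows more: the diagonal of $L_\star$ satisfies $L_{\star,ii}\ge 1/\sqrt\beta>1/(2\sqrt\beta)$ strictly, so $\psi$ coincides with $-\log$ on a neighborhood of $L_{\star,ii}$. Consequently $\nabla\widetilde F(x_\star)=\nabla F(x_\star)=0$. Since $\widetilde F$ is convex, $x_\star$ minimizes $\widetilde F$ globally, and hence it also minimizes $\Phi$. Combining $\widetilde F=F$ on $K_L$ with the fact that every accepted iterate $x_k$ lies in $K_L$ gives
\[
\Phi(x_\star)=F^\star,\qquad \Phi(x_k)=F(q_k).
\]
In particular, the quantity $\Phi(x_0)-\Phi^\star+\frac{\alpha}{2}\|x_0-x_\star\|^2$ appearing in Lemma~\ref{lem:prox-fixed-batch} equals $\Delta_0$.

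\textbf{Step 2: verify the stochastic gradient conditions.} Write $g_k=G_b^{\mathrm{FD}}(y_k)+(0,\nabla\widetilde h(L_k^y))$. The entropy correction $\nabla\widetilde h(L_k^y)$ is deterministic given $\mathcal F_k$, so it adds no variance. The unbiasedness part of Proposition~\ref{prop:fd-estimator-full} holds for every $(m,L)\in\mathbb R^d\times\mathbb T^d$, including extrapolated factors with nonpositive diagonal. Together these give $\mathbb E[g_k\mid\mathcal F_k]=\nabla\widetilde F(y_k)$.

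For the variance, note that $\|L_k\|_F\le\sqrt{d/\alpha}$ on $K_L$ and $0\le\rho<1$. Exactly as in the proof of Theorem~\ref{thm:fd-prox-full}, this gives $\|L_k^y\|_F<3\sqrt{d/\alpha}$. Inserting this bound and $|\nu|\le\alpha^{-1/2}$ into Proposition~\ref{prop:fd-estimator-full} yields a conditional variance at most $\sigma^2/b$, where
\[
\sigma^2=\frac{L_H^2}{\alpha^2}(126d+621)d^2,\qquad \frac{\sigma^2}{\alpha}\le 747\lambda_H^2d^3 .
\]

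\textbf{Step 3: conclude.} Lemma~\ref{lem:prox-fixed-batch} now gives
\[
\mathbb E[F(q_{T_\varepsilon})-F^\star]\le e^{-T_\varepsilon/s}\Delta_0+\frac{747\lambda_H^2d^3}{bs}.
\]
The choices $T_\varepsilon=\lceil s\log(2\Delta_0/\varepsilon)\rceil$ and $b\ge 1494\lambda_H^2d^3/(s\varepsilon)$ make each term at most $\varepsilon/2$. For the cost, each update uses $d+1$ gradients to form $D_\nu(m_k^y)$ and $b$ further gradients for the batch; evaluating $\nabla\widetilde h$ is free. This gives $N_{\nabla V}=T_\varepsilon(d+1+b)$. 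Since $s=O(\sqrt\kappa)$, substituting $b$ yields the stated $O\bigl((d\sqrt\kappa+\lambda_H^2d^3/\varepsilon)\log(2\Delta_0/\varepsilon)\bigr)$ bound.

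The only delicate point is Step 1, namely that $x_\star$ minimizes the modified objective $\widetilde F+\iota_{K_L}$ and not just $F$. This is where the strict inequality $L_{\star,ii}>1/(2\sqrt\beta)$ from Lemma~\ref{lem:chol-geometry-final} is essential; everything else transfers verbatim from the proximal case.
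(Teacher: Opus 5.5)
Your proposal is correct and matches the paper's proof essentially step for step. You apply Lemma~\ref{lem:prox-fixed-batch} with $f=\widetilde F$, $h=\iota_{K_L}$ and condition number $5\kappa$, use the FD variance bound $\sigma^2/\alpha\le 747\lambda_H^2d^3$ obtained from $\|L_k^y\|_F<3\sqrt{d/\alpha}$, and note that the entropy correction is deterministic. One small remark: the step you flag as delicate is easier than you make it. Since $\widetilde F+\iota_{K_L}$ equals $F$ on $K_L$ and $+\infty$ elsewhere, $x_\star\in K_L$ alone gives $\Phi^\star=F^\star$, without the strict diagonal inequality or the gradient argument.
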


\begin{proof}
Every accepted iterate belongs to $K_L$, and the momentum relation gives $\|L_k^y\|_F<3\sqrt{\frac d\alpha}$. Proposition~\ref{prop:fd-estimator-full} therefore gives
\[
\mathbb E[G_b^{\mathrm{FD}}(y_k)\mid\mathcal F_k]=\nabla E(y_k),\qquad \mathbb E[\|G_b^{\mathrm{FD}}(y_k)-\nabla E(y_k)\|^2\mid\mathcal F_k]\le\frac{\sigma^2}{b},
\]
where ${\sigma^2}/{\alpha}\le747\lambda_H^2d^3$. Let
\[
g_k:=G_b^{\mathrm{FD}}(y_k)+(0,\nabla\widetilde h(L_k^y)).
\]
Since the entropy gradient is deterministic,
\[
\mathbb E[g_k\mid\mathcal F_k]=\nabla\widetilde F(y_k)
\]
with the same stochastic error. By Lemma~\ref{lem:chol-geometry-final}, $\widetilde F$ is $\alpha$-strongly convex and $5\beta$-smooth. Moreover,
\[
P_{K_L}=\prox_{\eta\iota_{K_L}}.
\]
Thus Lemma~\ref{lem:prox-fixed-batch} applies with $f=\widetilde F$, $h=\iota_{K_L}$, condition number $5\kappa$, and variance parameter $\sigma^2/b$. Hence
\[
\mathbb E[F(q_{T_\varepsilon})-F^\star]\le e^{-T_\varepsilon/s}\Delta_0+\frac{747\lambda_H^2d^3}{bs}.
\]
The prescribed $T_\varepsilon$ and $b$ make the two terms at most $\varepsilon/2$. The oracle count is the same as in the proximal finite-difference method.
\end{proof}

The second order estimator yields the analogous result.

\begin{corollary}\label{cor:fd-proj-second-order}
Suppose Assumptions~\ref{assump:curvature} and~\ref{assump:hessian-lipschitz} hold, $x_0\in K_L$, and $0<\varepsilon\le\Delta_0$, where
\[
\Delta_0:=F(q_0)-F^\star+\frac{\alpha}{2}\|x_0-x_\star\|^2.
\]
Let
\[
s:=\sqrt{\max\{5\kappa,4\}},\qquad b:=\left\lceil\max\left\{1,\frac{729\lambda_H^2d^2}{s\varepsilon}\right\}\right\rceil,\qquad T_\varepsilon:=\left\lceil s\log\frac{2\Delta_0}{\varepsilon}\right\rceil.
\]
Run Algorithm~\ref{alg:fd-proj} with $G_b^{\mathrm{paired}}$ in place of $G_b^{\mathrm{FD}}$. Then
\[
\mathbb E[F(q_{T_\varepsilon})-F^\star]\le\varepsilon.
\]
The exact oracle counts are
\[
N_{\nabla V}=bT_\varepsilon,\qquad N_{\mathrm{HVP}}=dbT_\varepsilon.
\]
Consequently,
\[
N_{\nabla V}=O\left(\left(\sqrt\kappa+\frac{\lambda_H^2d^2}{\varepsilon}\right)\log\frac{2\Delta_0}{\varepsilon}\right),
\]
and
\[
N_{\mathrm{HVP}}=O\left(\left(d\sqrt\kappa+\frac{\lambda_H^2d^3}{\varepsilon}\right)\log\frac{2\Delta_0}{\varepsilon}\right).
\]
\end{corollary}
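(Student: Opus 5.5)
This corollary mirrors Theorem~\ref{thm:fd-proj-full}, with the variance bound for $G^{\mathrm{FD}}$ replaced by the bound for $G^{\mathrm{paired}}$ in Proposition~\ref{prop:hessian-assisted-final}. The plan is to run the same reduction to Lemma~\ref{lem:prox-fixed-batch} and then redo the batch-size bookkeeping as in Corollary~\ref{cor:hessian-assisted-fixed-batch}.

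The first step is to control the factor at the extrapolated points. Every accepted iterate $x_k=P_{K_L}(\cdot)$ lies in $K_L$, so $\|L_k\|_F\le\sqrt{d/\alpha}$. Since $L_k^y=(1+\rho)L_k-\rho L_{k-1}$ with $0\le\rho<1$, we get $\|L_k^y\|_F<3\sqrt{d/\alpha}$, and hence $\|L_k^y\|_F^4<81d^2/\alpha^2$. Proposition~\ref{prop:hessian-assisted-final} holds for any $(m,L)\in\mathbb R^d\times\mathbb T^d$, including infeasible extrapolated points. It therefore gives conditional unbiasedness $\mathbb E[G_b^{\mathrm{paired}}(y_k)\mid\mathcal F_k]=\nabla E(y_k)$. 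It also gives conditional variance at most $\sigma_2^2/b$, with $\sigma_2^2=\tfrac{729}{2}L_H^2d^2/\alpha^2$. Adding the deterministic term $(0,\nabla\widetilde h(L_k^y))$ produces an unbiased estimator $g_k$ of $\nabla\widetilde F(y_k)$ with the same variance.

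The second step applies the acceleration lemma. By Lemma~\ref{lem:chol-geometry-final}, $\widetilde F$ is $\alpha$-strongly convex and $5\beta$-smooth, $x_\star\in K_L$, and $\widetilde F=F$ on $K_L$. The minimizer of $\widetilde F+\iota_{K_L}$ is therefore $x_\star$, and its value is $F^\star$. Since $P_{K_L}=\prox_{\eta\iota_{K_L}}$, Lemma~\ref{lem:prox-fixed-batch} applies with $f=\widetilde F$, $h=\iota_{K_L}$, condition number $5\kappa$ and $s=\sqrt{\max\{5\kappa,4\}}$. Because $x_0\in K_L$, the initial term equals $\Delta_0$, and we obtain
\[
\mathbb E[F(q_{T_\varepsilon})-F^\star]\le e^{-T_\varepsilon/s}\Delta_0+\frac{729\lambda_H^2d^2}{2bs}.
\]
The choice $T_\varepsilon=\lceil s\log(2\Delta_0/\varepsilon)\rceil$ makes the first term at most $\varepsilon/2$. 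The choice $b\ge 729\lambda_H^2d^2/(s\varepsilon)$ makes the second term at most $\varepsilon/2$.

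The last step is the oracle count. Each sample of $G^{\mathrm{paired}}$ uses one gradient evaluation and forms $\nabla^2V(Y)L$ using $d$ HVPs, and no calibration queries are needed. Hence $N_{\nabla V}=bT_\varepsilon$ and $N_{\mathrm{HVP}}=dbT_\varepsilon$. Using $b\le 1+729\lambda_H^2d^2/(s\varepsilon)$ and $s=\Theta(\sqrt\kappa)$, the product $bT_\varepsilon$ is $O\bigl((\sqrt\kappa+\lambda_H^2d^2/\varepsilon)\log(2\Delta_0/\varepsilon)\bigr)$; multiplying by $d$ gives the HVP bound. No step is genuinely hard. The only point needing care is that the variance bound is evaluated at the possibly infeasible point $y_k$, and the $3\sqrt{d/\alpha}$ bound above handles this.
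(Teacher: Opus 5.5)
Your proposal is correct and follows essentially the same route as the paper. Both arguments bound $\|L_k^y\|_F<3\sqrt{d/\alpha}$, use Proposition~\ref{prop:hessian-assisted-final} to get the $\tfrac{729}{2}L_H^2d^2/\alpha^2$ variance (unchanged by the deterministic entropy term), and apply Lemma~\ref{lem:prox-fixed-batch} with $f=\widetilde F$, $h=\iota_{K_L}$ and condition number $5\kappa$, so that the choices of $T_\varepsilon$ and $b$ make each term at most $\varepsilon/2$.
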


\begin{proof}
As above, $\|L_k^y\|_F<3\sqrt{d/\alpha}$. Proposition~\ref{prop:hessian-assisted-final} gives
\[
\mathbb E[G_b^{\mathrm{paired}}(y_k)\mid\mathcal F_k]=\nabla E(y_k)
\]
and
\[
\mathbb E[\|G_b^{\mathrm{paired}}(y_k)-\nabla E(y_k)\|^2\mid\mathcal F_k]\le\frac{\sigma^2}{b},\qquad \sigma^2:=\frac{729}{2}\frac{L_H^2d^2}{\alpha^2}.
\]
Adding the deterministic entropy gradient preserves this stochastic error. Lemma~\ref{lem:prox-fixed-batch} gives
\[
\mathbb E[F(q_{T_\varepsilon})-F^\star]\le e^{-T_\varepsilon/s}\Delta_0+\frac{729\lambda_H^2d^2}{2bs}.
\]
The parameter choices make both terms at most $\varepsilon/2$. Each realization uses one gradient evaluation and $d$ HVPs.
\end{proof}

\subsection{Symmetric-root score acceleration}\label{subsec:score}

We begin by recording the gradient identities underlying STL and Price--STL. Let
\[
X:=m+C\xi,\qquad \xi\sim\mathcal N(0,I_d),
\]
with $C\in\mathbb S_{++}^d$. Since
\[
F(m,C)=\mathbb E[V(m+C\xi)]-\log\det C,
\]
differentiation under the expectation gives
\[
\nabla_mF(m,C)=\mathbb E[\nabla V(X)],\qquad \nabla_CF(m,C)=\operatorname{sym}\mathbb E[\nabla V(X)\xi^\top]-C^{-1}.
\]
Moreover,
\[
\nabla_x\log q_{m,C}(X)=-C^{-1}\xi.
\]
Thus
\[
\mathbb E[G^{\mathrm{STL}}(m,C;\xi)]=\nabla F(m,C).
\]
Stein's identity gives
\[
\mathbb E[\nabla V(X)\xi^\top]=\mathbb E[\nabla^2V(X)C],
\]
and therefore
\[
\mathbb E[G^{\mathrm{Price\text{-}STL}}(m,C;\xi)]=\nabla F(m,C).
\]

The following matrix inequality is useful for controlling the score-estimator variance in symmetric-root coordinates.

\begin{lemma}\label{lem:skew-final}
Let $A\in\mathbb S^d$, $C\in\mathbb S_{++}^d$, and $B:=AC-C^{-1}$. Then
\[
\|B\|_F^2\le2\|\operatorname{sym}(B)\|_F^2.
\]
\end{lemma}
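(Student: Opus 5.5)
The plan is to reduce the claim to a trace inequality and then use a similarity transformation. Write $B=\operatorname{sym}(B)+\operatorname{skew}(B)$, where $\operatorname{skew}(B):=(B-B^\top)/2$. The symmetric and skew-symmetric parts are Frobenius-orthogonal, so
\[
\|B\|_F^2=\|\operatorname{sym}(B)\|_F^2+\|\operatorname{skew}(B)\|_F^2 .
\]
The claimed inequality is therefore equivalent to $\|\operatorname{skew}(B)\|_F^2\le\|\operatorname{sym}(B)\|_F^2$.

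Next I expand both squared norms. This gives
\[
\|\operatorname{sym}(B)\|_F^2-\|\operatorname{skew}(B)\|_F^2=\langle B,B^\top\rangle_F=\operatorname{tr}(B^\top B^\top)=\operatorname{tr}(B^2).
\]
So it suffices to show $\operatorname{tr}(B^2)\ge0$.

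The key step is to conjugate $B$ by $C^{1/2}$, which is well defined because $C\in\mathbb S_{++}^d$. Since $A$ is symmetric,
\[
C^{1/2}BC^{-1/2}=C^{1/2}AC^{1/2}-C^{-1},
\]
and the right-hand side is a symmetric matrix. Hence $B$ is similar to a real symmetric matrix $S$. The trace is invariant under similarity, so
\[
\operatorname{tr}(B^2)=\operatorname{tr}(S^2)=\|S\|_F^2\ge0,
\]
which is the same as saying that the eigenvalues of $B$ are real and $\operatorname{tr}(B^2)=\sum_i\lambda_i^2\ge0$. This completes the argument.

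There is no serious obstacle. The only point needing care is noticing that the structure $AC-C^{-1}$ makes $B$ similar to a symmetric matrix. That structure is essential, since for a general matrix $\operatorname{tr}(B^2)$ can be negative; a nonzero skew-symmetric matrix is a counterexample.
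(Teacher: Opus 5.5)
Your proof is correct, and it takes a different route from the paper.

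The paper works from the intertwining relation $CB=CAC-I=B^\top C$. It diagonalizes $C=Q\operatorname{diag}(c_1,\ldots,c_d)Q^\top$ and shows that, in this eigenbasis, the skew and symmetric parts $K$ and $S$ of $Q^\top BQ$ satisfy $K_{ij}=\frac{c_j-c_i}{c_i+c_j}S_{ij}$. This gives $|K_{ij}|\le|S_{ij}|$ entry by entry. You instead use the global identity $\|\operatorname{sym}(B)\|_F^2-\|\operatorname{skew}(B)\|_F^2=\operatorname{tr}(B^2)$, together with the similarity $C^{1/2}BC^{-1/2}=C^{1/2}AC^{1/2}-C^{-1}$.

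Your argument is basis-free and more general. Since $\operatorname{tr}(B^2)=\sum_i\lambda_i^2$ for any square matrix, it proves the inequality for every real matrix with real spectrum. The specific form $AC-C^{-1}$ matters only because it makes $B$ similar to a symmetric matrix.

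The paper's entrywise relation gives quantitative information your argument does not. It yields $\|\operatorname{skew}(B)\|_F\le\frac{c_{\max}-c_{\min}}{c_{\max}+c_{\min}}\|\operatorname{sym}(B)\|_F$. The same relation, written as $\widetilde B_{ij}=\frac{2c_j}{c_i+c_j}\widetilde S_{ij}$, is reused in the proof of Lemma~\ref{lem:score-growth-full}. There it gives $\|B\|_F^2\le\sum_i\widetilde S_{ii}^2+2\sum_{i\ne j}\widetilde S_{ij}^2$, which treats diagonal and off-diagonal entries differently and does not follow from the trace identity alone.
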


\begin{proof}
Since $A$ and $C$ are symmetric,
\[
CB=CAC-I=B^\top C.
\]
Diagonalize $C=Q\operatorname{diag}(c_1,\ldots,c_d)Q^\top$ and write $\widetilde B:=Q^\top BQ$. Then
\[
c_i\widetilde B_{ij}=c_j\widetilde B_{ji}.
\]
Let $S:=\operatorname{sym}(\widetilde B)$ and $K:=\operatorname{skew}(\widetilde B)$ where
$\operatorname{skew}(M):=\frac{M-M^\top}{2}$.
Then
\[
K_{ij}=\frac{c_j-c_i}{c_i+c_j}S_{ij}.
\]
Since $c_i,c_j>0$,
\[
\|K\|_F^2\le\|S\|_F^2.
\]
By orthogonal invariance and Frobenius orthogonality,
\[
\|B\|_F^2=\|S\|_F^2+\|K\|_F^2\le2\|S\|_F^2=2\|\operatorname{sym}(B)\|_F^2.
\]
\end{proof}

We can now establish the explicit relative and additive moment bounds.

\begin{lemma}\label{lem:score-growth-full}
If $V$ is quadratic, then for every $(m,C)\in\mathbb R^d\times\mathbb S_{++}^d$,
\[
\mathbb E\|G_b^{\mathrm{STL}}(m,C)\|^2\le\left(1+\frac{d+4}{b}\right)\|\nabla F(m,C)\|^2,
\]
and
\[
\mathbb E\|G_b^{\mathrm{Price\text{-}STL}}(m,C)\|^2\le\left(1+\frac{2}{b}\right)\|\nabla F(m,C)\|^2.
\]
Under Assumptions~\ref{assump:curvature} and~\ref{assump:hessian-lipschitz}, for every $(m,C)\in K_C$,
\[
\mathbb E\|G_b^{\mathrm{STL}}(m,C)\|^2\le\left(1+\frac{4d+16}{b}\right)\|\nabla F(m,C)\|^2+\frac{128L_H^2d(d+2)(d+5)}{\alpha^2b},
\]
and
\[
\mathbb E\|G_b^{\mathrm{Price\text{-}STL}}(m,C)\|^2\le\left(1+\frac{8}{b}\right)\|\nabla F(m,C)\|^2+\frac{16L_H^2}{\alpha^2b}\left(10d^2+\frac{d(d+2)}{2}\right).
\]
\end{lemma}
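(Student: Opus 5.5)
The plan is to write each single-sample estimator as the exact gradient plus a centered error, so that for a batch of size $b$
\[
\mathbb E\|G_b\|^2=\|\nabla F\|^2+\frac1b\,\mathbb E\|G_1-\nabla F\|^2\le\|\nabla F\|^2+\frac1b\,\mathbb E\|G_1\|^2 .
\]
It then suffices to bound the single-sample second moment $\mathbb E\|G_1\|^2$ by a multiple of $\|\nabla F\|^2$ plus an $L_H^2$ remainder.

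\textbf{Quadratic case.} Let $V(x)=\tfrac12(x-\mu)^\top A(x-\mu)$. Then
\[
\nabla V(X)-C^{-1}\xi=A(m-\mu)+B\xi,\qquad B:=AC-C^{-1}.
\]
The exact gradient is $\nabla F=(g,\operatorname{sym}B)$ with $g=A(m-\mu)$.

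For STL, the mean component has second moment $\|g\|^2+\|B\|_F^2$. The factor component is $\operatorname{sym}((g+B\xi)\xi^\top)$. I would expand it using the Gaussian fourth-moment identities already used in Proposition~\ref{prop:fd-estimator-full}, which gives terms of the form
\[
\mathbb E\|g\xi^\top\|^2=d\|g\|^2,\qquad \mathbb E\|B\xi\xi^\top\|_F^2=(d+2)\|B\|_F^2 .
\]
Lemma~\ref{lem:skew-final} then converts $\|B\|_F^2$ into $2\|\operatorname{sym}B\|_F^2$. Collecting terms, the variance should be at most $(d+4)\|\nabla F\|^2$.

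For Price--STL, the factor component is deterministic and equals $\operatorname{sym}B$. The mean component has variance $\mathbb E\|B\xi\|^2=\|B\|_F^2\le2\|\operatorname{sym}B\|_F^2$, which gives the factor $1+2/b$.

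\textbf{General case.} On $K_C$, I would linearize around $m$ by setting $A:=\nabla^2V(m)$ and
\[
\nabla V(X)=\nabla V(m)+AC\xi+r,\qquad \|r\|\le\tfrac{L_H}{2}\|C\xi\|^2 .
\]
Each estimator then equals its quadratic surrogate plus a perturbation. For STL the perturbation is $(r,\operatorname{sym}(r\xi^\top))$. For Price--STL it is $r$ in the mean and $\operatorname{sym}((\nabla^2V(X)-A)C)$ in the factor, the latter bounded by $L_H\|C\xi\|\,\|C\|_F$.

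The surrogate's exact gradient $\nabla\bar F$ differs from $\nabla F$ by $\mathbb E$ of these perturbations. Using $(a+b)^2\le2a^2+2b^2$ twice,
\[
\mathbb E\|G_1\|^2\le 2\,\mathbb E\|\bar G_1\|^2+2\,\mathbb E\|\text{pert}\|^2 ,
\]
and
\[
\|\nabla\bar F\|^2\le2\|\nabla F\|^2+2\|\mathbb E\,\text{pert}\|^2 .
\]
This yields the doubled constants $4d+16$ and $8$ in the relative term.

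For the additive term, the bound $C\preceq\tfrac{2}{\sqrt\alpha}I$ gives $\|C\xi\|^2\le\tfrac4\alpha\|\xi\|^2$. Chi-square moments then give
\[
\mathbb E\|\xi\|^4=d(d+2),\qquad \mathbb E\|\xi\|^6=d(d+2)(d+4) ,
\]
together with $\|C\|_F^2\le 4d/\alpha$. These produce terms of order $L_H^2 d(d+2)(d+5)/\alpha^2$ for STL, where the factor $\|\xi\|^2$ from $r\xi^\top$ adds the extra $(d+4)$, plus a lower-order piece from the mean component. For Price--STL they produce terms of order $L_H^2(d^2+d(d+2))/\alpha^2$.

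\textbf{Main obstacle.} The delicate step is bookkeeping the constants so they match the stated $128$ and $16(10d^2+d(d+2)/2)$. In particular, I need to decide whether to apply the relative bound to $\mathbb E\|G_1\|^2$ or to the variance only, since the ``$1+$'' leading term must survive undoubled. I expect to place the doubling only inside the $1/b$ part by bounding $\mathbb E\|G_1-\nabla F\|^2\le\mathbb E\|G_1\|^2$ first and then splitting.
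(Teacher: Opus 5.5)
Your overall architecture matches the paper's: the batch identity, a quadratic surrogate about $m$ plus a Taylor remainder, and moment bounds on $K_C$. But two steps, as you describe them, cannot produce the stated constants. Those constants matter downstream, since Theorem~\ref{thm:score-recovery-full} uses $b\ge 8d+32$ and $b\ge16$ to make the relative term at most $\tfrac12$.

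\textbf{First gap: the quadratic STL bound.} Write $u:=\nabla V(m)$ (your $g$) and $S:=\operatorname{sym}B$. Expanding exactly with $\|\operatorname{sym}(pz^\top)\|_F^2=\tfrac12\|p\|^2\|z\|^2+\tfrac12(p^\top z)^2$ and $\mathbb E(\xi^\top S\xi)^2=(\operatorname{tr}S)^2+2\|S\|_F^2$, the single-sample variance is
\[
\tfrac{d+1}{2}\|u\|^2+\tfrac{d+4}{2}\|B\|_F^2+\tfrac12(\operatorname{tr}S)^2 .
\]
The statement of Lemma~\ref{lem:skew-final}, together with $(\operatorname{tr}S)^2\le d\|S\|_F^2$, only bounds the last two terms by $\tfrac{3d+8}{2}\|S\|_F^2$. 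If you drop the symmetrization, as your listed terms $\mathbb E\|g\xi^\top\|^2$ and $\mathbb E\|B\xi\xi^\top\|_F^2$ suggest, you get a factor $2d+5$. Neither is $d+4$.

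The missing ingredient is the entrywise relation from inside the proof of Lemma~\ref{lem:skew-final}: in an eigenbasis of $C$, $\widetilde B_{ij}=\frac{2c_j}{c_i+c_j}\widetilde S_{ij}$. Hence the diagonal of $B$ equals that of $S$, and only off-diagonal pairs can double:
\[
\|B\|_F^2\le\sum_i\widetilde S_{ii}^2+2\sum_{i\ne j}\widetilde S_{ij}^2 .
\]
Meanwhile $(\operatorname{tr}S)^2\le d\sum_i\widetilde S_{ii}^2$ sees only the diagonal. Together,
\[
\tfrac{d+4}{2}\|B\|_F^2+\tfrac12(\operatorname{tr}S)^2\le(d+2)\sum_i\widetilde S_{ii}^2+(d+4)\sum_{i\ne j}\widetilde S_{ij}^2\le(d+4)\|S\|_F^2 .
\]

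\textbf{Second gap: your resolution of the ``main obstacle'' loses constants.} Suppose you bound the variance by $\mathbb E\|G_1\|^2$ and then split $G_1=\bar G_1+\mathrm{pert}$. For STL the surrogate then contributes $2\mathbb E\|\bar G_1\|^2\le 2(d+5)\|\nabla\bar F\|^2\le4(d+5)\|\nabla F\|^2+\cdots$. For Price--STL it contributes $2\cdot3\|\nabla\bar F\|^2\le12\|\nabla F\|^2+\cdots$. These are $4d+20$ and $12$, not $4d+16$ and $8$.

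You must split the centered variable instead:
\[
G_1-\nabla F=(\bar G_1-\mathbb E\bar G_1)+(\mathrm{pert}-\delta),\qquad \delta:=\mathbb E\,\mathrm{pert},\quad \mathbb E\bar G_1=\nabla F-\delta ,
\]
so that $\operatorname{Var}(G_1)\le2\operatorname{Var}(\bar G_1)+2\mathbb E\|\mathrm{pert}\|^2$.

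For STL, set $M:=2L_H/\alpha$ and $\sigma_H^2:=M^2d(d+2)(d+5)$. The centered split gives $(4d+16)\|\nabla F\|^2+4(d+4)\|\delta\|^2+2\sigma_H^2$. To reach the stated $128$ term you need $\|\delta\|^2\le5M^2d^2$. This requires Stein's identity, $\operatorname{sym}\mathbb E[r\xi^\top]=\mathbb E D$ with $D:=\operatorname{sym}((\nabla^2V(X)-A)C)$. The direct bound $\mathbb E\|r\|\|\xi\|\lesssim Md^{3/2}$ would leave a $d^4$ term. With the Stein bound, $4(d+4)\|\delta\|^2\le30\sigma_H^2$, and the total additive term is $32\sigma_H^2$, as required.

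For Price--STL, the generic step $\|\nabla\bar F\|^2\le2\|\nabla F\|^2+2\|\delta\|^2$ still overshoots, giving $48M^2d^2$ instead of $40M^2d^2$. Use instead that the surrogate variance is $\|B\|_F^2\le2\|S\|_F^2$ with $S=\nabla_CF-\mathbb ED$, so only the factor shift enters:
\[
\|B\|_F^2\le4\|\nabla F\|^2+4\mathbb E\|D\|_F^2 .
\]
Combined with $\mathbb E\|r\|^2\le M^2d(d+2)$ and $\mathbb E\|D\|_F^2\le4M^2d^2$, this gives exactly $8\|\nabla F\|^2+\frac{16L_H^2}{\alpha^2}\bigl(10d^2+\frac{d(d+2)}{2}\bigr)$.
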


\begin{proof}
Fix $(m,C)$ and write
\[
X:=m+C\xi,\qquad \xi\sim\mathcal N(0,I_d),\qquad g:=\nabla F(m,C).
\]
For a square-integrable random vector or matrix $Z$, write
\[
\operatorname{Var}(Z):=\mathbb E\|Z-\mathbb EZ\|^2.
\]
If $Z_1,\ldots,Z_b$ are independent copies of $Z$, then
\[
\mathbb E\left\|\frac1b\sum_{i=1}^bZ_i\right\|^2=\|\mathbb EZ\|^2+\frac1b\operatorname{Var}(Z).
\]
Thus it suffices to control the single-sample variances. Define
\[
u:=\nabla V(m),\qquad H:=\nabla^2V(m),\qquad B:=HC-C^{-1},\qquad S:=\operatorname{sym}(B).
\]
Suppose first that $V$ is quadratic. Then
\[
r(\xi):=\nabla V(X)-C^{-1}\xi=u+B\xi,
\]
and
\[
G^{\mathrm{STL}}=\left(u+B\xi,\operatorname{sym}(r(\xi)\xi^\top)\right),\qquad G^{\mathrm{Price\text{-}STL}}=(u+B\xi,S),
\]
with $g=(u,S)$. For STL,
\[
\mathbb E\|r(\xi)-u\|^2=\|B\|_F^2.
\]
For every $p,z\in\mathbb R^d$,
\[
\|\operatorname{sym}(pz^\top)\|_F^2=\frac12\|p\|^2\|z\|^2+\frac12(p^\top z)^2.
\]
Using vanishing odd Gaussian moments,
\[
\mathbb E[\|r(\xi)\|^2\|\xi\|^2]=d\|u\|^2+(d+2)\|B\|_F^2.
\]
Moreover,
\[
\mathbb E[(r(\xi)^\top\xi)^2]=\|u\|^2+2\|S\|_F^2+(\operatorname{tr}S)^2.
\]
Since $\mathbb E[\operatorname{sym}(r(\xi)\xi^\top)]=S$,
\[
\operatorname{Var}(G^{\mathrm{STL}})=\frac{d+1}{2}\|u\|^2+\frac{d+4}{2}\|B\|_F^2+\frac12(\operatorname{tr}S)^2.
\]

Diagonalize
\[
C=Q\operatorname{diag}(c_1,\ldots,c_d)Q^\top,\qquad \widetilde B:=Q^\top BQ,\qquad \widetilde S:=Q^\top SQ.
\]
The identity $CB=B^\top C$ gives
\[
c_i\widetilde B_{ij}=c_j\widetilde B_{ji},\qquad \widetilde B_{ij}=\frac{2c_j}{c_i+c_j}\widetilde S_{ij}.
\]
Hence
\[
\|B\|_F^2\le\sum_i\widetilde S_{ii}^2+2\sum_{i\ne j}\widetilde S_{ij}^2,
\]
while
\[
(\operatorname{tr}S)^2\le d\sum_i\widetilde S_{ii}^2.
\]
Substituting into the variance formula,
\[
\operatorname{Var}(G^{\mathrm{STL}})\le(d+4)(\|u\|^2+\|S\|_F^2)=(d+4)\|g\|^2.
\]
For Price--STL, the covariance-factor component is deterministic, so Lemma~\ref{lem:skew-final} gives
\[
\operatorname{Var}(G^{\mathrm{Price\text{-}STL}})=\|B\|_F^2\le2\|S\|_F^2\le2\|g\|^2.
\]
This proves the quadratic bounds after batch averaging. Now suppose Assumptions~\ref{assump:curvature} and~\ref{assump:hessian-lipschitz} hold and $(m,C)\in K_C$. Then
\[
\|C\|_{\mathrm{op}}\le\frac2{\sqrt\alpha},\qquad \|C\|_F\le2\sqrt{\frac d\alpha}.
\]
Define
\[
w(\xi):=\nabla V(X)-u-HC\xi,\qquad D(\xi):=\operatorname{sym}((\nabla^2V(X)-H)C),\qquad M:=\frac{2L_H}{\alpha}.
\]
Taylor's theorem and Hessian Lipschitzness give
\[
\|w(\xi)\|\le\frac{L_H}{2}\|C\xi\|^2\le M\|\xi\|^2
\]
and
\[
\|D(\xi)\|_F\le\|\nabla^2V(X)-H\|_{\mathrm{op}}\|C\|_F\le2M\sqrt d\,\|\xi\|.
\]
Therefore
\[
\mathbb E\|w(\xi)\|^2\le M^2d(d+2),\qquad \mathbb E\|D(\xi)\|_F^2\le4M^2d^2.
\]
Stein's identity gives
\[
\operatorname{sym}\mathbb E[w(\xi)\xi^\top]=\mathbb ED(\xi).
\]
Define $\delta:=(\mathbb Ew,\mathbb ED).$ Then $g=(u,S)+\delta$. Moreover,
\[
\|\mathbb Ew\|\le Md,\qquad \|\mathbb ED\|_F\le2Md,
\]
so $\|\delta\|^2\le5M^2d^2$. For STL, decompose
\[
G^{\mathrm{STL}}=\mathcal A+\mathcal E,
\]
where $\mathcal A(\xi):=\left(u+B\xi,\operatorname{sym}((u+B\xi)\xi^\top)\right)$ and $\mathcal E(\xi):=\left(w(\xi),\operatorname{sym}(w(\xi)\xi^\top)\right)$. Then
\[
\mathbb E\mathcal E=\delta,\qquad \mathbb E\mathcal A=g-\delta.
\]
The quadratic calculation yields
\[
\operatorname{Var}(\mathcal A)\le(d+4)\|g-\delta\|^2.
\]
Also,
\[
\mathbb E\|\mathcal E\|^2\le\mathbb E[(1+\|\xi\|^2)\|w(\xi)\|^2]\le M^2d(d+2)(d+5)=:\sigma_H^2.
\]
Centering the two components and using $\|a+b\|^2\le2\|a\|^2+2\|b\|^2$,
\[
\operatorname{Var}(G^{\mathrm{STL}})\le2(d+4)\|g-\delta\|^2+2\sigma_H^2.
\]
Thus
\[
\operatorname{Var}(G^{\mathrm{STL}})\le4(d+4)\|g\|^2+4(d+4)\|\delta\|^2+2\sigma_H^2.
\]
The bound on $\delta$ gives
\[
(d+4)\|\delta\|^2\le\frac{15}{2}\sigma_H^2.
\]
Hence
\[
\operatorname{Var}(G^{\mathrm{STL}})\le(4d+16)\|g\|^2+32\sigma_H^2
\]
and therefore
\[
\operatorname{Var}(G^{\mathrm{STL}})\le(4d+16)\|g\|^2+\frac{128L_H^2d(d+2)(d+5)}{\alpha^2}.
\]

For Price--STL,
\[
G^{\mathrm{Price\text{-}STL}}=(u+B\xi+w(\xi),S+D(\xi)),
\]
so
\[
G^{\mathrm{Price\text{-}STL}}-g=(B\xi,0)+(w-\mathbb Ew,D-\mathbb ED).
\]
Therefore
\[
\operatorname{Var}(G^{\mathrm{Price\text{-}STL}})\le2\|B\|_F^2+2\mathbb E\|w\|^2+2\mathbb E\|D\|_F^2.
\]
Let $g_C:=\nabla_CF(m,C)$. Since $g_C=S+\mathbb ED$, Lemma~\ref{lem:skew-final} and Jensen's inequality give
\[
\|B\|_F^2\le2\|S\|_F^2\le4\|g_C\|_F^2+4\|\mathbb ED\|_F^2\le4\|g\|^2+4\mathbb E\|D\|_F^2.
\]
Hence
\[
\operatorname{Var}(G^{\mathrm{Price\text{-}STL}})\le8\|g\|^2+2\mathbb E\|w\|^2+10\mathbb E\|D\|_F^2,
\]
which yields
\[
\operatorname{Var}(G^{\mathrm{Price\text{-}STL}})\le8\|g\|^2+\frac{16L_H^2}{\alpha^2}\left(10d^2+\frac{d(d+2)}{2}\right).
\]
Batch averaging completes the proof.
\end{proof}

We next construct the smooth spectral extension needed to apply projected acceleration.

Using the scalar function $\psi$ from \eqref{eq:entropy-extension-psi}, define
\[
\widetilde h(C):=\sum_{i=1}^d\psi(\lambda_i(C)),\qquad \widetilde F(m,C):=E(m,C)+\widetilde h(C).
\]

\begin{lemma}\label{lem:symmetric-geometry-final}
Under Assumption~\ref{assump:curvature}, the minimizer $x_\star=(m_\star,C_\star)$ belongs to the interior of $K_C$. Moreover, $\widetilde F$ is globally $\alpha$-strongly convex and $5\beta$-smooth on $\mathbb R^d\times\mathbb S^d$, satisfies
\[
\widetilde F=F,\qquad \nabla\widetilde F=\nabla F\qquad\text{on }K_C,
\]
and obeys
\[
\widetilde F(P_{K_C}z)\le\widetilde F(z)
\]
for every $z\in\mathbb R^d\times\mathbb S^d$.
\end{lemma}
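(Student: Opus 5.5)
Three things need to be proved: the optimizer lies in the interior of $K_C$, $\widetilde F$ has the stated global curvature and agrees with $F$ on $K_C$, and projecting onto $K_C$ never increases $\widetilde F$. I take them in that order.

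\emph{Interior.} The optimality conditions in Section~\ref{sec:setup} give $\beta^{-1}I\preceq\Sigma_\star=C_\star^2\preceq\alpha^{-1}I$. Hence every eigenvalue of $C_\star$ lies in $[\beta^{-1/2},\alpha^{-1/2}]$. This interval sits strictly inside $(\frac{1}{2\sqrt\beta},\frac{2}{\sqrt\alpha})$, so $x_\star$ is interior.

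\emph{Curvature and agreement on $K_C$.} The function $\psi$ from \eqref{eq:entropy-extension-psi} is $C^1$ and convex with $0\le\psi''\le4\beta$, as shown in the proof of Lemma~\ref{lem:chol-geometry-final}. By the standard theory of spectral functions (Lewis; Davis), $\widetilde h(C)=\sum_i\psi(\lambda_i(C))$ is then convex and differentiable, with $\nabla\widetilde h(C)=Q\operatorname{diag}(\psi'(\lambda_i))Q^\top$.

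For smoothness, I would show that $\nabla\widetilde h$ is $4\beta$-Lipschitz in Frobenius norm. Lipschitzness of a scalar derivative transfers to the spectral map: for $f$ convex with $f'$ being $L$-Lipschitz, the matrix function $C\mapsto f'(C)$ is $L$-Lipschitz in Frobenius norm, via the Daleckii--Krein/divided-difference representation. Equivalently, the spectral function $\sum f(\lambda_i)$ has an $L$-smooth gradient whenever $f$ does.

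Combining this with Lemma~\ref{lem:curvature-inheritance}, $\widetilde F=E+\widetilde h$ is $\alpha$-strongly convex and $5\beta$-smooth. On $K_C$ all eigenvalues of $C$ are at least $\frac{1}{2\sqrt\beta}$. There $\widetilde h(C)=-\sum_i\log\lambda_i=-\log\det C$ and $\nabla\widetilde h(C)=-C^{-1}$, so $\widetilde F=F$ and $\nabla\widetilde F=\nabla F$ on $K_C$.

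\emph{Projection monotonicity.} Write $\ell=\frac{1}{2\sqrt\beta}$ and $u=\frac{2}{\sqrt\alpha}$. The projection $P_{K_C}$ leaves $m$ unchanged and clips the eigenvalues of $C$ to $[\ell,u]$ while keeping the eigenvectors.

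For the $\widetilde h$ part, I would check that $\psi$ is decreasing on $(-\infty,\ell]$. Indeed $\psi'(t)=-2\sqrt\beta+4\beta(t-\ell)<0$ for $t<\ell$. Clipping up to $\ell$ therefore decreases $\psi$. Clipping down to $u$ increases $-\log$, but only by $\log(\lambda/u)$ per clipped eigenvalue.

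This is why $\widetilde h$ alone is not monotone, and the argument must use $E$. The natural route is first-order: since $\widetilde F$ is convex, it suffices to show $\widetilde F(P z)\le\widetilde F(z)$ along the segment from $Pz$ to $z$. Because $P$ acts spectrally, I would use a one-parameter interpolation that moves each eigenvalue monotonically between its clipped and original values. It is then enough to show that $\langle\nabla\widetilde F(w),z-Pz\rangle\ge0$ at points $w$ on that path.

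The key estimate is the $C$-component of $\nabla E$. I would write it via Stein's identity as $\operatorname{sym}\mathbb E[\nabla^2V(X)C]$. It is bounded below by $\alpha C$ in the eigen-directions where $C$ is large, and above by $\beta C$ where $C$ is small.

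For an eigenvalue $\lambda\ge u$, the directional derivative of $F$ is therefore at least $\alpha\lambda-1/\lambda\ge 4-\frac{\sqrt\alpha}{2}\cdot\frac{1}{\sqrt\alpha}>0$, which is positive. For $\lambda\le\ell$, the directional derivative is $\beta\lambda+\psi'(\lambda)\le \beta\ell-2\sqrt\beta<0$. In both cases the energy moves in the same direction as needed. Since the projection direction $z-Pz$ is diagonal in $C$'s eigenbasis, these directional inequalities integrate along the path.

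\emph{Main obstacle.} I expect the hardest step to be making this directional argument rigorous when $C$ has clipped and unclipped eigenvalues together, and when the path leaves $\mathbb S_{++}$ (negative eigenvalues). That requires bounding $\langle\operatorname{sym}\mathbb E[\nabla^2V(X)W],P_{\mathrm{up}}\rangle$ with eigenprojectors. I would first try Rayleigh-quotient bounds $\alpha\|\cdot\|^2\le\cdot\le\beta\|\cdot\|^2$, applied through the identity $\langle\mathbb E[\nabla^2V\,W],D\rangle=\mathbb E\operatorname{tr}(D\nabla^2 V W)$ with $D$ and $W$ commuting.
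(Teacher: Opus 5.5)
Your proposal is correct and is essentially the paper's argument. The paper fixes the eigenbasis $Q$ of $C$, computes $\partial_{c_i}\widetilde F=c_ia_i(c)+\psi'(c_i)$ with $a_i(c):=\mathbb E[q_i^\top\nabla^2V(X_c)q_i]\in[\alpha,\beta]$ by Gaussian integration by parts, and then clips one eigenvalue at a time using exactly your two sign estimates; this is the coordinate form of your directional derivative along the segment from $P_{K_C}z$ to $z$. The obstacle you anticipate does not arise, since every point of that segment is diagonal in the same basis (so the pairing splits over eigendirections, with unclipped ones contributing zero), and $E$ and Stein's identity are valid for any symmetric $C$; for $\lambda<0$ simply bound $\lambda a_i$ by $0$ instead of by $\beta\lambda$, and for $\lambda\ge u$ your estimate should read $\alpha\lambda-1/\lambda\ge2\sqrt\alpha-\sqrt\alpha/2>0$.
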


\begin{proof}
From $\beta^{-1}I\preceq\Sigma_\star\preceq\alpha^{-1}I$
and $C_\star=\Sigma_\star^{1/2}$,
\[
\frac1{\sqrt\beta}I\preceq C_\star\preceq\frac1{\sqrt\alpha}I,
\]
so $x_\star$ belongs to the interior of $K_C$.

Since $0\le\psi''\le4\beta$, the spectral function $\widetilde h$ is convex and $4\beta$-smooth on $\mathbb S^d$. By Lemma~\ref{lem:curvature-inheritance}, $E$ is $\alpha$-strongly convex and $\beta$-smooth. Hence $\widetilde F$ is globally $\alpha$-strongly convex and $5\beta$-smooth. For every $(m,C)\in K_C$,
\[
\widetilde h(C)=-\sum_{i=1}^d\log\lambda_i(C)=-\log\det C.
\]
The two branches of $\psi$ have matching first derivatives, so the gradients also agree.

It remains to prove projection monotonicity. Write
\[
C=Q\operatorname{diag}(c_1,\ldots,c_d)Q^\top
\]
and let $\widetilde c_i$ be $c_i$ clipped to $[1/(2\sqrt\beta),2/\sqrt\alpha]$. Then
\[
P_{K_C}(m,C)=\left(m,Q\operatorname{diag}(\widetilde c_1,\ldots,\widetilde c_d)Q^\top\right).
\]
Let $q_i$ be the $i$th column of $Q$ and define
\[
X_c:=m+\sum_{j=1}^dc_jq_j\xi_j.
\]
Gaussian integration by parts gives
\[
\frac{\partial}{\partial c_i}\mathbb E[V(X_c)]=c_i\,\mathbb E[q_i^\top\nabla^2V(X_c)q_i].
\]
Write $a_i(c):=\mathbb E[q_i^\top\nabla^2V(X_c)q_i]$,
so that $\alpha\le a_i(c)\le\beta$. The derivative of $\widetilde F$ in the $c_i$ coordinate is $c_i a_i(c)+\psi'(c_i)$. If $c_i\ge2/\sqrt\alpha$, then $\psi'(c_i)=-1/c_i$, and
\[
c_i a_i(c)+\psi'(c_i)\ge\alpha c_i-\frac1{c_i}\ge2\sqrt\alpha-\frac{\sqrt\alpha}{2}>0.
\]
Thus clipping $c_i$ downward cannot increase $\widetilde F$. If $c_i\le1/(2\sqrt\beta)$, then
\[
\psi'(c_i)=4\beta c_i-4\sqrt\beta,
\]
and hence
\[
c_i a_i(c)+\psi'(c_i)=c_i(a_i(c)+4\beta)-4\sqrt\beta\le\frac{5\beta}{2\sqrt\beta}-4\sqrt\beta<0.
\]
Thus clipping $c_i$ upward cannot increase $\widetilde F$. Coordinates already in the spectral interval are unchanged, proving $\widetilde F(P_{K_C}(m,C))\le\widetilde F(m,C)$.
\end{proof}

The remaining optimization argument can be done by the following projected acceleration lemma, which extends the unconstrained strong-growth acceleration argument of \citet{vaswani2019fast} to the projected setting.

\begin{lemma}\label{lem:score-proj-fixed-batch}
Let $K$ be a nonempty closed convex subset of a finite-dimensional Euclidean space. Let $\Phi$ be globally $\alpha$-strongly convex and $\beta$-smooth, with minimizer $x_\star\in K$, and write $\Phi^\star:=\Phi(x_\star)$ and $\kappa:=\beta/\alpha$. Suppose
\[
\Phi(P_Kz)\le\Phi(z)
\]
for every $z$ in the ambient space. Set
\[
s:=\sqrt{\max\{3\kappa,4\}},\qquad \eta:=\frac1{\alpha s^2}.
\]
Starting from $x_0=v_0\in K$, consider the updates
\[
y_k:=\frac{x_k+s^{-1}v_k}{1+s^{-1}},
\]
\[
x_{k+1}:=P_K(y_k-\eta g_k),
\]
and
\[
v_{k+1}:=P_K\left((1-s^{-1})v_k+s^{-1}y_k-\frac1{\alpha s}g_k\right).
\]
Suppose, conditionally on the past $\mathcal F_k$,
\[
\mathbb E[g_k\mid\mathcal F_k]=\nabla\Phi(y_k)
\]
and
\[
\mathbb E[\|g_k-\nabla\Phi(y_k)\|^2\mid\mathcal F_k]\le\frac12\|\nabla\Phi(y_k)\|^2+\sigma^2.
\]
Then, for every integer $T\ge0$,
\[
\mathbb E[\Phi(x_T)-\Phi^\star]\le e^{-T/s}\left(\Phi(x_0)-\Phi^\star+\frac{\alpha}{2}\|x_0-x_\star\|^2\right)+\frac{\sigma^2}{\alpha s}.
\]
\end{lemma}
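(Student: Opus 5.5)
We will run an estimate-sequence argument in the spirit of Lemma~\ref{lem:prox-fixed-batch}, adapted to projections and to relative noise as in \citet{vaswani2019fast}. The potential is
\[
\Psi_k:=\Phi(x_k)-\Phi^\star+\frac{\alpha}{2}\|v_k-x_\star\|^2 .
\]
We aim to show the one-step inequality
\[
\mathbb E[\Psi_{k+1}\mid\mathcal F_k]\le(1-s^{-1})\Psi_k+\eta\sigma^2 .
\]
Summing the geometric series then gives $\zeta_T\le\eta s\sigma^2=\sigma^2/(\alpha s)$. Since $\Psi_0$ equals the bracketed initial quantity and $\Psi_T\ge\Phi(x_T)-\Phi^\star$, the claimed bound follows.

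\textbf{Step 1: the $x$-update.} Write $z_{k+1}:=y_k-\eta g_k$. By the hypothesis $\Phi(P_Kz)\le\Phi(z)$, we have $\Phi(x_{k+1})\le\Phi(z_{k+1})$. This is why we never need a prox-gradient mapping for the $x$-update. Smoothness at $y_k$ gives
\[
\Phi(z_{k+1})\le\Phi(y_k)-\eta\langle\nabla\Phi(y_k),g_k\rangle+\frac{\beta\eta^2}{2}\|g_k\|^2 .
\]
Taking conditional expectations and using the noise assumption,
\[
\mathbb E\|g_k\|^2\le\frac32\|\nabla\Phi(y_k)\|^2+\sigma^2 .
\]
Because $\beta\eta\le1/3$ (from $s^2\ge3\kappa$), this yields
\[
\mathbb E\Phi(x_{k+1})\le\Phi(y_k)-c_1\eta\|\nabla\Phi(y_k)\|^2+\frac{\beta\eta^2}{2}\sigma^2,\qquad c_1=\frac34 .
\]

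\textbf{Step 2: the $v$-update.} Set $w:=(1-s^{-1})v_k+s^{-1}y_k$. Since $x_\star\in K$ and projection is nonexpansive,
\[
\|v_{k+1}-x_\star\|^2\le\|w-(\alpha s)^{-1}g_k-x_\star\|^2 .
\]
Expanding the square and taking expectations produces three terms:
\[
\|w-x_\star\|^2,\qquad -\tfrac{2}{\alpha s}\langle\nabla\Phi(y_k),w-x_\star\rangle,\qquad \tfrac{1}{\alpha^2s^2}\mathbb E\|g_k\|^2 .
\]
The first is bounded by convexity of the squared norm,
\[
\|w-x_\star\|^2\le(1-s^{-1})\|v_k-x_\star\|^2+s^{-1}\|y_k-x_\star\|^2 .
\]
For the inner-product term we use the coupling $y_k=(x_k+s^{-1}v_k)/(1+s^{-1})$. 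It implies $w-y_k=(1-s^{-1})(v_k-y_k)$ and $s^{-1}(v_k-y_k)=y_k-x_k$, hence
\[
w-x_\star=(y_k-x_\star)+(s-1)(y_k-x_k) .
\]
We then apply strong convexity in two places:
\[
\langle\nabla\Phi(y_k),y_k-x_\star\rangle\ge\Phi(y_k)-\Phi^\star+\frac{\alpha}{2}\|y_k-x_\star\|^2,\qquad \langle\nabla\Phi(y_k),y_k-x_k\rangle\ge\Phi(y_k)-\Phi(x_k) .
\]
Multiplying by $\alpha/2$, the $s^{-1}\|y_k-x_\star\|^2$ term is cancelled by the strong convexity term. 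The $\Phi(y_k)$ terms combine with Step 1 to leave $(1-s^{-1})(\Phi(x_k)-\Phi^\star)$.

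\textbf{Step 3: absorbing the squared gradients (main obstacle).} The remaining gradient terms are $-c_1\eta\|\nabla\Phi(y_k)\|^2$ from Step 1 and, from Step 2,
\[
+\frac{1}{2\alpha s^2}\cdot\frac32\|\nabla\Phi(y_k)\|^2=\frac{3\eta}{4}\|\nabla\Phi(y_k)\|^2 .
\]
These must have nonpositive total. The bookkeeping is tight: $c_1=1-\frac32\cdot\frac{\beta\eta}{2}\ge\frac34$ exactly when $\beta\eta\le\frac13$. That is precisely the reason for the choice $s^2\ge3\kappa$ and the relative-noise constant $\tfrac12$. So we expect the constants to close, but they must be tracked carefully.

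The additive $\sigma^2$ contributions total $\frac{\beta\eta^2}{2}\sigma^2+\frac{\eta}{2}\sigma^2\le\eta\sigma^2$, which matches the recursion. If the constants fail to close, we will fall back on the three-sequence estimate-sequence form of Lemma~\ref{lem:prox-fixed-batch} with $\Lambda_k$ linearized at $y_k$. There the nonnegative term $\frac{\eta}{2}(1-\beta\eta)\mathbb E\|\widetilde g_k\|^2$ gives extra slack to absorb the relative noise.
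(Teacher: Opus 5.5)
Your proof is correct, and Step~3 closes without any fallback. With $\beta\eta\le\frac13$ the Step~1 coefficient is $1-\frac34\beta\eta\ge\frac34$. This cancels the $+\frac34\eta\|\nabla\Phi(y_k)\|^2$ coming from $\frac{\eta}{2}\mathbb E\|g_k\|^2$ in Step~2, with equality exactly at $\beta\eta=\frac13$. The additive terms total $\frac{\eta}{2}(1+\beta\eta)\sigma^2\le\frac23\eta\sigma^2\le\eta\sigma^2$.

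Your route differs from the paper's in packaging and in how the $v$-projection is handled.

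\textbf{The paper's route.} It builds an estimate sequence $\varphi_{k+1}=(1-s^{-1})\varphi_k+s^{-1}\Lambda_k$. The lower model $\Lambda_k$ carries the normal-cone vector $p_k$ from the $v$-projection, so $v_{k+1}$ is exactly the minimizer of $\varphi_{k+1}$. It then proves $\mathbb E[\varphi_k^{\min}]\ge\mathbb E[\Phi(x_k)]-\zeta_k$ via a completing-the-square identity. Strong convexity is used at $x_k$ to cancel the $\langle g_k,v_k-y_k\rangle$ terms. The point $x_\star$ enters only at the end, through $\mathbb E[\Lambda_k(x_\star)\mid\mathcal F_k]\le\Phi^\star$, which is where $\langle p_k,x_\star-v_{k+1}\rangle\le0$ is used.

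\textbf{Your route.} You prove a direct one-step contraction of $\Psi_k=\Phi(x_k)-\Phi^\star+\frac\alpha2\|v_k-x_\star\|^2$. The projection enters only through nonexpansiveness toward $x_\star\in K$. You use strong convexity at $x_\star$, to absorb the Jensen term $s^{-1}\|y_k-x_\star\|^2$, and only plain convexity at $x_k$.

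\textbf{Relation between them.} Since $\mathbb E[\varphi_k(x_\star)]-\Phi^\star\ge\mathbb E\Psi_k-\zeta_k$, the two arguments are dual forms of the same bound. The variance bookkeeping, $-\eta\|\nabla\Phi(y_k)\|^2+\frac\eta2(1+\beta\eta)\mathbb E\|g_k\|^2$, is literally identical in both.

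\textbf{What each buys.} Your version is shorter. It makes transparent that the projections matter only through $\Phi(P_Kz)\le\Phi(z)$ and $\|P_Kz-x_\star\|\le\|z-x_\star\|$. The paper's version gives a uniform template with Lemma~\ref{lem:prox-fixed-batch} and keeps explicit nonnegative slack terms, which it then discards.

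One minor point: $\zeta_T$ is undefined in your write-up. Simply write $\eta\sigma^2\sum_{j<T}(1-s^{-1})^j\le\eta s\sigma^2=\sigma^2/(\alpha s)$.
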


\begin{proof}
Since $s\ge2$ and $s^2\ge3\kappa=3\beta/\alpha$, we have $\beta\eta\le1/3$. The projection steps ensure that $x_k,v_k\in K$, and convexity of $K$ gives $y_k\in K$. Let $\mathcal F_k$ denote the sigma-field generated by the randomness before the fresh batch at iteration $k$, so that $x_k$, $v_k$, and $y_k$ are $\mathcal F_k$-measurable.

Conditional unbiasedness and the variance assumption imply
\[
\mathbb E[\|g_k\|^2\mid\mathcal F_k]=\|\nabla\Phi(y_k)\|^2+\mathbb E[\|g_k-\nabla\Phi(y_k)\|^2\mid\mathcal F_k]\le\frac32\|\nabla\Phi(y_k)\|^2+\sigma^2.
\]

Projection optimality for the $v$-update gives
\[
p_k:=\alpha s\left((1-s^{-1})v_k+s^{-1}y_k-v_{k+1}\right)-g_k
\]
with
\[
\langle p_k,z-v_{k+1}\rangle\le0\qquad\text{for every }z\in K.
\]
Define
\[
\Lambda_k(z):=\Phi(y_k)+\langle g_k,z-y_k\rangle+\frac{\alpha}{2}\|z-y_k\|^2+\langle p_k,z-v_{k+1}\rangle.
\]
Thus strong convexity and conditional unbiasedness give
\[
\mathbb E[\Lambda_k(z)\mid\mathcal F_k]\le\Phi(y_k)+\langle\nabla\Phi(y_k),z-y_k\rangle+\frac{\alpha}{2}\|z-y_k\|^2\le\Phi(z)
\]
for every $\mathcal F_k$-measurable $z\in K$.

Projection monotonicity and $\beta$-smoothness give
\[
\Phi(x_{k+1})\le\Phi(y_k-\eta g_k)\le\Phi(y_k)-\eta\langle\nabla\Phi(y_k),g_k\rangle+\frac{\beta\eta^2}{2}\|g_k\|^2.
\]
Taking conditional expectations,
\[
\mathbb E[\Phi(x_{k+1})\mid\mathcal F_k]\le\Phi(y_k)-\eta\|\nabla\Phi(y_k)\|^2+\frac{\beta\eta^2}{2}\mathbb E[\|g_k\|^2\mid\mathcal F_k].
\]

Define
\[
\varphi_0(z):=\Phi(x_0)+\frac{\alpha}{2}\|z-x_0\|^2,\qquad \varphi_{k+1}:=(1-s^{-1})\varphi_k+s^{-1}\Lambda_k.
\]
Every $\varphi_k$ is quadratic with Hessian $\alpha I$. Its unique minimizer is $v_k$. Indeed, this holds at $k=0$, and
\[
\nabla\varphi_{k+1}(v_{k+1})=(1-s^{-1})\alpha(v_{k+1}-v_k)+s^{-1}\left(g_k+p_k+\alpha(v_{k+1}-y_k)\right)=0
\]
by the definition of $p_k$. Writing
\[
\varphi_k^{\min}:=\varphi_k(v_k),
\]
we have
\[
\varphi_k(z)=\varphi_k^{\min}+\frac{\alpha}{2}\|z-v_k\|^2.
\]
The definition of $y_k$ gives
\[
x_k-y_k=-s^{-1}(v_k-y_k).
\]

Set
\[
\zeta_0:=0,\qquad \zeta_{k+1}:=(1-s^{-1})\zeta_k+\eta\sigma^2.
\]
We prove by induction that
\[
\mathbb E[\varphi_k^{\min}]\ge\mathbb E[\Phi(x_k)]-\zeta_k.
\]
The claim holds with equality at $k=0$.

Evaluating $\varphi_{k+1}$ at $v_{k+1}$ gives
\[
\begin{aligned}
\varphi_{k+1}^{\min}
&=(1-s^{-1})\varphi_k^{\min}+s^{-1}\Phi(y_k)
+(1-s^{-1})\frac{\alpha}{2}\|v_{k+1}-v_k\|^2\\
&\quad+s^{-1}\langle g_k,v_{k+1}-y_k\rangle
+s^{-1}\frac{\alpha}{2}\|v_{k+1}-y_k\|^2.
\end{aligned}
\]
Completing the square in the terms involving $v_{k+1}$ gives the exact identity
\begin{align*}
&(1-s^{-1})\frac{\alpha}{2}\|v_{k+1}-v_k\|^2+s^{-1}\langle g_k,v_{k+1}-y_k\rangle+s^{-1}\frac{\alpha}{2}\|v_{k+1}-y_k\|^2\\
&=(1-s^{-1})s^{-1}\left(\langle g_k,v_k-y_k\rangle+\frac{\alpha}{2}\|v_k-y_k\|^2\right)\\
&-\frac{1}{2\alpha s^2}\|g_k\|^2+\frac{\alpha}{2}\left\|v_{k+1}-\left((1-s^{-1})v_k+s^{-1}y_k-\frac1{\alpha s}g_k\right)\right\|^2.
\end{align*}
Dropping the final nonnegative term,
\[
\varphi_{k+1}^{\min}\ge(1-s^{-1})\varphi_k^{\min}+s^{-1}\Phi(y_k)+(1-s^{-1})s^{-1}\left(\langle g_k,v_k-y_k\rangle+\frac{\alpha}{2}\|v_k-y_k\|^2\right)-\frac{\eta}{2}\|g_k\|^2.
\]

By the induction hypothesis, $\mathbb E[\varphi_k^{\min}]\ge\mathbb E[\Phi(x_k)]-\zeta_k$. Strong convexity gives
\[
\Phi(x_k)\ge\Phi(y_k)+\langle\nabla\Phi(y_k),x_k-y_k\rangle+\frac{\alpha}{2}\|x_k-y_k\|^2.
\]
Using $x_k-y_k=-s^{-1}(v_k-y_k)$ and conditional unbiasedness,
\[
\mathbb E[\Phi(x_k)]\ge\mathbb E[\Phi(y_k)]-s^{-1}\mathbb E\langle g_k,v_k-y_k\rangle+\frac{\alpha}{2s^2}\mathbb E\|v_k-y_k\|^2.
\]
Substituting into the estimate for $\varphi_{k+1}^{\min}$ cancels the inner-product terms and yields
\[
\mathbb E[\varphi_{k+1}^{\min}]\ge\mathbb E[\Phi(y_k)]-(1-s^{-1})\zeta_k-\frac{\eta}{2}\mathbb E\|g_k\|^2+\frac{\alpha}{2}(1-s^{-1})s^{-1}(1+s^{-1})\mathbb E\|v_k-y_k\|^2.
\]
Dropping the final nonnegative term,
\[
\mathbb E[\varphi_{k+1}^{\min}]\ge\mathbb E[\Phi(y_k)]-(1-s^{-1})\zeta_k-\frac{\eta}{2}\mathbb E\|g_k\|^2.
\]

The one-step upper bound implies
\[
\mathbb E[\Phi(y_k)]\ge\mathbb E[\Phi(x_{k+1})]+\eta\mathbb E\|\nabla\Phi(y_k)\|^2-\frac{\beta\eta^2}{2}\mathbb E\|g_k\|^2.
\]
Therefore
\[
\mathbb E[\varphi_{k+1}^{\min}]\ge\mathbb E[\Phi(x_{k+1})]-(1-s^{-1})\zeta_k+\eta\mathbb E\|\nabla\Phi(y_k)\|^2-\frac{\eta}{2}(1+\beta\eta)\mathbb E\|g_k\|^2.
\]
Since $\beta\eta\le1/3$,
\[
\eta\mathbb E\|\nabla\Phi(y_k)\|^2-\frac{\eta}{2}(1+\beta\eta)\mathbb E\|g_k\|^2\ge-\eta\sigma^2.
\]
Hence
\[
\mathbb E[\varphi_{k+1}^{\min}]\ge\mathbb E[\Phi(x_{k+1})]-\zeta_{k+1},
\]
closing the induction. Finally, since $x_\star\in K$,
\[
\mathbb E[\Lambda_k(x_\star)\mid\mathcal F_k]\le\Phi^\star.
\]
Iterating the estimate-sequence recursion gives
\[
\mathbb E[\varphi_T(x_\star)]-\Phi^\star\le(1-s^{-1})^T\left(\varphi_0(x_\star)-\Phi^\star\right).
\]
The induction invariant and $\varphi_T^{\min}\le\varphi_T(x_\star)$ imply
\[
\mathbb E[\Phi(x_T)-\Phi^\star]\le(1-s^{-1})^T\left(\Phi(x_0)-\Phi^\star+\frac{\alpha}{2}\|x_0-x_\star\|^2\right)+\zeta_T.
\]
Moreover,
\[
\zeta_T=\eta\sigma^2\sum_{j=0}^{T-1}(1-s^{-1})^j\le\eta s\sigma^2=\frac{\sigma^2}{\alpha s}.
\]
Using $(1-s^{-1})^T\le e^{-T/s}$ completes the proof.
\end{proof}

Combining the moment bounds, the smooth extension, and the projected acceleration lemma gives the full score-based guarantee.

\begin{theorem}[Full version of Theorem~\ref{thm:score-recovery-clean}]\label{thm:score-recovery-full}
Suppose Assumptions~\ref{assump:curvature} and~\ref{assump:hessian-lipschitz} hold, $x_0\in K_C$, and $0<\varepsilon\le\Delta_0$, where
\[
\Delta_0:=F(q_0)-F^\star+\frac{\alpha}{2}\|x_0-x_\star\|^2.
\]
Let
\[
s:=\sqrt{15\kappa},\qquad T_\varepsilon:=\left\lceil s\log\frac{2\Delta_0}{\varepsilon}\right\rceil.
\]
Choose
\[
\eta:=\frac{1}{\alpha s^2}=\frac{1}{15\beta}.
\]
For STL, choose
\[
b_{\mathrm{STL}}:=\left\lceil\max\left\{8d+32,\frac{256\lambda_H^2d(d+2)(d+5)}{s\varepsilon}\right\}\right\rceil.
\]
For Price--STL, choose
\[
b_{\mathrm{Price}}:=\left\lceil\max\left\{16,\frac{16\lambda_H^2d(21d+2)}{s\varepsilon}\right\}\right\rceil.
\]
Then Algorithm~\ref{alg:score-proj}, with either estimator and its corresponding fixed batch size, satisfies
\[
\mathbb E[F(q_{T_\varepsilon})-F^\star]\le\varepsilon.
\]
The exact oracle counts are
\[
N_{\nabla V}^{\mathrm{STL}}=b_{\mathrm{STL}}T_\varepsilon,
\]
and
\[
N_{\nabla V}^{\mathrm{Price}}=b_{\mathrm{Price}}T_\varepsilon,\qquad N_{\mathrm{HVP}}^{\mathrm{Price}}=db_{\mathrm{Price}}T_\varepsilon.
\]
Consequently,
\[
T_\varepsilon=O\left(\sqrt\kappa\log\frac{2\Delta_0}{\varepsilon}\right),
\]
\[
N_{\nabla V}^{\mathrm{STL}}=O\left(\left(d\sqrt\kappa+\frac{\lambda_H^2d^3}{\varepsilon}\right)\log\frac{2\Delta_0}{\varepsilon}\right),
\]
\[
N_{\nabla V}^{\mathrm{Price}}=O\left(\left(\sqrt\kappa+\frac{\lambda_H^2d^2}{\varepsilon}\right)\log\frac{2\Delta_0}{\varepsilon}\right),
\]
and
\[
N_{\mathrm{HVP}}^{\mathrm{Price}}=O\left(\left(d\sqrt\kappa+\frac{\lambda_H^2d^3}{\varepsilon}\right)\log\frac{2\Delta_0}{\varepsilon}\right).
\]
\end{theorem}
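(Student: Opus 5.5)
The plan is to apply Lemma~\ref{lem:score-proj-fixed-batch} with $K=K_C$ and $\Phi=\widetilde F$, using the moment bounds of Lemma~\ref{lem:score-growth-full} to verify its relative-noise hypothesis.

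\textbf{Setting up the lemma.} By Lemma~\ref{lem:symmetric-geometry-final}, $\widetilde F$ is globally $\alpha$-strongly convex and $5\beta$-smooth, so its condition number is $5\kappa$. The lemma then requires $s\ge\sqrt{\max\{15\kappa,4\}}$, which $s=\sqrt{15\kappa}$ satisfies once $\kappa\ge1$ (for $\kappa<4/15$ there is nothing to check, since $\kappa\ge1$), with $\eta=1/(\alpha s^2)=1/(15\beta)$. The same lemma gives $x_\star\in\operatorname{int}K_C$ and projection monotonicity $\widetilde F(P_{K_C}z)\le\widetilde F(z)$. Since $x_k,v_k\in K_C$ and $y_k$ is a convex combination of them, every query point lies in $K_C$. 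There $\nabla\widetilde F=\nabla F$, so $G_b(y_k)$ is conditionally unbiased for $\nabla\widetilde F(y_k)$. We also have $\widetilde F(x_k)=F(q_k)$ and $\widetilde F(x_\star)=F^\star$, so the output gap equals the $\widetilde F$-gap.

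\textbf{Verifying the noise condition.} Using $\mathbb E\|G_b\|^2=\|g\|^2+\operatorname{Var}(G_b)$, Lemma~\ref{lem:score-growth-full} gives the following variance bounds on $K_C$:
\[
\operatorname{Var}(G_b^{\mathrm{STL}})\le\frac{4d+16}{b}\|g\|^2+\frac{128L_H^2d(d+2)(d+5)}{\alpha^2 b},
\qquad
\operatorname{Var}(G_b^{\mathrm{Price\text{-}STL}})\le\frac{8}{b}\|g\|^2+\frac{16L_H^2d(21d+2)}{2\alpha^2 b},
\]
where $10d^2+d(d+2)/2=d(21d+2)/2$. The choices $b_{\mathrm{STL}}\ge8d+32$ and $b_{\mathrm{Price}}\ge16$ make the relative coefficient at most $1/2$, as the lemma requires. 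The additive term then serves as $\sigma^2$.

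\textbf{Choosing the batch size.} Lemma~\ref{lem:score-proj-fixed-batch} yields
\[
\mathbb E[F(q_T)-F^\star]\le e^{-T/s}\Delta_0+\frac{\sigma^2}{\alpha s}.
\]
With $T_\varepsilon=\lceil s\log(2\Delta_0/\varepsilon)\rceil$, the first term is at most $\varepsilon/2$. The second term equals $128\lambda_H^2d(d+2)(d+5)/(bs)$ for STL and $8\lambda_H^2d(21d+2)/(bs)$ for Price--STL. The second entries in the two maxima defining the batch sizes make each of these at most $\varepsilon/2$.

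\textbf{Counting oracle calls.} Each STL sample uses one gradient. Each Price--STL sample uses one gradient plus $d$ HVPs to form $\nabla^2V(X)C$. This gives the exact counts stated in the theorem. The asymptotic bounds follow from $s=\Theta(\sqrt\kappa)$, since $b\,T_\varepsilon\lesssim\bigl(b_0+\lambda_H^2\mathrm{poly}(d)/(s\varepsilon)\bigr)\,s\log(2\Delta_0/\varepsilon)$, where $b_0=8d+32$ for STL and $b_0=16$ for Price--STL.

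\textbf{Main obstacle.} The only delicate point is justifying that the query points stay in $K_C$, so that the moment bounds and $\nabla\widetilde F=\nabla F$ apply along the whole run. This is handled by the double projection in Algorithm~\ref{alg:score-proj} together with convexity of $K_C$. The rest is constant bookkeeping.
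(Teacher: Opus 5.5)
Your proposal is correct and follows the paper's proof essentially step for step. You apply Lemma~\ref{lem:score-proj-fixed-batch} to $\Phi=\widetilde F$ on $K=K_C$ with smoothness $5\beta$, so $s=\sqrt{15\kappa}$. You turn the second-moment bounds of Lemma~\ref{lem:score-growth-full} into relative-plus-additive variance bounds, with relative coefficient at most $1/2$ via $b_{\mathrm{STL}}\ge 8d+32$ and $b_{\mathrm{Price}}\ge 16$. You then size the batch so that $\sigma^2/(\alpha b s)\le\varepsilon/2$.

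The one step you leave implicit, and the paper states, is that $x_\star$ is the global minimizer of $\widetilde F$. This holds because $\nabla\widetilde F(x_\star)=\nabla F(x_\star)=0$ and $\widetilde F$ is strongly convex. The lemma's hypothesis and your identification $\inf\widetilde F=F^\star$ both depend on this fact.
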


\begin{proof}
The two projection steps ensure that $x_k,v_k\in K_C$, and convexity gives $y_k\in K_C$. By Lemma~\ref{lem:symmetric-geometry-final}, $\widetilde F$ is globally $\alpha$-strongly convex and $5\beta$-smooth, agrees with $F$ and its gradient on $K_C$, and satisfies
\[
\widetilde F(P_{K_C}z)\le\widetilde F(z).
\]
Since $x_\star$ lies in the interior of $K_C$ and $\nabla F(x_\star)=0$, it is also the global minimizer of $\widetilde F$.

For the selected estimator, define
\[
\sigma^2:=\begin{cases}\dfrac{128L_H^2d(d+2)(d+5)}{\alpha^2},&\text{STL},\\[0.4em]\dfrac{8L_H^2d(21d+2)}{\alpha^2},&\text{Price--STL}.\end{cases}
\]
Lemma~\ref{lem:score-growth-full} and fresh batch sampling give
\[
\mathbb E[g_k\mid\mathcal F_k]=\nabla\widetilde F(y_k).
\]
For STL,
\[
\mathbb E[\|g_k-\nabla\widetilde F(y_k)\|^2\mid\mathcal F_k]\le\frac{4d+16}{b}\|\nabla\widetilde F(y_k)\|^2+\frac{\sigma^2}{b},
\]
while for Price--STL,
\[
\mathbb E[\|g_k-\nabla\widetilde F(y_k)\|^2\mid\mathcal F_k]\le\frac8b\|\nabla\widetilde F(y_k)\|^2+\frac{\sigma^2}{b}.
\]
The prescribed batch sizes satisfy $b_{\mathrm{STL}}\ge8d+32$ and $b_{\mathrm{Price}}\ge16$, so in either case
\[
\mathbb E[\|g_k-\nabla\widetilde F(y_k)\|^2\mid\mathcal F_k]\le\frac12\|\nabla\widetilde F(y_k)\|^2+\frac{\sigma^2}{b}.
\]

Lemma~\ref{lem:score-proj-fixed-batch} therefore applies with $\Phi=\widetilde F$, $K=K_C$, strong-convexity constant $\alpha$, smoothness constant $5\beta$, and additive variance parameter $\sigma^2/b$. Its parameter becomes
\[
s=\sqrt{\max\{15\kappa,4\}}=\sqrt{15\kappa}.
\]
Hence
\[
\mathbb E[F(q_T)-F^\star]\le e^{-T/s}\Delta_0+\frac{\sigma^2}{\alpha bs}.
\]
At $T=T_\varepsilon$, $e^{-T_\varepsilon/s}\Delta_0\le\frac{\varepsilon}{2}$. Using $\lambda_H^2=L_H^2/\alpha^3$, each prescribed batch size also satisfies
\[
b\ge\frac{2\sigma^2}{\alpha s\varepsilon},
\]
and therefore
\[
\frac{\sigma^2}{\alpha bs}\le\frac{\varepsilon}{2}.
\]
This proves the accuracy bound.

Each STL sample uses one target-gradient evaluation. Each Price--STL sample uses one target-gradient evaluation and $d$ HVPs to form $\nabla^2V(X)C$. The exact oracle counts and their asymptotic forms follow directly from the batch sizes and $T_\varepsilon$.
\end{proof}

For quadratic targets, $L_H=0$, so the additive variance term vanishes. The preceding proof gives geometric accelerated convergence with fixed batch sizes of orders $d$ and $1$ for STL and Price--STL, respectively.

\begin{remark}
The fixed-batch choices in Theorems~\ref{thm:fd-prox} and~\ref{thm:score-recovery-clean} are selected for the target accuracy $\varepsilon$ and kept fixed throughout the run. Consequently, the variance-dependent oracle terms carry an additional factor $\log(2\Delta_0/\varepsilon)$. Multistage or increasing-batch stochastic acceleration can avoid this extra logarithmic factor \citep{ghadimi2013optimal}.
\end{remark}

\subsection{Dimension dependence and structured Hessian variation}
\label{app:dimension}

The additive variance bounds underlying Table~\ref{tab:stochastic-rates} scale as $L_H^2d^3/\alpha^2$ for FD and STL, and as $L_H^2d^2/\alpha^2$ for their second-order variants. These powers arise from uniform norm and Gaussian-moment bounds. Here we give a sharper analysis of the same Price--STL estimator using two alternative measures of Hessian variation. Write $H(x):=\nabla^2V(x)$ and define the global curvature oscillation
\begin{equation*}
\omega_H:=\inf_{A\in\mathbb S^d}\sup_{x\in\mathbb R^d}\|H(x)-A\|_{\mathrm{op}}.
\end{equation*}
Under Assumption~\ref{assump:curvature}, $\omega_H\le(\beta-\alpha)/2$. 
Alternatively, suppose a constant $L_{H,F}$ satisfies
\begin{equation}
\|H(x)-H(y)\|_F\le L_{H,F}\|x-y\|,\qquad x,y\in\mathbb R^d.
\label{eq:dimension-frobenius}
\end{equation}
These quantities measure the amplitude and the rate of Hessian variation, respectively. 
For example, $V''(x)=a+\delta\sin(kx)$, with $a>\delta>0$ and $k>0$, has $\omega_H=\delta$ and $L_{H,F}=\delta k$ while its curvature bounds stay fixed. 

\begin{proposition}\label{prop:price-structured-variance}
Under Assumption~\ref{assump:curvature}, the single-sample Price--STL estimator satisfies, for every $(m,C)\in K_C$,
\begin{equation*}
\operatorname{Var}(G^{\mathrm{Price\text{-}STL}})\le2\|\nabla F(m,C)\|^2+\sigma_{\mathrm{str}}^2,
\qquad \sigma_{\mathrm{str}}^2:=\frac{8\omega_H^2d}{\alpha}.
\end{equation*}
If \eqref{eq:dimension-frobenius} also holds, one may instead take
\[
\sigma_{\mathrm{str}}^2:=\min\left\{\frac{8\omega_H^2d}{\alpha},\frac{32L_{H,F}^2d}{\alpha^2}\right\}.
\]
\end{proposition}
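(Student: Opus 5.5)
The plan is to bound the two components of $G^{\mathrm{Price\text{-}STL}}$ separately, using the Gaussian Poincaré inequality for the mean component and Lemma~\ref{lem:skew-final} to relate its deterministic part to $\|\nabla F\|^2$. Write $X=m+C\xi$, $H(X)=\nabla^2V(X)$ and $\bar H:=\mathbb E[H(X)]$. By Stein's identity, the gradient is
\[
g=\nabla F(m,C)=\bigl(\mathbb E\nabla V(X),\,g_C\bigr),\qquad g_C=\operatorname{sym}(\bar HC)-C^{-1}.
\]
Also set $\bar B:=\bar HC-C^{-1}$, so that $\operatorname{sym}(\bar B)=g_C$. Since $\bar H\in\mathbb S^d$, Lemma~\ref{lem:skew-final} gives $\|\bar B\|_F^2\le2\|g_C\|_F^2\le2\|g\|^2$. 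Since the two components are orthogonal,
\[
\operatorname{Var}(G^{\mathrm{Price\text{-}STL}})=\operatorname{Var}\bigl(\nabla V(X)-C^{-1}\xi\bigr)+\operatorname{Var}\bigl(\operatorname{sym}(H(X)C)\bigr).
\]

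\emph{Mean component.} Apply the Gaussian Poincaré inequality to $\varphi(\xi):=\nabla V(m+C\xi)-C^{-1}\xi$. Its Jacobian is $J(\xi)=H(X)C-C^{-1}=\bar B+(H(X)-\bar H)C$. The second summand has mean zero, so the cross term vanishes and
\[
\operatorname{Var}(\varphi)\le\mathbb E\|J\|_F^2=\|\bar B\|_F^2+\mathbb E\|(H(X)-\bar H)C\|_F^2\le2\|g\|^2+\|C\|_{\mathrm{op}}^2\,\mathbb E\|H(X)-\bar H\|_F^2.
\]

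\emph{Covariance component.} Because $\operatorname{sym}$ is a Frobenius contraction,
\[
\operatorname{Var}\bigl(\operatorname{sym}(H(X)C)\bigr)\le\|C\|_{\mathrm{op}}^2\,\mathbb E\|H(X)-\bar H\|_F^2.
\]
On $K_C$ we have $\|C\|_{\mathrm{op}}^2\le4/\alpha$. Hence everything reduces to showing
\[
\mathbb E\|H(X)-\bar H\|_F^2\le\frac{\sigma_{\mathrm{str}}^2\,\alpha}{8}.
\]

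\emph{Oscillation bound.} The mean minimizes mean-square deviation, so for any $A\in\mathbb S^d$,
\[
\mathbb E\|H(X)-\bar H\|_F^2\le\mathbb E\|H(X)-A\|_F^2\le d\,\sup_x\|H(x)-A\|_{\mathrm{op}}^2.
\]
Taking the infimum over $A$ gives $d\,\omega_H^2$; if the infimum is not attained, pass to a limit. This yields a total of $2\|g\|^2+2\cdot\frac4\alpha d\omega_H^2=2\|g\|^2+8\omega_H^2d/\alpha$.

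\emph{Frobenius-Lipschitz bound.} Apply the Poincaré inequality to the matrix-valued map $\xi\mapsto H(m+C\xi)$. Its partial derivatives are $\partial_{\xi_k}H=DH(X)[Ce_k]$. By \eqref{eq:dimension-frobenius}, $\|\partial_{\xi_k}H\|_F\le L_{H,F}\|Ce_k\|$, so
\[
\mathbb E\|H(X)-\bar H\|_F^2\le L_{H,F}^2\|C\|_F^2\le\frac{4dL_{H,F}^2}{\alpha}.
\]
Multiplying by $\|C\|_{\mathrm{op}}^2\le4/\alpha$ in each of the two components gives $32L_{H,F}^2d/\alpha^2$. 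Taking the better of the two bounds gives the minimum in the statement.

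\emph{Main obstacle.} The delicate step is the mean component. A naive split of $\nabla V(X)$ around $m$, as in Lemma~\ref{lem:score-growth-full}, loses both the constant $2$ and the dimension. The fix is to center the Jacobian at $\bar H$ rather than at $\nabla^2V(m)$. This choice makes the cross term vanish exactly, and it makes the deterministic part $\bar B$ precisely the matrix whose symmetric part is $g_C$, which is what lets Lemma~\ref{lem:skew-final} apply. One should also check that the Poincaré inequality applies: $\varphi$ is $C^1$ with square-integrable Jacobian under Assumption~\ref{assump:curvature}, and for the second bound $H$ is Lipschitz and hence a.e.\ differentiable, which suffices.
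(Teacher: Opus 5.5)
Your proof is correct and follows essentially the same route as the paper. You center the Jacobian at $\bar H=\mathbb E H(X)$, apply Gaussian Poincar\'e together with Lemma~\ref{lem:skew-final} to the mean component, bound the covariance component by the same quantity $D_C=\mathbb E\|(H(X)-\bar H)C\|_F^2$, and control $D_C$ by the mean-minimizes-mean-square-deviation argument; the constants come out identical.

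The two deviations are minor. The paper first uses Stein's identity to split the mean component into $\bar B\xi$ plus a residual orthogonal to $\xi$, and only then applies Poincar\'e to the residual. For the Frobenius-Lipschitz bound it uses an independent copy $Y$ of $X$ and the identity $D_C=\tfrac12\mathbb E\|(H(X)-H(Y))C\|_F^2$, rather than a second Poincar\'e application, which avoids any a.e.-differentiability considerations.
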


\begin{proof}
Let $X=m+C\xi$, $\bar H=\mathbb EH(X)$, $u=\mathbb E\nabla V(X)$, and $B=\bar HC-C^{-1}$. Stein's identity gives
\[
e:=\nabla V(X)-u-\bar HC\xi,\qquad \mathbb Ee=0,\qquad \mathbb E[e\xi^\top]=0.
\]
Consequently, the mean component has variance $\|B\|_F^2+\mathbb E\|e\|^2$. Gaussian Poincar\'e gives
\[
\mathbb E\|e\|^2\le D_C:=\mathbb E\|(H(X)-\bar H)C\|_F^2.
\]
The covariance-factor component $\operatorname{sym}(H(X)C)-C^{-1}$ also has variance at most $D_C$. Since $CB=B^\top C$ and $\nabla_CF=\operatorname{sym}(B)$, Lemma~\ref{lem:skew-final} yields
\[
\operatorname{Var}(G^{\mathrm{Price\text{-}STL}})\le2\|\nabla_CF\|_F^2+2D_C.
\]
For any constant symmetric $A$, centering minimizes the squared deviation, so
\[
D_C\le\mathbb E\|(H(X)-A)C\|_F^2
\le\sup_x\|H(x)-A\|_{\mathrm{op}}^2\|C\|_F^2.
\]
Taking the infimum gives $D_C\le\omega_H^2\|C\|_F^2$. Under \eqref{eq:dimension-frobenius}, an independent copy $Y$ of $X$ gives the alternative bound
\[
D_C=\tfrac12\mathbb E\|(H(X)-H(Y))C\|_F^2
\le\tfrac12L_{H,F}^2\|C\|_{\mathrm{op}}^2\mathbb E\|X-Y\|^2
=L_{H,F}^2\|C\|_{\mathrm{op}}^2\|C\|_F^2.
\]
Finally, $\|C\|_{\mathrm{op}}^2\le4/\alpha$ and $\|C\|_F^2\le4d/\alpha$ on $K_C$ give the stated constants.
\end{proof}

\paragraph{Fixed-batch consequence.}
Let $x_0\in K_C$, $\Delta_0=F(q_0)-F^\star+\frac\alpha2\|x_0-x_\star\|^2$, and $0<\varepsilon\le\Delta_0$. Keep Algorithm~\ref{alg:score-proj} and set
\[
s=\sqrt{15\kappa},\quad \eta=\frac1{15\beta},\quad
b=\left\lceil\max\left\{16,\frac{2\sigma_{\mathrm{str}}^2}{\alpha s\varepsilon}\right\}\right\rceil,\quad
T_\varepsilon=\left\lceil s\log\frac{2\Delta_0}{\varepsilon}\right\rceil.
\]
The batch variance is at most $2\|\nabla F\|^2/b+\sigma_{\mathrm{str}}^2/b$. Lemmas~\ref{lem:symmetric-geometry-final} and~\ref{lem:score-proj-fixed-batch} therefore give
\[
\mathbb E[F(q_{T_\varepsilon})-F^\star]
\le e^{-T_\varepsilon/s}\Delta_0+\frac{\sigma_{\mathrm{str}}^2}{\alpha bs}\le\varepsilon.
\]
Thus the curvature-oscillation bound, without Hessian Lipschitzness, yields
\begin{equation}
N_{\nabla V}=O\!\left(\left[\sqrt\kappa+\frac{\omega_H^2d}{\alpha^2\varepsilon}\right]\log\frac{2\Delta_0}{\varepsilon}\right),
\qquad N_{\mathrm{HVP}}=dN_{\nabla V}.
\label{eq:price-structured-counts}
\end{equation}
If \eqref{eq:dimension-frobenius} holds, $\omega_H^2/\alpha^2$ in this bound can be replaced by $\min\{\omega_H^2/\alpha^2,L_{H,F}^2/\alpha^3\}$. The sampling term is linear in $d$ for gradient queries, while HVP accounting retains the factor $d$.

\paragraph{Graph-probit example.}
For the targets in Appendix~\ref{exp:targets},
\[
H(f)=Q_\tau+\operatorname{diag}\bigl(\ell''(\widetilde y_i f_i)\bigr),
\qquad 0<\ell''<1.
\]
Choosing $A=Q_\tau+\frac12I$ gives $\omega_H\le\frac12$. Moreover,
\[
\|H(f)-H(g)\|_F^2
=\sum_i|\ell''(\widetilde y_i f_i)-\ell''(\widetilde y_i g_i)|^2
\le C_\Phi^2\|f-g\|^2,
\]
where \(C_\Phi:=\sup_t|\ell'''(t)|<\infty\), so $L_{H,F}\le C_\Phi$. Both constants are independent of $d$ and $\tau$.
Since $\alpha=1$, the batch choice above gives
\[
N_{\nabla V}=O\!\left(\left[\sqrt\kappa+\frac d\varepsilon\right]
\log\frac{2\Delta_0}{\varepsilon}\right),
\qquad N_{\mathrm{HVP}}=dN_{\nabla V}
\]
for Price--STL. Thus the gradient-query sampling term is linear in $d$ and independent of $\tau$.

\section{Experimental details}
\label{app:experiments}

\subsection{Gaussian condition-number experiment}
\label{exp:gaussian}

For $d=8$ and $\kappa\in\{10,30,100,300,1000,3000\}$, we use
\[
\mu=\frac{\mathbf1_d}{\sqrt d},\qquad
H_\kappa=\frac12\operatorname{diag}\!\left(\kappa^{j/(d-1)}\right)_{j=0}^{d-1},\qquad
\pi_\kappa=\mathcal N(\mu,H_\kappa^{-1}).
\]
Thus $\alpha=1/2$, $\beta=\kappa/2$, and $q_\star=\pi_\kappa$. We optimize both mean and covariance from $q_0=\mathcal N(0,I_d)$. 

At every update, we evaluate the exact gap
\[
F(q)-F^\star
=\frac12\left((m-\mu)^\top H_\kappa(m-\mu)
+\operatorname{tr}(H_\kappa\Sigma)-\log\det(H_\kappa\Sigma)-d\right).
\]

\paragraph{Stochastic STL scaling.}
Figure~\ref{fig:exp-gaussian} uses Algorithm~\ref{alg:score-proj} with $G_b^{\mathrm{STL}}$, $b=64$, $\eta=1/(15\beta)$, and $s=\sqrt{15\kappa}$. Its nonaccelerated counterpart uses the same estimator, step, and domain $K_C$, with the projected-gradient update specified in Appendix~\ref{exp:methods}. We use the five seeds in Appendix~\ref{exp:evaluation}, a budget of $120\kappa$ updates, and no restarts or early stopping. The scaling panel records the first update at which the five-seed mean gap is at most $10^{-6}$; every run reaches this threshold. STL remains stochastic away from the optimum. On these quadratic targets, Lemma~\ref{lem:score-growth-full} bounds its variance by $((d+4)/b)\|\nabla F\|^2$, with $(d+4)/b=3/16<1/2$, so Lemma~\ref{lem:score-proj-fixed-batch} applies without an additive variance term.

\paragraph{Paired benchmark.}
We use the paired estimator in Algorithm~\ref{alg:fd-prox}, with $\eta=2/\kappa$ and $\rho=(\sqrt\kappa-1)/(\sqrt\kappa+1)$, and its nonaccelerated variant with $\rho=0$. The constant Hessian makes each paired sample exact:
\[
G^{\mathrm{paired}}(m,L;\xi,\zeta)
=\bigl(H_\kappa(m-\mu),\operatorname{low}(H_\kappa L)\bigr).
\]
We therefore use $b=1$ for Figure~\ref{fig:exp-gaussian-paired}. The factor constraint $\|L\|_F\le\sqrt{d/\alpha}=\sqrt{2d}$ contains the initial and optimal factors and was inactive throughout. There are no restarts. We record the first update with gap at most $10^{-6}$, with a budget of $\lceil20\kappa\rceil$ updates; all runs reach this threshold.

\begin{figure}[hbtp]
\centering
\includegraphics[width=\linewidth]{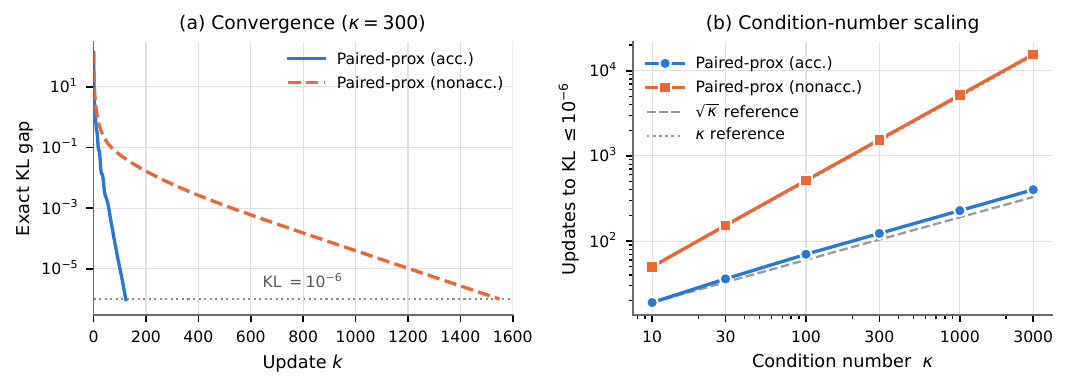}
\caption{\textbf{Paired benchmark on Gaussian targets.} Accelerated and nonaccelerated Paired-prox with a samplewise-exact estimator: (a) KL gap at $\kappa=300$; (b) updates to KL $\le10^{-6}$, with $\sqrt\kappa$ and $\kappa$ reference lines.}
\label{fig:exp-gaussian-paired}
\end{figure}

\paragraph{Additional methods at $\kappa=300$.}
Figure~\ref{fig:exp-gaussian-first} compares all seventeen methods in Table~\ref{tab:exp-methods} at $b=1,8,64$, with $6000$ updates and no early stopping. Steps and domains use $\alpha=1/2$ and $\beta=150$, with FD offset $\nu=10^{-2}/\sqrt\alpha$. Gaps are evaluated exactly at every update. The eight FD and paired variants are samplewise exact on this target and use one seed per batch; the other nine methods use the five seeds specified in Appendix~\ref{exp:evaluation}.

\begin{figure}[hbtp]
\centering
\includegraphics[width=\linewidth]{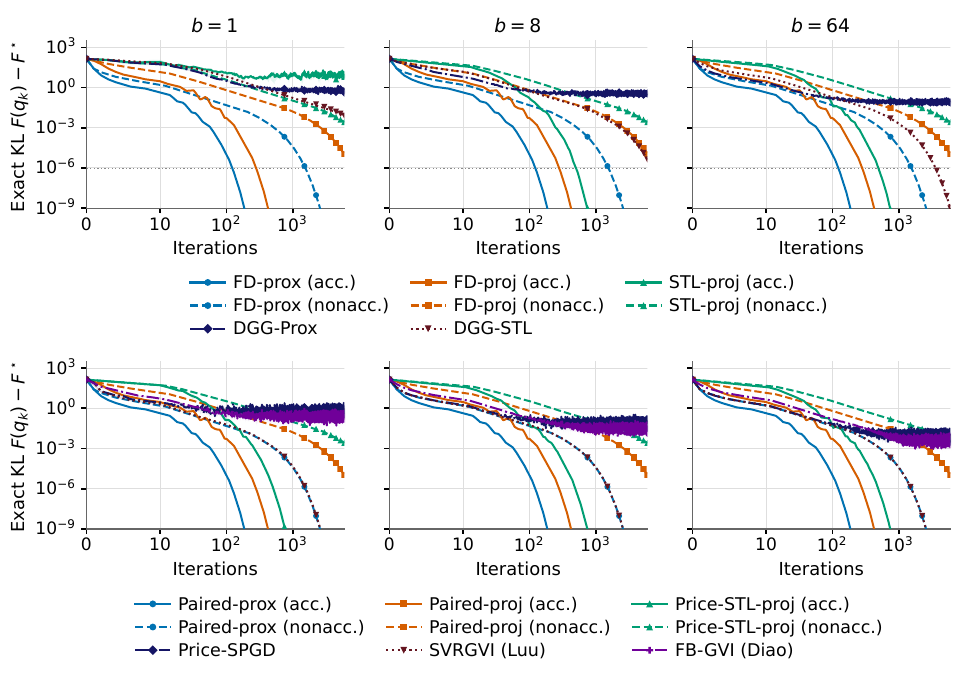}
\caption{\textbf{Gaussian comparisons at $\kappa=300$.} Rows show first- and second-order methods; columns show $b=1,8,64$. Curves are exact KL gaps, averaged over five seeds with $20$--$80$ percentile bands for stochastic methods. FD and paired methods use one seed. Solid and dashed lines distinguish accelerated and nonaccelerated variants. The dotted line marks $10^{-6}$; gaps below $10^{-9}$ leave the displayed range.}
\label{fig:exp-gaussian-first}
\label{fig:exp-gaussian-second}
\end{figure}

\subsection{OpenML datasets and graph-probit posteriors}
\label{exp:targets}

We use version~1 of the five OpenML datasets in Table~\ref{tab:exp-datasets}, retaining every observation and label. Predictors are centered and divided by their population standard deviation after removing zero-variance columns.

Numerical zeros in Diabetes are retained without imputation. Classes are encoded as $y_i\in\{0,1\}$, with $\widetilde y_i=2y_i-1$. Let $X_0\in\mathbb R^{n\times p}$ denote the standardized predictors.

\begin{table}[hbtp]
\centering\small
\begin{tabular}{lrrr}
\toprule
Dataset & OpenML ID & $d=n$ & $\kappa$\\
\midrule
Sonar & 40 & 208 & 69.47\\
Ionosphere & 59 & 351 & 75.64\\
WDBC & 1510 & 569 & 69.68\\
Diabetes & 37 & 768 & 70.20\\
Banknote & 1462 & 1372 & 69.43\\
\bottomrule
\end{tabular}
\caption{\textbf{Graph-probit dimensions and conditioning.} Each observation has one latent score. All targets use $\alpha=1$, so $\kappa=\beta$.}
\label{tab:exp-datasets}
\end{table}

\paragraph{Graph-probit classification.}
We construct an unweighted graph from $X_0$ using Euclidean ten-nearest neighbors, without self edges and with ties resolved by row index. An edge is included if either endpoint selects the other. For symmetric adjacency matrix $W$ and degree matrix $D_G$, define
\[
L_G=I_n-D_G^{-1/2}WD_G^{-1/2},\qquad Q_\tau=I_n+\tau L_G.
\]
The posterior has one latent score per observation and unit probit noise:
\[
V_{\rm probit}(f)=\frac12f^\top Q_\tau f+
\sum_{i=1}^n\ell(\widetilde y_i f_i),\qquad \ell(t)=-\log\Phi(t).
\]
This shifted graph-Laplacian prior follows the model family of \citet{miller2020graphprobit}; see also \citet{bertozzi2018uncertainty}. We fix $\tau=49$, giving $\kappa\le100$.

Writing $r=\phi/\Phi$, where $\phi$ is the standard normal density, gives
\begin{align}
\ell'(t)&=-r(t),& \ell''(t)&=r(t)(t+r(t)),\nonumber\\
\ell'''(t)&=r(t)\bigl[1-(t+r(t))(t+2r(t))\bigr].
\label{eq:exp-probit-derivatives}
\end{align}
For $Z\sim\mathcal N(0,1)$, the truncated-normal identities $\mathbb E[Z\mid Z\le t]=-r(t)<t$ and $\operatorname{Var}(Z\mid Z\le t)=1-\ell''(t)>0$ imply $0<\ell''<1$. Hence
\begin{align*}
\nabla V_{\rm probit}(f)&=Q_\tau f-\bigl(\widetilde y_i r(\widetilde y_i f_i)\bigr)_{i=1}^n,\\
\nabla^2V_{\rm probit}(f)&=Q_\tau+\operatorname{diag}\bigl(\ell''(\widetilde y_i f_i)\bigr),
\end{align*}
and $I_n\preceq\nabla^2V_{\rm probit}\preceq(2+\tau\lambda_{\max}(L_G))I_n$. We use $\alpha=1$ and $\beta=\kappa=2+\tau\lambda_{\max}(L_G)$.

The function $\ell'''$ is continuous and tends to zero at both tails. At $+\infty$ this follows directly from $r=\phi/\Phi$; at $-\infty$, the Gaussian tail expansion gives
\[
\frac{\Phi(-x)}{\phi(x)}=x^{-1}-x^{-3}+3x^{-5}-15x^{-7}+O(x^{-9}),\quad
r(-x)=x+x^{-1}-2x^{-3}+10x^{-5}+O(x^{-7}),
\]
so $\ell'''(-x)=-2x^{-3}+O(x^{-5})$. Thus $C_\Phi:=\sup_t|\ell'''(t)|<\infty$, and
\[
\|\nabla^2V_{\rm probit}(f)-\nabla^2V_{\rm probit}(g)\|_{\rm op}
=\max_i|\ell''(\widetilde y_i f_i)-\ell''(\widetilde y_i g_i)|
\le C_\Phi\|f-g\|.
\]
Varying a single coordinate shows that $L_H=C_\Phi$, independently of $\tau$, the graph, and $n$.

\subsection{Methods, steps, and oracle accounting}
\label{exp:methods}

Table~\ref{tab:exp-methods} lists six accelerated/nonaccelerated pairs and five baselines. DGG-Prox and DGG-STL are from \citet{domke2023provable}; the other baselines are Price-SPGD~\citep{kim2026stochastic}, SVRGVI~\citep{luu2025stochastic}, and FB-GVI~\citep{diao2023forward}. All optimize the same full-covariance Gaussian objective. Write $s_p=\sqrt{\max\{\kappa,4\}}$, $s_e=\sqrt{\max\{5\kappa,4\}}$, and $r_b=1+(d+2)/b$.

\begin{table}[hbtp]
\centering\small
\setlength{\tabcolsep}{3pt}
\renewcommand{\arraystretch}{1.14}
\begin{tabular}{@{}p{0.29\linewidth}p{0.065\linewidth}p{0.065\linewidth}p{0.31\linewidth}p{0.18\linewidth}@{}}
\hline
Method & Oracle & Coord. & Entropy treatment / domain & Step $\eta$ \\
\hline
FD-prox (Alg.~\ref{alg:fd-prox}) & 1st & $L$ & Entropy and factor-ball prox & $1/(\alpha s_p^2)$ \\
FD-proj (Alg.~\ref{alg:fd-proj}) & 1st & $L$ & Entropy extension, $K_L$ & $1/(\alpha s_e^2)$ \\
STL-proj (Alg.~\ref{alg:score-proj}) & 1st & $C$ & Full gradient, $K_C$ & $1/(15\beta)$ \\
Paired-prox (Alg.~\ref{alg:fd-prox}) & 2nd & $L$ & Entropy and factor-ball prox & $1/(\alpha s_p^2)$ \\
Paired-proj (Alg.~\ref{alg:fd-proj}) & 2nd & $L$ & Entropy extension, $K_L$ & $1/(\alpha s_e^2)$ \\
Price-STL-proj (Alg.~\ref{alg:score-proj}) & 2nd & $C$ & Full gradient, $K_C$ & $1/(15\beta)$ \\
\hline
DGG-Prox & 1st & $L$ & Entropy prox, $L_{ii}>0$ & $1/(\beta r_b)$ \\
DGG-STL & 1st & $C$ & Full gradient, $C\succeq\beta^{-1/2}I$ & $1/(2\beta r_b)$ \\
Price-SPGD & 2nd & $L$ & Entropy prox, $L_{ii}>0$ & $1/\beta$ \\
SVRGVI ($c=1$) & 2nd & $\Sigma$ & BW forward--backward step & $1/\beta$ \\
FB-GVI & 2nd & $\Sigma$ & BW forward--backward step & $1/(2\beta)$ \\
\hline
\end{tabular}
\caption{\textbf{Methods and fixed steps.} Each of the first six rows includes accelerated and nonaccelerated variants. $L$ is a positive-diagonal Cholesky factor and $C$ is symmetric positive definite. Oracle order refers to target derivatives.}
\label{tab:exp-methods}
\end{table}

The proximal and Cholesky projected variants use $\rho=(1-s^{-1})/(1+s^{-1})$, with $s=s_p$ and $s=s_e$, respectively; STL and Price-STL use $s=\sqrt{15\kappa}$. Nonaccelerated variants use the same estimator, step, and feasible set with no momentum. 
There are no restarts or step-size searches.
Price-STL uses a score-corrected mean estimator, whereas Price-SPGD uses the raw mean gradient. 
SVRGVI uses $c=1$. Covariance-based methods average the $b$ derivative estimates before one forward--backward update.

Per update, FD uses $d+1+b$ gradients, including $d+1$ calibration queries. Paired uses $b$ gradients and $db$ HVPs at independent gradient and Hessian locations. Price-STL, Price-SPGD, SVRGVI, and FB-GVI have the same counts but share derivative locations. STL, DGG-Prox, and DGG-STL use $b$ gradients and no HVPs. 

\subsection{OpenML execution and objective evaluation}
\label{exp:evaluation}

All methods start from $q_0=\mathcal N(0,I_d)$ and run for $600$ updates, with checkpoints $0,10,\ldots,600$ and five seeds $20260831+997j$, $j=0,\ldots,4$. We use $b=1,8,64,256$. Batches consist of Gaussian Monte Carlo samples, each using the full likelihood. For STL and Price-STL, these practical choices do not enforce the sufficient batch bounds in Theorem~\ref{thm:score-recovery-full}.

\paragraph{Objective and numerical optimum.}
Each likelihood term depends on a one-dimensional Gaussian marginal. We evaluate the prior and entropy analytically and the likelihood by Gauss--Hermite quadrature. Quadrature is used only for evaluation and numerical references. The plotted gaps subtract a common reference $F_{\mathrm{ref}}\approx F^\star$, obtained by solving the stationary equations in $(m,h)$ with $\Sigma=(Q_\tau+\operatorname{diag}h)^{-1}$.

References require a Cholesky-coordinate gradient norm at most $10^{-5}$, a covariance stationary-equation residual at most $10^{-8}$ in relative norm, and agreement between quadrature orders $192$ and $384$ within $10^{-8}$ for the objective and $10^{-5}$ for the gradient. Checkpoints use successive order pairs $(192,384)$, $(384,768)$, and $(768,1536)$ until objective agreement is within $10^{-8}$; each gap subtracts the reference at the same order. These are numerical consistency checks, not rigorous integration-error bounds.

\paragraph{Curves and hitting times.}
Graph-probit figures show five-seed arithmetic means with empirical $20$--$80$ percentile bands; solid and dashed lines distinguish accelerated and nonaccelerated variants. All five evaluations are required at each plotted checkpoint. For mean gap $\overline g_k$ and common initial gap $g_0$, the hitting time at relative threshold $\delta\in(0,1)$ is the first recorded checkpoint with $\overline g_k\le\delta g_0$. It is measured on the ten-update grid and right-censored if the threshold is not reached by update $600$; a finite crossing wins over a missing crossing.

\subsection{Graph-probit comparisons and batch sensitivity}
\label{exp:graph-probit-results}

At $b=64$, accelerated FD-proj, Paired-proj, and Price-STL-proj reach the $1\%$ threshold earlier on every dataset (Table~\ref{tab:exp-probit-hitting}). The proximal pairs tie at the first checkpoint, $k=10$, at this threshold. 

Table~\ref{tab:exp-probit-hitting} reports all six pairs at four thresholds.

\begin{table}[hbtp]
\centering
\small
\begin{tabular}{lcccc}
\toprule
Matched pair & $10^{-2}$ & $10^{-3}$ & $10^{-4}$ & $10^{-5}$ \\
\midrule
FD-prox & 0/0/5/0 & 3/0/2/0 & 4/0/1/0 & 5/0/0/0 \\
FD-proj & 5/0/0/0 & 4/0/1/0 & 5/0/0/0 & 5/0/0/0 \\
STL-proj & 2/3/0/0 & 2/2/0/1 & 2/0/0/3 & 1/0/0/4 \\
Paired-prox & 0/0/5/0 & 3/0/2/0 & 4/0/1/0 & 5/0/0/0 \\
Paired-proj & 5/0/0/0 & 4/0/1/0 & 5/0/0/0 & 5/0/0/0 \\
Price-STL-proj & 5/0/0/0 & 5/0/0/0 & 5/0/0/0 & 5/0/0/0 \\
\bottomrule
\end{tabular}
\caption{\textbf{Graph-probit hitting-time comparisons at $b=64$.} Thresholds are fractions of the initial mean gap. Entries count datasets with an earlier accelerated crossing / earlier nonaccelerated crossing / equal recorded crossing / no crossing by either variant. Matched-pair rankings are identical under derivative-query budgets.}
\label{tab:exp-probit-hitting}
\end{table}

\paragraph{Query costs.}
Figure~\ref{fig:exp-probit-query} replots the $b=64$ curves against cumulative gradient queries. FD includes its $d+1$ calibration gradients per update, shifting its curves relative to STL and the gradient-only baselines. In the second-order comparisons, all methods use $b$ gradients and $db$ HVPs per update. 
Thus the within-pair momentum comparisons hold under both iteration and derivative-query budgets, while costs across estimators differ.

\begin{figure}[hbtp]
\centering
\includegraphics[width=\linewidth]{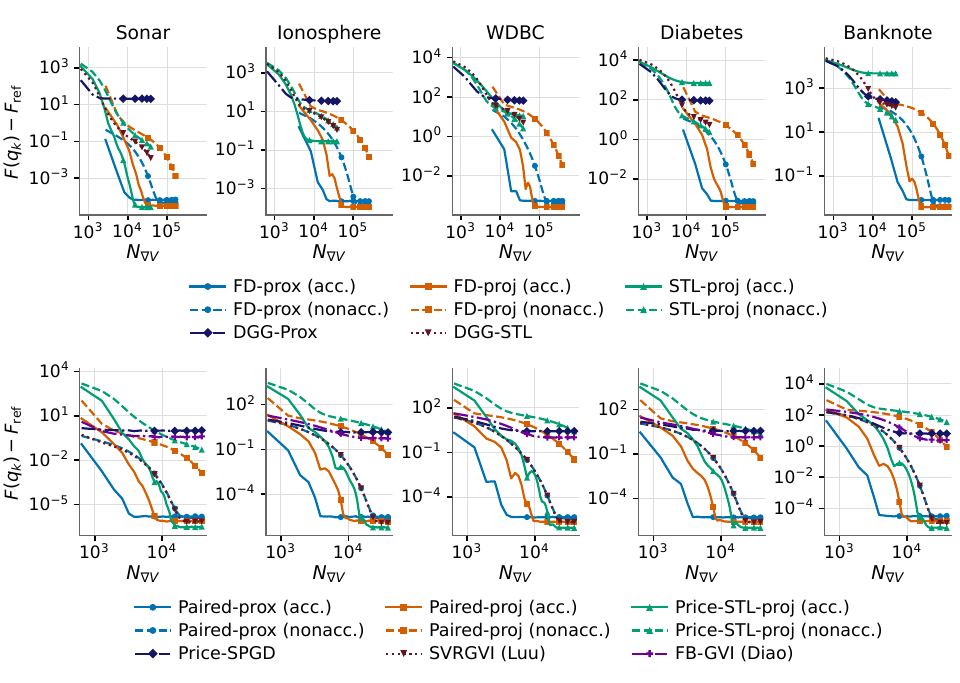}
\caption{\textbf{Graph-probit convergence by gradient-query cost at $b=64$.} Rows show first- and second-order methods on all five datasets. Means and bands follow Figure~\ref{fig:exp-openml}; the logarithmic horizontal axis uses cumulative gradient counts at $k=10,20,\ldots,600$, including calibration and setup queries. First-order methods use no HVPs. For second-order methods, $N_{\mathrm{HVP}}=d[N_{\nabla V}-N_{\nabla V}(0)]$, where the setup count $N_{\nabla V}(0)$ is $1$ for Paired-prox and $0$ otherwise.}
\label{fig:exp-probit-query}
\end{figure}

Increasing the batch size substantially improves accelerated STL-proj (Table~\ref{tab:exp-probit-stl-batch}). At $b=256$, it reaches the $1\%$ threshold in $30$ updates on the first four datasets and $40$ on Banknote, versus $50$ and $60$ for the nonaccelerated method, and has a smaller final gap on every dataset. Figure~\ref{fig:exp-probit-batch256} shows all seventeen methods at this batch size. Accelerated Price-STL-proj already achieves final mean gaps below $6\times10^{-6}$ on all datasets at $b=64$.

\begin{table}[H]
\centering\small
\setlength{\tabcolsep}{4pt}
\begin{tabular}{lrrrrrr}
\toprule
& \multicolumn{2}{c}{$b=8$} & \multicolumn{2}{c}{$b=64$} & \multicolumn{2}{c}{$b=256$}\\
\cmidrule(lr){2-3}\cmidrule(lr){4-5}\cmidrule(lr){6-7}
Dataset & Acc. & Nonacc. & Acc. & Nonacc. & Acc. & Nonacc.\\
\midrule
Sonar & $1.01\times10^{3}$ & $5.66\times10^{-2}$ & $2.70\times10^{-5}$ & $5.24\times10^{-2}$ & $2.84\times10^{-6}$ & $5.24\times10^{-2}$\\
Ionosphere & $3.00\times10^{3}$ & $9.27\times10^{1}$ & $2.84\times10^{-1}$ & $1.46\times10^{0}$ & $1.09\times10^{-5}$ & $1.46\times10^{0}$\\
WDBC & $9.61\times10^{3}$ & $1.28\times10^{3}$ & $1.09\times10^{1}$ & $2.56\times10^{0}$ & $3.34\times10^{-5}$ & $2.55\times10^{0}$\\
Diabetes & $1.90\times10^{4}$ & $9.95\times10^{3}$ & $7.03\times10^{2}$ & $2.33\times10^{0}$ & $7.35\times10^{-5}$ & $2.32\times10^{0}$\\
Banknote & $4.93\times10^{4}$ & $3.71\times10^{4}$ & $4.78\times10^{3}$ & $3.61\times10^{1}$ & $1.09\times10^{0}$ & $3.57\times10^{1}$\\
\bottomrule
\end{tabular}
\caption{\textbf{STL batch sensitivity.} Five-seed mean gaps $F(q_{600})-F_{\mathrm{ref}}$, with identical stepsizes and initialization across batches.}
\label{tab:exp-probit-stl-batch}
\end{table}

\begin{figure}[hbtp]
\centering
\includegraphics[width=\linewidth]{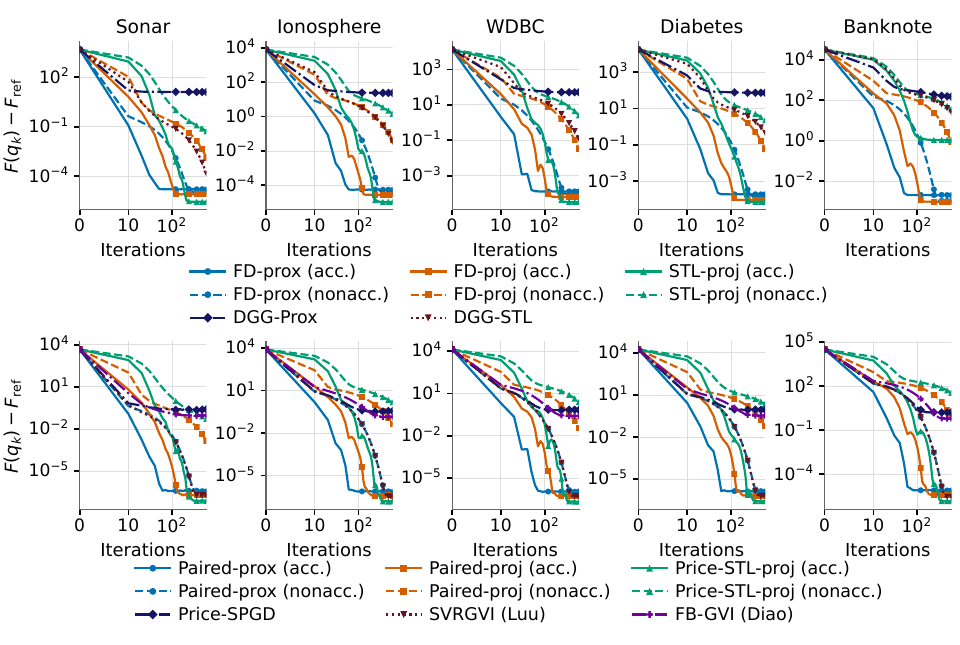}
\caption{\textbf{Graph-probit comparisons at $b=256$.} Rows show first- and second-order methods on all five datasets. Plotting conventions follow Appendix~\ref{exp:evaluation}; other settings match Figure~\ref{fig:exp-openml}.}
\label{fig:exp-probit-batch256}
\end{figure}

\clearpage
\subsection{Logistic-regression comparisons}
\label{exp:openml-results}

To complement the graph-probit results, we also report experiments with Bayesian logistic-regression posteriors.

\paragraph{Setup and evaluation.}
We use the same five datasets and preprocessing as Appendix~\ref{exp:targets}. With $X=[X_0,\mathbf 1_n]$ and $d=p+1$, a standard Gaussian prior on all coefficients, including the intercept, gives
\[
V_{\rm logit}(w)=\frac12\|w\|^2+
\sum_{i=1}^n\bigl[\log(1+e^{x_i^\top w})-y_i x_i^\top w\bigr],
\qquad \alpha=1,\quad \beta=\kappa=1+\tfrac14\|X\|_{\rm op}^2.
\]
The dimensions are $61,34,31,9,5$ and the condition-number bounds are $635.81,774.27,1890.31,403.12,748.71$ for Sonar, Ionosphere, WDBC, Diabetes, and Banknote, respectively. 
We compare all seventeen methods at $b=1,8,64$, using the steps in Appendix~\ref{exp:methods} and the initialization, seeds, update budget, and plotting conventions in Appendix~\ref{exp:evaluation}.
Logistic expectations use Gauss--Hermite quadrature when the marginal standard deviation is below $0.75$. Otherwise, we integrate the positive part analytically and the remaining softplus correction by Gauss--Legendre quadrature on $[0,40]$. 
We compute $F_{\rm ref}$ by L-BFGS-B over the mean, log-diagonal, and strict lower-triangular entries of $L$, requiring a Cholesky-coordinate gradient norm at most $10^{-5}$ and the objective/gradient quadrature-consistency tolerances in Appendix~\ref{exp:evaluation}.

\paragraph{Results.}

Across five datasets and $b=1,8,64$, acceleration reaches the $1\%$ threshold earlier in $65$ of $90$ matched comparisons (Table~\ref{tab:exp-logistic-hitting}). 
Figures~\ref{fig:exp-logistic-first-all} and~\ref{fig:exp-logistic-second-all} show every method, including the five baselines.

\begin{table}[hbtp]
\centering\small
\begin{tabular}{lrrrr}
\toprule
Matched pair & Acc. earlier & Nonacc. earlier & Same checkpoint & Neither reached\\
\midrule
FD-prox & 7 & 7 & 1 & 0\\
FD-proj & 10 & 1 & 2 & 2\\
STL-proj & 12 & 1 & 0 & 2\\
Paired-prox & 9 & 3 & 3 & 0\\
Paired-proj & 12 & 0 & 3 & 0\\
Price-STL-proj & 15 & 0 & 0 & 0\\
\bottomrule
\end{tabular}
\caption{\textbf{Logistic-regression hitting-time comparisons.} Each row covers five datasets and three batches, using a hitting threshold of $1\%$ of the initial mean gap.}
\label{tab:exp-logistic-hitting}
\end{table}

\clearpage
\begin{figure}[hbtp]
\centering
\includegraphics[width=\linewidth]{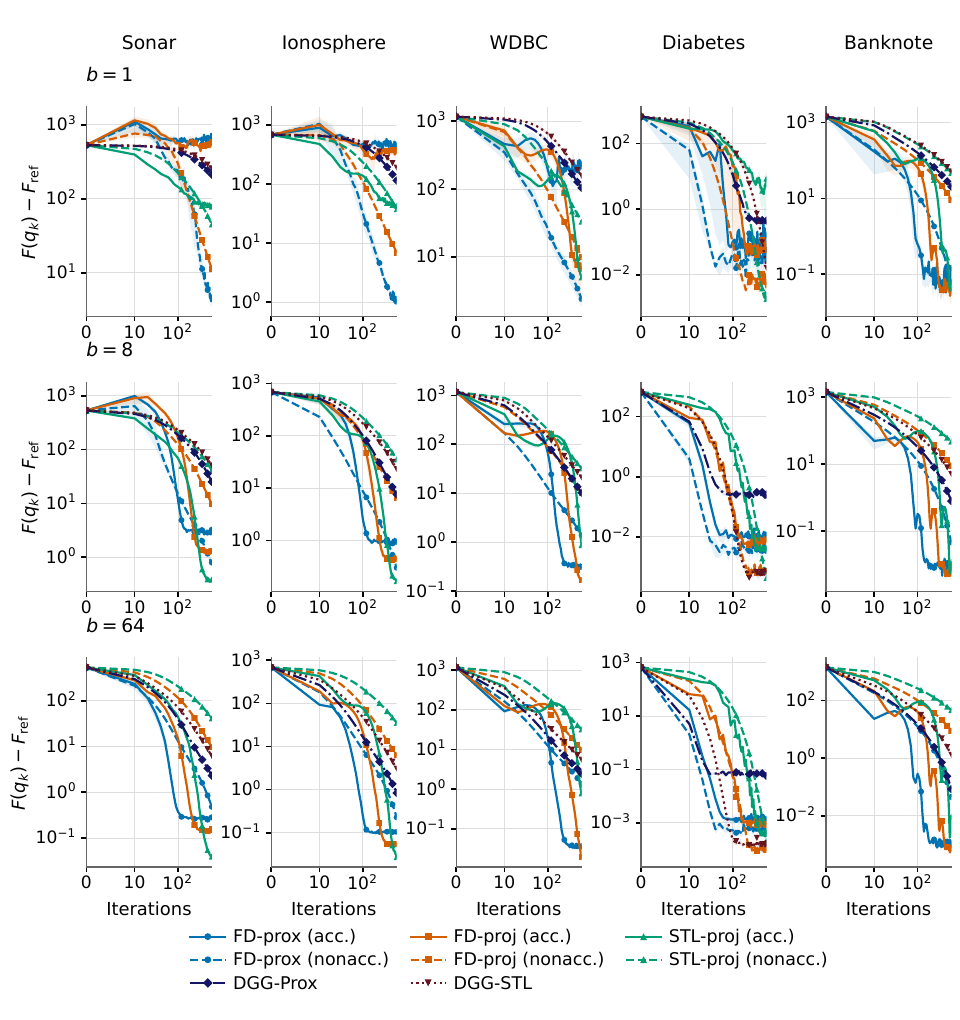}
\caption{\textbf{First-order methods on logistic-regression posteriors.} Rows show $b=1,8,64$; columns show the five datasets. Plotting conventions follow Appendix~\ref{exp:evaluation}.}
\label{fig:exp-logistic-first-all}
\end{figure}
\clearpage
\begin{figure}[H]
\centering
\includegraphics[width=\linewidth]{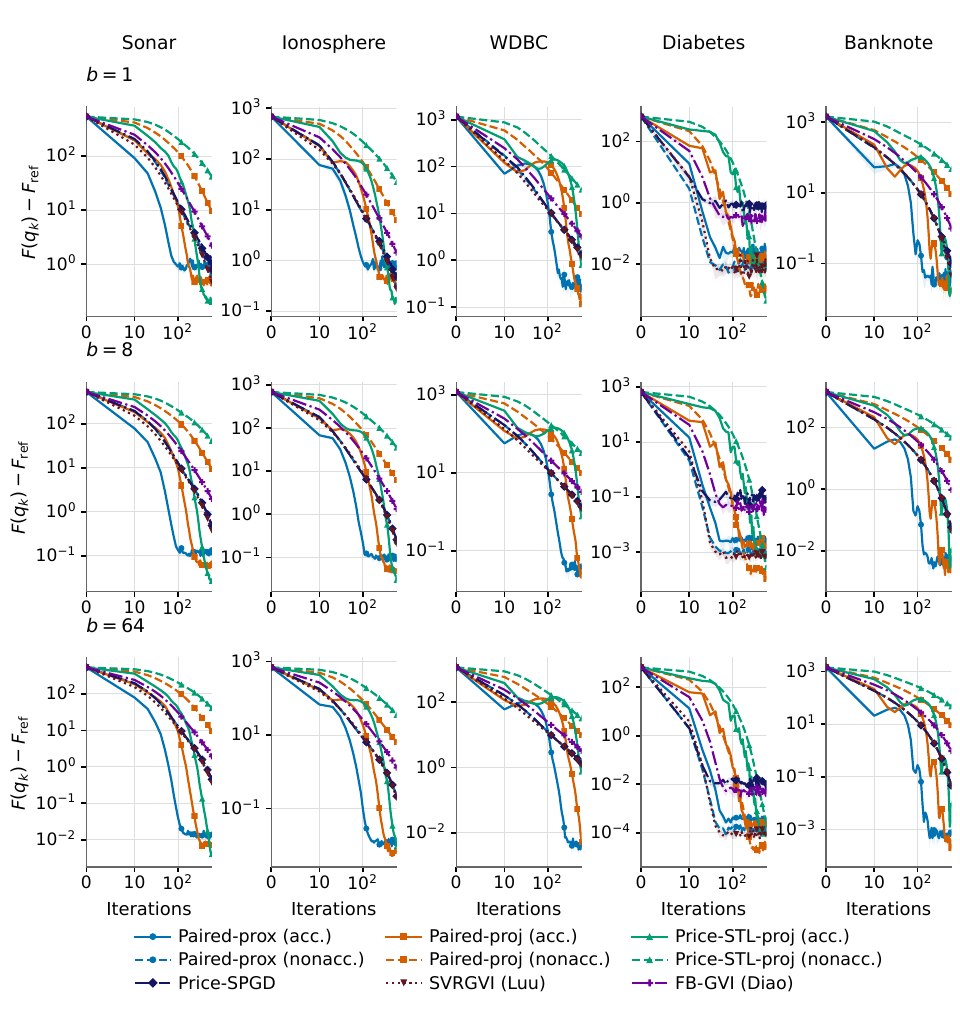}
\caption{\textbf{Second-order methods on logistic-regression posteriors.} Rows show $b=1,8,64$; columns show the five datasets. Plotting conventions follow Appendix~\ref{exp:evaluation}.}
\label{fig:exp-logistic-second-all}
\end{figure}

\bibliography{ref}
\bibliographystyle{unsrtnat}

\end{document}